%File: anonymous-submission-latex-2025.tex
\documentclass[letterpaper]{article} % DO NOT CHANGE THIS
\usepackage{aaai25}  % DO NOT CHANGE THIS
\usepackage{times}  % DO NOT CHANGE THIS
\usepackage{helvet}  % DO NOT CHANGE THIS
\usepackage{courier}  % DO NOT CHANGE THIS
\usepackage[hyphens]{url}  % DO NOT CHANGE THIS
\usepackage{graphicx} % DO NOT CHANGE THIS
\urlstyle{rm} % DO NOT CHANGE THIS
  % DO NOT CHANGE THIS
\usepackage{natbib}  % DO NOT CHANGE THIS AND DO NOT ADD ANY OPTIONS TO IT
\usepackage{caption} % DO NOT CHANGE THIS AND DO NOT ADD ANY OPTIONS TO IT
\frenchspacing  % DO NOT CHANGE THIS
\setlength{\pdfpagewidth}{8.5in} % DO NOT CHANGE THIS
\setlength{\pdfpageheight}{11in} % DO NOT CHANGE THIS
%
% These are recommended to typeset algorithms but not required. See the subsubsection on algorithms. Remove them if you don't have algorithms in your paper.
\usepackage{algorithm}
\usepackage{algpseudocode}
\usepackage{amsmath,amsfonts}
\usepackage{amsthm}
\usepackage{amssymb}
\usepackage{xcolor}
\usepackage{subfigure}
\usepackage{multirow} 
\usepackage{booktabs} 
\usepackage[utf8]{inputenc}
\usepackage{enumerate} 
% \usepackage[hidelinks]{hyperref}
%\usepackage{bbm}

%
% These are are recommended to typeset listings but not required. See the subsubsection on listing. Remove this block if you don't have listings in your paper.
\usepackage{newfloat}
\usepackage{listings}
\DeclareCaptionStyle{ruled}{labelfont=normalfont,labelsep=colon,strut=off} % DO NOT CHANGE THIS
\lstset{%
	basicstyle={\footnotesize\ttfamily},% footnotesize acceptable for monospace
	numbers=left,numberstyle=\footnotesize,xleftmargin=2em,% show line numbers, remove this entire line if you don't want the numbers.
	aboveskip=0pt,belowskip=0pt,%
	showstringspaces=false,tabsize=2,breaklines=true}
\floatstyle{ruled}
\newfloat{listing}{tb}{lst}{}
\floatname{listing}{Listing}
%
% Keep the \pdfinfo as shown here. There's no need
% for you to add the /Title and /Author tags.
\pdfinfo{
/TemplateVersion (2025.1)
}

\setcounter{secnumdepth}{0} %May be changed to 1 or 2 if section numbers are desired.

% The file aaai25.sty is the style file for AAAI Press
% proceedings, working notes, and technical reports.
%

% Title

% Your title must be in mixed case, not sentence case.
% That means all verbs (including short verbs like be, is, using,and go),
% nouns, adverbs, adjectives should be capitalized, including both words in hyphenated terms, while
% articles, conjunctions, and prepositions are lower case unless they
% directly follow a colon or long dash

\newtheorem{theorem}{\textbf {Theorem}}
\newtheorem{definition}{\textbf {Definition}}
\newtheorem{lemma}{\textbf {Lemma}}

\newtheorem{corollary}{\textbf {Corollary}}
\newtheorem{assumption}{\textbf {Assumption}}

%\title{ Low Rank Tensor Recovery via Slice-Wise  Sensing    }
\title{Non-Convex Tensor Recovery from Local Measurements}
\author{
    %Authors
    % All authors must be in the same font size and format.
    % Tongle Wu\textsuperscript{\rm 1}, Ying Sun\textsuperscript{\rm 1*}, Jicong Fan\textsuperscript{\rm 2*} \\
     Tongle Wu\textsuperscript{\rm 1}, Ying Sun\textsuperscript{\rm 1}\equalcontrib,  Jicong Fan\textsuperscript{\rm 2}\equalcontrib \\
}

\affiliations{
    %Afiliations
    \textsuperscript{\rm 1}School of Electrical Engineering and Computer Science, Pennsylvania State University\\
    \textsuperscript{\rm 2}School of Data Science, The Chinese University of Hong Kong, Shenzhen\\
    \textsuperscript{\rm 1}\{tfw5381, ybs5190\}@psu.edu, \textsuperscript{\rm 2}fanjicong@cuhk.edu.cn
%
% See more examples next
}

%Example, Single Author, ->> remove \iffalse,\fi and place them surrounding AAAI title to use it
\iffalse
\title{My Publication Title --- Single Author}
\author {
    Author Name
}
\affiliations{
    Affiliation\\
    Affiliation Line 2\\
    name@example.com
}
\fi

\iffalse
%Example, Multiple Authors, ->> remove \iffalse,\fi and place them surrounding AAAI title to use it
\title{My Publication Title --- Multiple Authors}
\author {
    % Authors
    First Author Name\textsuperscript{\rm 1},
    Second Author Name\textsuperscript{\rm 2},
    Third Author Name\textsuperscript{\rm 1}
}
\affiliations {
    % Affiliations
    \textsuperscript{\rm 1}Affiliation 1\\
    \textsuperscript{\rm 2}Affiliation 2\\
    firstAuthor@affiliation1.com, secondAuthor@affilation2.com, thirdAuthor@affiliation1.com
}
\fi

% REMOVE THIS: bibentry
% This is only needed to show inline citations in the guidelines document. You should not need it and can safely delete it.
\usepackage{bibentry}
% END REMOVE bibentry

\begin{document}

\maketitle

\begin{abstract}

Motivated by the settings where sensing the entire tensor is infeasible, this paper proposes a novel tensor compressed sensing model, where measurements are only obtained from sensing each lateral slice via mutually independent matrices. %More concretely, for a tensor ${\boldsymbol {\mathcal X}}^\star \in \mathbb{R}^{n_1 \times n_2 \times n_3}$ with tubal rank $r\ll \min\left\{n_1,n_2,n_3\right\}$, we aim to recover it from $m$ independent linear operators of each of its $n_2$ lateral slices, i.e., from $\boldsymbol y_{ji}:= \langle \boldsymbol{\mathcal A}_i(:,j,:), \boldsymbol{\mathcal X}^\star(:,i,:) \rangle, (i,j) \in [n_2] \times [m]$, where $\boldsymbol{\mathcal X}^\star(:,i,:) \in \mathbb{R}^{n_1 \times n_3}$ denotes its $i$-th lateral slice and $ \boldsymbol{\mathcal A}_i(:,j,:)$ denotes $j$-th sensing matrix for $i$-th slice. 
Leveraging the low tubal rank structure, we reparameterize the unknown tensor ${\boldsymbol {\mathcal X}}^\star$ using two compact tensor factors and formulate the recovery problem as a nonconvex minimization problem. To solve the problem, we first propose an alternating minimization algorithm, termed \textsf{Alt-PGD-Min}, that iteratively optimizes the two factors using a projected gradient descent and an exact minimization step, respectively. Despite nonconvexity, we prove that \textsf{Alt-PGD-Min} achieves $\epsilon$-accuracy recovery with $\mathcal O\left( \kappa^2 \log \frac{1}{\epsilon}\right)$ iteration complexity and $\mathcal O\left( \kappa^6rn_3\log n_3 \left( \kappa^2r\left(n_1 + n_2 \right) + n_1 \log \frac{1}{\epsilon}\right) \right)$  sample complexity, where $\kappa$ denotes tensor condition number of $\boldsymbol{\mathcal X}^\star$. 
To further accelerate the convergence, especially when the tensor is ill-conditioned with large $\kappa$,  we prove  \textsf{Alt-ScalePGD-Min}  that preconditions the gradient update using an approximate Hessian that can be computed efficiently.  We show that \textsf{Alt-ScalePGD-Min} achieves $\kappa$ independent iteration complexity $\mathcal O(\log \frac{1}{\epsilon})$ and improves the sample complexity to $\mathcal O\left( \kappa^4 rn_3 \log n_3 \left( \kappa^4r(n_1+n_2) + n_1 \log \frac{1}{\epsilon}\right) \right)$. Experiments validate the effectiveness of the proposed methods.

\end{abstract} 

% Uncomment the following to link to your code, datasets, an extended version or similar.
%
% \begin{links}
%     \link{Code}{https://aaai.org/example/code}
%     \link{Datasets}{https://aaai.org/example/datasets}
%     \link{Extended version}{https://aaai.org/example/extended-version}
% \end{links}

\section{Introduction}
Motivated by the well-known compressed sensing and matrix sensing \cite{candes2006robust,recht2010guaranteed} problems, tensor compressed sensing (TCS) has attracted increasing attention in recent years \cite{shi2013guarantees,rauhut2017low,tong2022scaling,chen2019non}. The goal of TCS is to recover a tensor $\boldsymbol{\mathcal X}^\star \in \mathbb{R}^{n_1 \times n_2 \times n_3}$ from a few measurements $\boldsymbol y \in \mathbb{R}^m (m\ll n_1n_2n_3)$, where $\boldsymbol y= {\mathcal A}\left( \boldsymbol{\mathcal X}^\star \right)$ and $\mathcal A: \mathbb{R}^{n_1\times n_2 \times n_3} \rightarrow \mathbb{R}^m$ is a linear operator. Since this problem is ill-posed for arbitrary $\boldsymbol{\mathcal X}^\star$, the success of TCS relies on the existence of the low dimensional intrinsic structure in the original high-order tensor $\boldsymbol{\mathcal X}^\star$. Such a structure has been widely validated and utilized in many real-world applications such as dynamic MRI \cite{yu2014multidimensional,gong2024non}, video compression \cite{li2022tensor,ma2019deep,liu2023real}, snapshot compressive imaging \cite{ma2019deep,liu2023real}, quantum computing \cite{ran2020tensor,kuzmin2024learning}, image recovery \cite{fan2023euclideannorminduced}, and collaborative filtering \cite{fan2022multimode}.

In the literature, different tensor decompositions can induce different definitions of tensor rank \cite{kolda2009tensor}, which are often more complex than matrix rank. Consequently, the non-uniqueness and complexity of the tensor rank make TCS a non-trivial extension of matrix sensing. Most works of TCS assume that the ground truth tensor $\boldsymbol{\mathcal X}^\star$ has a low Tucker rank \cite{han2022optimal,luo2023low,ahmed2020tensor,mu2014square} or a low tubal rank \cite{zhang2020rip,hou2021robust,lu2018exact,liu2023tensor}, which are induced by Tucker decomposition \cite{tucker1966some} and tensor Singular Value Decomposition (t-SVD) \cite{kilmer2011factorization} respectively. In these works, the measurements are given by 
\begin{align}
{y}_i = \langle \boldsymbol{\mathcal A}_i, \boldsymbol{\mathcal X}^\star \rangle, ~~~i \in [m],\label{TCS_1}
\end{align}
where $\boldsymbol{\mathcal A}_i \in \mathbb{R}^{n_1 \times n_2 \times n_3}$ has i.i.d. zero-mean Gaussian entries and can sense entire $\boldsymbol{\mathcal X}^\star$.

% However, \eqref{TCS_1} TCS model will be unable to deal with the setting where each random $\boldsymbol{\mathcal A}_i$ cannot sense the entire complete $\boldsymbol{\mathcal X}^\star$. This setting is common in various applications. One example is considering the limitation of memory or privacy, the whole large-scale tensor data has to be partitioned into multiple smaller tensors that are stored in the distributed network. Another example is that the tensor data is collected in the on-the-fly streaming setting. 

It should be pointed out that in many scenarios, it is difficult to sense the entire tensor $\boldsymbol{\mathcal X}^\star$, preventing the application of the sensing model \eqref{TCS_1}. %; instead, one needs to sense only discrete slices or continuous subregions of $\boldsymbol{\mathcal X}^\star$. 
For instance, due to memory or privacy limitations, large-scale tensor data may be partitioned into multiple smaller tensors stored in a distributed network \cite{moothedath2024decentralized,singh2024byzantine,wu2024implicit}. Another example is when the tensor $\boldsymbol{\mathcal X}^\star$, such as images or videos, is collected in an on-the-fly streaming setting \cite{srinivasa2019decentralized}.% In these applications, using the sensing model \eqref{TCS_1} is infeasible and the related study is scarce.

To address the challenge, we propose a novel tensor compressed sensing model where each measurement is generated from locally sensing a slice of $\boldsymbol{\mathcal X}^\star$. Our model is detailed as follows.
\begin{definition}
\textbf{(Local TCS)} For each lateral slice $i\in [n_2]$, its $j$-th local measurement $y_{ji}$ is obtained by 
\begin{align}
y_{ji} = \langle  \boldsymbol{\mathcal A}_i(:,j,:), \boldsymbol{\mathcal X}^\star(:,i,:)  \rangle, ~~i \in [n_2],\; j \in [m], \label{TCS_2}
\end{align}
where the $\boldsymbol{\mathcal A}_i(:,j,:) \in \mathbb R^{n_1 \times n_3}$ denotes the $j$-th sensing matrix for $i$-th lateral slice of $\boldsymbol{\mathcal X}^\star$.
\end{definition}
% \sun{move the Gaussian assumption to the algorithm section, it has nothing to do with the sensing model}
% \fnote{Need to explain why consider tubal rank? why not CP or tucker rank? For example, low-tubal rank tensors widely exists in XXX. May use footnote to explain it.} 
% \fnote{Better to use a Definition to show the problem: begin{def} xx end {def}}
% \fnote{This definition may imply that the cardinality of $\{y_{ji}\}$ is larger than $m$? \textcolor{blue}{cardibality of $\left\{y_{ji} \right\}$ is $m\times n_2$ }}
Take the dynamic video sensing mentioned before as an example. Under the local TCS model in \eqref{TCS_2}, the entire video is modeled as $\boldsymbol{\mathcal X}^\star$, with each lateral slice $\boldsymbol{\mathcal X}^\star(:,i,:)$ representing the video frame at $i$-th timestamp \cite{li2017low,zhou2017tensor,wang2020robust}. The observations $\left\{y_{ji}\right\}_{j=1}^m$ is obtained by measuring the $i$-th frame of video $\boldsymbol{\mathcal X}^\star$. 
We aim to  recover  $\boldsymbol{\mathcal X}^\star \in \mathbb R^{n_1\times n_2 \times n_3}$ from the measurements  $\left\{y_{ji}\right\}_{i,j=1}^{i=n_2,j=m}$ obtained via~\eqref{TCS_2}. Under this framework,  fundamental questions to understand are: 
\begin{center}
  \emph{Under what conditions can we provably recover $\boldsymbol{\mathcal X}^\star$,\\
  and how to compute the solution  efficiently?} 
\end{center}

This paper considers the problem above under the structural assumption that the ground truth $\boldsymbol{\mathcal X}^\star$ 
is of low tubal rank with $r \ll \min \left\{n_1,n_2,n_3\right\}$ (see Definition~\ref{def:t-rank}). We focus on low tubal rank for two key reasons. First, it can be computed more efficiently by solving multiple SVDs in the Fourier domain compared to CP and Tucker ranks \cite{zhang2014novel}. Second, the convolution operator in this model is particularly effective at capturing the ``spatial-shifting'' properties of data \cite{liu2019low,wu2022low,wu2024smooth}. Our main contributions are summarized as follows.

\begin{itemize}
    \item We introduce a novel local TCS model \eqref{TCS_2} for tensor compressed sensing with measurements obtained by lateral slice-wise sensing. Compared to the traditional TCS model \eqref{TCS_1},  local sensing does not rely on the availability of the entire tensor,  which greatly enlarges its applicable scenarios, such as real-time and distributed processing.

    \item We formulate the recovery problem as a nonconvex minimization problem based on the low tubal rank tensor factorization for $\boldsymbol{\mathcal X}^\star$. An alternating minimization algorithm, called \textsf{ Alt-PGD-Min}, is proposed to solve the problem with efficient computations per iteration. We show that under suitable conditions on the sensing operator,
    with $\mathcal O\left( \kappa^6rn_3\log n_3 \left( \kappa^2r\left(n_1 + n_2 \right) + n_1 \log \frac{1}{\epsilon}\right) \right)$ samples \textsf{ Alt-PGD-Min} computes a solution that is $\epsilon$-close to $\boldsymbol{\mathcal X}^\star$ in $\mathcal O\left( \kappa^2 \log \frac{1}{\epsilon}\right)$ iterations, where $\kappa$ is the tensor condition number of $\boldsymbol{\mathcal X}^\star$.     
    % To answer the first question mentioned above, we factorize the potential tensor into two compact tensor factors by utilizing the low tubal rank of $\boldsymbol{\mathcal X}^\star$ and propose an Alt-PGD-Min method to alternatively update two tensor factors by projected gradient descent step and exact minimization. We also provide theoretical guarantees on computational complexity and statistical complexity of the Alt-PGD-Min method for recovering $\boldsymbol{\mathcal X}^\star$. 
    \item To improve the dependency of both the sample and iteration complexity on $\kappa$, we further proposed  \textsf{Alt-ScalePGD-Min} that preconditions the gradient step in \textsf{ Alt-PGD-Min} using an approximation of the Hessian matrix that is cheap to compute. We show that by incorporating the preconditioner, \textsf{Alt-ScalePGD-Min}  iteration complexity $\mathcal O(\log \frac{1}{\epsilon})$ that is independent of $\kappa$, 
    and improves the sample complexity to $\mathcal O\left( \kappa^4 rn_3 \log n_3 \left( \kappa^4r(n_1+n_2) + n_1 \log \frac{1}{\epsilon}\right) \right)$.
    \item We validated the proposed local sensing model and algorithms on both synthetic and real-world data. Numerical results show that the proposed algorithms can achieve effective performance in the local TCS model \eqref{TCS_2}.
    
    % To answer the second question, we further propose an efficient Alt-ScalePGD-Min method that augments the adaptive and iteration-varying preconditioner for the vanilla gradient descent step in Alt-PGD-Min. We prove that Alt-ScaleGD can achieve the faster linear convergence rate that is independent with tensor condition number of $\boldsymbol{\mathcal X}^\star$. Thus, Alt-ScalePGD-Min provably improves both the statistical and computational complexities of the Alt-PGD-Min method.
\end{itemize}

\section{Related Work}
\paragraph{Tensor Compressed Sensing (TCS).} Canonical TCS problems \eqref{TCS_1} based on Tucker and t-SVD decompositions have been extensively investigated in recent years. Studies in \cite{shi2013guarantees,rauhut2017low,ahmed2020tensor,mu2014square,chen2019non,han2022optimal,luo2023low} utilized convex or non-convex optimization methods to solve low Tucker rank based tensor CS. For TCS with low tubal rank under the t-SVD framework, \citet{lu2018exact} proposed a convex method that minimizes the tensor nuclear norm (TNN) with order optimal sample complexity. \citet{zhang2020rip} proposed a regularized TNN minimization method with provable robust recovery performance from noisy observations based on the defined tensor Restricted Isometry Property (RIP). \citet{hou2021robust} proposed convex methods to solve one-bit TCS from binary observations and provided robust recovery guarantees. \citet{liu2024low} developed theoretical guarantees for the non-convex gradient descent method, which deals with exact low tubal rank and overparameterized tensor factorizations for solving the model \eqref{TCS_1}. \citet{liu2023tensor} fused low-rankness and local-smoothness of real-world tensor data in TCS and obtained the provable enhanced recovery guarantee. However, all existing TCS studies focus on models where random measurement tensors have access to the entire ground truth tensors, rather than recovering the low-rank tensor through local measurements as proposed in our TCS model in \eqref{TCS_2}.
\paragraph{CS from Local Measurements.} Compared to the canonical CS model, the investigation of CS that recovers from local measurements as model \eqref{TCS_2} is less explored. \citet{nayer2022fast,vaswani2024efficient,srinivasa2019decentralized,srinivasa2023sketching,lee2023approximately} considered matrix sensing model that involves recovering a low-rank matrix from independent compressed measurements of each of its columns. \citet{srinivasa2019decentralized,srinivasa2023sketching,lee2023approximately} proposed convex programming methods that minimize relevant mixed norms with provable guarantees, while \citet{nayer2022fast,vaswani2024efficient} proposed efficient non-convex method and obtained improved iteration and sample complexities.

Nevertheless, to our knowledge, all existing studies on CS from local measurements focused solely on two-dimensional matrices, rather than higher-order tensors that widely exist in science and engineering. Reshaping a tensor into a matrix format to apply matrix methods overlooks the interactions across all dimensions and destroys the inherent structures of data. Therefore, it is essential to study TCS from local measurements, as proposed in our model \eqref{TCS_2}, which has not been addressed in the literature, despite its significant applications in areas like video compression for online or distributed settings \cite{wu2024implicit,srinivasa2019decentralized}. Our experimental results such as Figures \ref{fig1} and \ref{fig4} will demonstrate the superiority of tensor CS over matrix CS.

\section{Preliminaries}
\subsection{Notations}
We use letters $x, \boldsymbol x, \boldsymbol X,  {\boldsymbol {\mathcal X}}$ to denote scalars, vectors, matrices, and tensors, respectively. Let $\left\{a_n,b_n\right\}_{n\geq 1}$ be any two positive series. We write $a_n \gtrsim b_n$ (or $a_n \lesssim b_n$) if there exists a universal constant $c>0$ such that $a_n \geq c b_n$ (or $a_n \leq c b_n$). The notations of $a_n= \Omega (b_n) $ and $a_n = \mathcal O \left( b_n\right)$ share the same meaning with $a_n \gtrsim b_n$ and $a_n \lesssim b_n$.

% The  column, row, and tube fibers of a third-order tensor $\boldsymbol{\mathcal X} \in \mathbb R^{n_1\times n_2 \times n_3}$ are denoted by ${\boldsymbol {\mathcal X}}_{:ij}$, ${\boldsymbol {\mathcal X}}_{i:j}$, and  ${\boldsymbol {\mathcal X}}_{ij:}$, respectively. 
The $i$-th horizontal, lateral, and frontal slice matrix of $\boldsymbol{\mathcal{X}}$ are denoted as ${\boldsymbol {\mathcal X}}(i,:,:)$, ${\boldsymbol {\mathcal X}}(:,i,:)$, and ${\boldsymbol {\mathcal X}}(:,:,i)$ respectively. The  $(i,j,k)$-th element is denoted as ${\boldsymbol {\mathcal X}}_{ijk}$. 
% \sun{Do not mix your notation, make them consistent and change whichever is used less often} 
For simplicity, we also use $\boldsymbol X^{(i)}$  to denote the $i$-th frontal slice. $\boldsymbol{\mathcal X}(i) \in \mathbb R^{n_1 \times 1 \times n_3}$  denotes the tensor that only composed of the $i$-th lateral slice of $\boldsymbol{\mathcal X}$. The inner product of tensors is denoted as $\langle \boldsymbol {\mathcal X}, \boldsymbol {\mathcal Y}\rangle = \sum_{ijk}{\boldsymbol{\mathcal X}_{ijk} \boldsymbol{\mathcal Y}_{ijk}}$. The Frobenius norm of tensor is denoted as $\| \boldsymbol{\mathcal X}\|_F = \sqrt{\sum_{ijk}{\boldsymbol {\mathcal X}_{ijk}^2}}$. We use fft($\boldsymbol {\mathcal X},[\;],3)=\overline{\boldsymbol{\mathcal X}}\in \mathbb{C}^{n_1\times n_2\times n_3}$ to denote performing DFT on all the tubes of $\boldsymbol {\mathcal{X}} \in \mathbb{R}^{n_1\times n_2 \times n_3}$. The inverse FFT on $\overline{\boldsymbol{\mathcal X}}$ can turn it back to the original tensor, i.e., $\boldsymbol {\mathcal{X}}$=ifft$(\overline{\boldsymbol{\mathcal X}},[\;],3)$.

% \sun{is it with or w/o 3? \textcolor{blue}{3 denotes the conducting fft along 3-th dimension.}}

\subsection{Definitions, Tensor Factorization and  Tubal Rank}

Unfold and Fold operators for a tensor are defined as
\begin{align}
 & \text{Unfold}\left( \boldsymbol{\mathcal X} \right) := \left[ \boldsymbol X^{(1)};\cdots ; \boldsymbol X^{(n_3)} \right] \\
 & \text{Fold}\left(\text{Unfold}\left( \boldsymbol{\mathcal X} \right)\right) := \boldsymbol{\mathcal X}.
\end{align}
Denote the block circulant matrix of $\boldsymbol{\mathcal X}$  as
\begin{equation*}
\begin{aligned}\text{bcirc}(\boldsymbol{\mathcal X}):=\left[
\begin{matrix}
\boldsymbol X^{(1)} & \boldsymbol X^{(n_3)} & \cdots & \boldsymbol X^{(2)}\\
\boldsymbol X^{(2)} & \boldsymbol X^{(1)} & \cdots & \boldsymbol X^{(3)}\\
\vdots & \vdots & \ddots & \vdots \\
\boldsymbol X^{(n_3)} & \boldsymbol X^{(n_3-1)} & \cdots & \boldsymbol X^{(1)}
\end{matrix}\right].
\end{aligned}
\end{equation*}
The above $ \text{bcirc}(\boldsymbol{\mathcal X}) $ can be block diagonalized as
\begin{align}
\left( \boldsymbol F_{n_3} \otimes \boldsymbol I_{n_1}\right) \cdot  \text{bcirc}(\boldsymbol{\mathcal X})  \cdot \left( \boldsymbol F_{n_3} \otimes \boldsymbol I_{n_2}\right) = \overline{\boldsymbol X},
\end{align}
where $\boldsymbol F_{n}$ denotes the $n$-dimensional discrete Fourier transformation matrix, $\otimes$ denotes the Kronecker product and $\overline{\boldsymbol X}$ is defined as:
\begin{align}
\overline{\boldsymbol X}: = \text{bdiag} \left(\overline{\boldsymbol{\mathcal X}}\right) := \text{diag}\left(\overline{\boldsymbol X}^{(1)};\cdots;\overline{\boldsymbol X}^{(3)}\right).
\end{align}

With the above definitions, we introduce the following arithmetic operations for tensors.
\noindent \begin{definition}\label{t-t}\textbf{(Tensor-Tensor product (T-product ))} \cite{kilmer2011factorization} 
The tensor product between tensors $\boldsymbol{\mathcal X} \in \mathbb{R}^{n_1 \times n_2 \times n_3}$ and $\boldsymbol{\mathcal Y} \in \mathbb{R}^{n_2 \times n_4 \times n_3}$ is defined as: 
\begin{equation} 
\boldsymbol{\mathcal X}*\boldsymbol{\mathcal Y}=\text{Fold}(\text{bicrc}(\boldsymbol{\mathcal X} )\text{Unfold}(\boldsymbol{\mathcal Y}))\quad \in \mathbb{R}^{n_1\times n_4\times n_3}.
\end{equation}
\end{definition}

\begin{definition}\textbf{(Conjugate transpose)} \cite{lu2020tensor}  The conjugate transpose of $\boldsymbol{\mathcal X} \in \mathbb{R}^{n_1 \times n_2 \times n_3}$ is $\boldsymbol{\mathcal X}^c$ that
$
\boldsymbol{\mathcal X}^c(:,:,1) =\left( \boldsymbol X^{(1)}\right)^c, \boldsymbol{\mathcal X}^c(:,:,n_3+2-i) = \left(\boldsymbol X^{(i)}\right)^c
$ for $2\leq i \leq n_3$, where  $\boldsymbol X^c$ is conjugate transpose of $\boldsymbol X$.
\end{definition}

\begin{definition}\textbf{(Identity tensor)} \cite{kilmer2011factorization} If $\boldsymbol {\mathcal I}(:,:,1) = \boldsymbol I_n$ and $\boldsymbol {\mathcal I}(:,:,i) = \boldsymbol 0_n$ for $2\leq i\leq n_3$, then $\boldsymbol{\mathcal I} \in \mathbb R^{n\times n \times n_3}$ is defined as the identity tensor. 
\end{definition}

\begin{definition}\textbf{(Orthogonal tensor)} \cite{kilmer2011factorization} A tensor $\boldsymbol{\mathcal Q} \in \mathbb R^{n\times n \times n_3}$ is defined as the orthogonal tensor if $\boldsymbol{\mathcal Q}* \boldsymbol{\mathcal Q}^c = \boldsymbol{\mathcal Q}^c * \boldsymbol{\mathcal Q} = \boldsymbol{\mathcal I}$.
\end{definition}

\begin{definition} \textbf{(Tensor inverse)} \cite{kilmer2011factorization}
An $n\times n \times n_3$ tensor $\boldsymbol{\mathcal X}$ has an inverse $\boldsymbol{\mathcal Y}$ if
$
\boldsymbol{\mathcal X}* \boldsymbol{\mathcal Y} = \boldsymbol{\mathcal I} \; ~~\text{and} \;~~  \boldsymbol{\mathcal Y} *  \boldsymbol{\mathcal X} = \boldsymbol{\mathcal I}.
$
If $\boldsymbol{\mathcal X}$ is invertible, we use $\boldsymbol{\mathcal X}^{-1}$ to denote its inverse.
\end{definition}

% The following preliminaries are summarized from the pioneer paper \cite{kilmer2011factorization} on tensor singular value decomposition (t-SVD). 
%\sun{TB moved to the right place}

\begin{definition}\textbf{(F-diagonal tensor)} \cite{kilmer2011factorization} If all of frontal slices of $\boldsymbol{\mathcal X}$ are diagonal matrices, then $\boldsymbol{\mathcal X}$ is called an $f$-diagonal tensor.
\end{definition}

\begin{theorem}\label{thm1} \textbf{(t-SVD)} \cite{lu2020tensor}
Let $ \boldsymbol{\mathcal X} \in \mathbb R^{n_1\times n_2 \times n_3}$. Then it can be factorized as
\begin{align}
\boldsymbol{\mathcal X} = \boldsymbol{\mathcal U}*\boldsymbol{\mathcal S}* \boldsymbol{\mathcal V}^c,
\end{align}
where $\boldsymbol{\mathcal U} \in \mathbb R^{n_1\times n_1 \times n_3}, \boldsymbol{\mathcal V} \in \mathbb R^{n_2 \times n_2 \times n_3}$ are orthogonal tensors and $\boldsymbol{\mathcal S} \in \mathbb R^{n_1 \times n_2 \times n_3}$ is an $f$-diagonal tensor.
\end{theorem}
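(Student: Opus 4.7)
The plan is to reduce the claim to a family of standard matrix SVDs carried out slice-wise in the Fourier domain, and then transport the result back to the spatial domain via the inverse DFT. The key enabler is the block-diagonalization identity quoted earlier in the preliminaries, which conjugates $\text{bcirc}(\boldsymbol{\mathcal X})$ into the block-diagonal matrix $\overline{\boldsymbol X} = \text{bdiag}(\overline{\boldsymbol{\mathcal X}})$, together with the consequence that the t-product $\boldsymbol{\mathcal X}*\boldsymbol{\mathcal Y}$ becomes ordinary block-diagonal matrix multiplication $\overline{\boldsymbol X}\,\overline{\boldsymbol Y}$ in this representation (as follows directly from Definition~\ref{t-t} via Fold/Unfold). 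Hence any slice-wise matrix factorization of $\overline{\boldsymbol{\mathcal X}}$ lifts to a t-product factorization of $\boldsymbol{\mathcal X}$.

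First I would apply fft along the third mode to obtain $\overline{\boldsymbol{\mathcal X}}$, and for each frontal slice $\overline{\boldsymbol X}^{(i)} \in \mathbb{C}^{n_1\times n_2}$ compute a matrix SVD $\overline{\boldsymbol X}^{(i)} = \overline{\boldsymbol U}^{(i)} \overline{\boldsymbol S}^{(i)} (\overline{\boldsymbol V}^{(i)})^c$, with $\overline{\boldsymbol U}^{(i)}, \overline{\boldsymbol V}^{(i)}$ unitary and $\overline{\boldsymbol S}^{(i)}$ real nonnegative diagonal. Stacking these as frontal slices of tensors $\overline{\boldsymbol{\mathcal U}}, \overline{\boldsymbol{\mathcal S}}, \overline{\boldsymbol{\mathcal V}}$ and then applying the inverse DFT along the third mode yields candidate tensors $\boldsymbol{\mathcal U}, \boldsymbol{\mathcal S}, \boldsymbol{\mathcal V}$. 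The target equality $\boldsymbol{\mathcal X} = \boldsymbol{\mathcal U}*\boldsymbol{\mathcal S}*\boldsymbol{\mathcal V}^c$ reduces, slice by slice, to the matrix SVD identity in the Fourier domain; orthogonality of $\boldsymbol{\mathcal U}$ and $\boldsymbol{\mathcal V}$ translates to unitarity of each $\overline{\boldsymbol U}^{(i)}, \overline{\boldsymbol V}^{(i)}$; and the $f$-diagonality of $\boldsymbol{\mathcal S}$ follows from each $\overline{\boldsymbol S}^{(i)}$ being diagonal, since the inverse DFT operates tube-wise and thus preserves the slice-wise diagonal support pattern.

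The main obstacle is guaranteeing that the reconstructed $\boldsymbol{\mathcal U}, \boldsymbol{\mathcal S}, \boldsymbol{\mathcal V}$ are real rather than complex-valued. Because $\boldsymbol{\mathcal X}$ is real, its Fourier-domain slices obey the conjugate-symmetry relation $\overline{\boldsymbol X}^{(n_3+2-i)} = (\overline{\boldsymbol X}^{(i)})^{*}$ for $2 \leq i \leq n_3$, with $\overline{\boldsymbol X}^{(1)}$ real (and $\overline{\boldsymbol X}^{(n_3/2+1)}$ real when $n_3$ is even). To preserve this symmetry in the factors, I would compute independent SVDs only for $i=1$ and $i=2,\dots,\lceil(n_3+1)/2\rceil$, and define the remaining factor slices by conjugation, e.g.\ $\overline{\boldsymbol U}^{(n_3+2-i)} = (\overline{\boldsymbol U}^{(i)})^{*}$ and likewise for $\overline{\boldsymbol V}$ and $\overline{\boldsymbol S}$. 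This symmetric choice is still a valid SVD on each slice and forces the inverse DFT to return real-valued tensors; the self-conjugate indices are handled by taking a real matrix SVD there, which is possible because those slices are themselves real. Verifying the three claimed properties then amounts to rewriting each slice-wise equation back in the spatial domain using the bijection between $\boldsymbol{\mathcal X}$ and $\overline{\boldsymbol X}$.
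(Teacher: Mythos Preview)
The paper does not supply its own proof of this theorem: it is stated in the preliminaries as a known result with a citation to \cite{lu2020tensor} (and ultimately goes back to \cite{kilmer2011factorization}), and no argument appears anywhere in the body or appendix. Your proposal is correct and is precisely the standard constructive proof found in those references---perform matrix SVDs on the Fourier-domain frontal slices, enforce the conjugate-symmetry pairing so that the inverse DFT returns real tensors, and read off the t-product factorization from the block-diagonal structure. There is nothing to compare against in this paper.
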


Similar to matrices, the tensor QR factorization is defined as follows.

\begin{theorem} \label{thm2} \textbf{(T-QR)} \cite{kilmer2011factorization}
Let $\boldsymbol{\mathcal X} \in \mathbb R^{n_1 \times n_2 \times n_3}$. Then it can be factorized as 
\begin{align}
\boldsymbol{\mathcal X} = \boldsymbol{\mathcal Q} * \boldsymbol{\mathcal{R}}, \label{t-QR}
\end{align}
where $\boldsymbol{\mathcal Q} \in \mathbb R^{n_1 \times n_1 \times n_3}$ is orthogonal, and $\boldsymbol{\mathcal{R}}^{n_1 \times n_2 \times n_3}$ is an $f$-upper triangular tensor whose frontal slices are all upper triangular matrices.
\end{theorem}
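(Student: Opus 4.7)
The plan is to construct the factorization by exploiting the block-diagonalization property of the t-product under the discrete Fourier transform along the third mode, then lifting the standard matrix QR factorization slice-by-slice in the Fourier domain back to the spatial domain. Since the preliminaries establish that $(\boldsymbol F_{n_3}\otimes\boldsymbol I_{n_1})\cdot\text{bcirc}(\boldsymbol{\mathcal X})\cdot(\boldsymbol F_{n_3}\otimes\boldsymbol I_{n_2})^{-1} = \overline{\boldsymbol X}$ is block-diagonal with blocks $\overline{\boldsymbol X}^{(1)},\ldots,\overline{\boldsymbol X}^{(n_3)}$, and since $\text{bcirc}$ is multiplicative under the t-product, the identity $\boldsymbol{\mathcal X} = \boldsymbol{\mathcal Q}*\boldsymbol{\mathcal R}$ is equivalent to $n_3$ independent matrix equations $\overline{\boldsymbol X}^{(i)} = \overline{\boldsymbol Q}^{(i)}\overline{\boldsymbol R}^{(i)}$ in $\mathbb{C}^{n_1\times n_2}$, subject to each $\overline{\boldsymbol Q}^{(i)}$ being unitary and each $\overline{\boldsymbol R}^{(i)}$ upper triangular.

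For each $i\in [n_3]$, I would invoke the classical matrix QR decomposition on the complex matrix $\overline{\boldsymbol X}^{(i)}$ to obtain a unitary $\overline{\boldsymbol Q}^{(i)}$ and upper triangular $\overline{\boldsymbol R}^{(i)}$. With these in hand, the orthogonality of $\boldsymbol{\mathcal Q}$ follows by running the block-diagonalization in reverse: the Fourier-domain block-diagonal matrix assembled from the $\overline{\boldsymbol Q}^{(i)}$ is unitary, so undoing the DFT yields $\boldsymbol{\mathcal Q}*\boldsymbol{\mathcal Q}^c = \boldsymbol{\mathcal I}$.

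The main subtlety, and the step I expect to require the most care, is ensuring that $\boldsymbol{\mathcal Q}$ and $\boldsymbol{\mathcal R}$ have real entries after the inverse FFT. Because $\boldsymbol{\mathcal X}$ is real, its tube-wise FFT satisfies the conjugate-symmetry relation $\overline{\boldsymbol X}^{(n_3+2-i)} = \text{conj}\bigl(\overline{\boldsymbol X}^{(i)}\bigr)$ for $2\leq i\leq n_3$, while $\overline{\boldsymbol X}^{(1)}$ (and $\overline{\boldsymbol X}^{(n_3/2+1)}$ when $n_3$ is even) is already real. I would therefore perform the QR factorization only on the independent indices $i=1,\ldots,\lceil (n_3+1)/2\rceil$, using a real QR at the self-paired indices, and then define the remaining Fourier-domain factors by entry-wise conjugation: $\overline{\boldsymbol Q}^{(n_3+2-i)} := \text{conj}(\overline{\boldsymbol Q}^{(i)})$ and $\overline{\boldsymbol R}^{(n_3+2-i)} := \text{conj}(\overline{\boldsymbol R}^{(i)})$. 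Entry-wise conjugation preserves both unitarity and upper-triangularity, and produces the correct factorization of $\overline{\boldsymbol X}^{(n_3+2-i)}$, so the Fourier-domain factors inherit the conjugate symmetry of $\overline{\boldsymbol X}$; their inverse FFTs therefore lie in $\mathbb{R}^{n_1\times n_1\times n_3}$ and $\mathbb{R}^{n_1\times n_2\times n_3}$ respectively.

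Finally, the $f$-upper-triangular property of $\boldsymbol{\mathcal R}$ is immediate once the Fourier-domain factors are in place: each spatial frontal slice $\boldsymbol R^{(k)}$ is an IFFT-weighted linear combination of $\overline{\boldsymbol R}^{(1)},\ldots,\overline{\boldsymbol R}^{(n_3)}$, and the set of upper triangular matrices is a linear subspace, so every $\boldsymbol R^{(k)}$ is upper triangular. The hard part of the argument is therefore not the algebra but the conjugate-pairing bookkeeping in the Fourier domain: every other step reduces to invoking matrix QR blockwise and then reading off the tensor structure through the block-diagonalization identity provided in the preliminaries.
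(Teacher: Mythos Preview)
The paper does not supply its own proof of this theorem: it is stated as a preliminary result and attributed to \cite{kilmer2011factorization}, so there is no in-paper argument to compare against. Your proposed proof---block-diagonalize via the DFT along the third mode, apply matrix QR slice-by-slice in the Fourier domain, enforce the conjugate-symmetry pairing $\overline{\boldsymbol Q}^{(n_3+2-i)} = \text{conj}(\overline{\boldsymbol Q}^{(i)})$ to guarantee real spatial-domain factors, and read off $f$-upper-triangularity from closure of upper-triangular matrices under linear combinations---is correct and is precisely the standard argument used in the t-product literature.
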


\begin{definition}\textbf{(Tubal rank)}\label{def:t-rank}
\cite{lu2020tensor} For a tensor $\boldsymbol{\mathcal X} \in \mathbb R^{n_1\times n_2 \times n_3}$, its tubal rank is defined as the number of nonzero singular tubes of $\boldsymbol{\mathcal S}$, where $\boldsymbol{\mathcal S}$ is the $f$-diagonal tensor obtained from t-SVD of $\boldsymbol{\mathcal X}$. Specially, 
\begin{align}
\text{rank}_t\left( \boldsymbol{\mathcal X}\right) = \#\left\{i: \boldsymbol{\mathcal S}(i,i,:) \neq \boldsymbol 0\right\}.
\end{align}
\end{definition}

Lastly, we introduce the tensor spectral norm and condition number.
% \sun{cite here} 

\begin{definition}\textbf{(Tensor spectral norm )} \cite{lu2018exact} The spectral norm of $\boldsymbol{\mathcal X} \in \mathbb R^{n_1 \times n_2 \times n_3}$ is defined as
\begin{align}
\left\|  \boldsymbol{\mathcal X} \right\| = \sigma_{\max} \left( \text{bcirc} \left( \boldsymbol{\mathcal X}\right)\right) = \max_{i \in  n_3} \sigma_{\max} \left(\overline{\boldsymbol X}^{(i)}\right), 
\end{align}
where $\sigma_{\max}(\boldsymbol X)$ denotes maximum singular value of $\boldsymbol X$.
\end{definition}

\begin{definition}\label{def-9} \textbf{(Tensor condition number)} The condition number of $\boldsymbol{\mathcal X} \in \mathbb R^{n_1 \times n_2 \times n_3}$ is defined as the condition number of $\text{bcirc}(\boldsymbol{\mathcal X})$ as
\begin{align}
\kappa \left( \boldsymbol{\mathcal X} \right) = \kappa \left( \text{bcirc}(\boldsymbol{\mathcal X}) \right) = \frac{ \sigma_{\max}\left( \text{bcirc}\left( \boldsymbol{\mathcal X} \right) \right) }{ \sigma_{\min}\left( \text{bcirc}\left(\boldsymbol{\mathcal X}\right) \right) },
\end{align}
where the $\sigma_{\min}\left( \text{bcirc}\left(\boldsymbol{\mathcal X}\right) \right)$  denotes the smallest nonzero singular value of $\text{bcirc}(\boldsymbol{\mathcal X})$. 
\end{definition} 

If the condition number $\kappa\left(\boldsymbol{\mathcal X}\right)$ is close to 1, the tensor $\boldsymbol{\mathcal X}$ is said to be well-conditioned. Conversely, if the condition number $\kappa\left(\boldsymbol{\mathcal X}\right)$ is large, then $\boldsymbol{\mathcal X}$ is deemed ill-conditioned \cite{tong2022scalings}. From Figure \ref{fig1} and Figure \ref{fig2}, recovering the ill-conditioned low-rank tensor $\boldsymbol{\mathcal X}^\star$ in the TCS problems is more challenging.

\section{Algorithms and Theoretical Results}
Given the local TCS model~\eqref{TCS_2}, a natural formulation is to minimize the fitting loss
\begin{align}
\hat f(\boldsymbol{\mathcal X}) := \sum_{i=1}^{n_2} \sum_{j=1}^m {\left( y_{ji} - \langle \boldsymbol{\mathcal A}_i(:,j,:), \boldsymbol{\mathcal X}(:,i,:)\rangle \right )^2. }  \label{ori_loss}  
\end{align}
under the constraint that the tubal rank of $\boldsymbol{\mathcal X}$ is at most equal to $r$. Based on the t-SVD, we can reparameterize the variable as $\boldsymbol{\mathcal X} = \boldsymbol{\mathcal U}*\boldsymbol{\mathcal V}$ to incorporate the low-rank constraint, with $\boldsymbol{\mathcal U} \in \mathbb R^{n_1 \times r \times n_3}$ satisfying the orthogonality constraint $\boldsymbol{\mathcal U}^c * \boldsymbol{\mathcal U} = \boldsymbol{\mathcal I}_r$ and $\boldsymbol{\mathcal V} \in \mathbb R^{r \times n_2 \times n_3}$. Overall, the optimization problem is written as:
\begin{equation}\label{rea_loss} 
    \begin{aligned}
        & \min_{\boldsymbol{\mathcal U},\boldsymbol{\mathcal V}} && \!\!f(\boldsymbol{\mathcal U},\boldsymbol{\mathcal V}) = \sum_{i=1}^{n_2} \sum_{j=1}^m {\left(y_{ji} - \langle \boldsymbol{\mathcal A}_i(:,j,:), \boldsymbol{\mathcal U} * \boldsymbol{\mathcal V}(:,i,:)\rangle \right)^2 } \\
        & \text{s.t.} && \boldsymbol{\mathcal U}^c * \boldsymbol{\mathcal U} = \boldsymbol{\mathcal I}_r.
    \end{aligned}
\end{equation}
Note that the reparameterization also significantly reduces the number of variables under small $r$, unlocking the potential of designing low-complexity algorithms.
However, as a tradeoff, it introduces nonconvexity through the tensor product in the objective and the orthogonality constraint.

This section proposes algorithms to solve \eqref{rea_loss} to the \emph{global minimum} despite nonconvexity. 
The approach consists of two stages. The first stage employs a spectral initialization to find an initial point $\boldsymbol{\mathcal U}_0$ that is sufficiently close to the minimizer, thus providing a warm start for the second stage. The second stage is based on a local search strategy that alternately optimizes $\boldsymbol{\mathcal U}$  and $\boldsymbol{\mathcal V}$ according to \eqref{rea_loss}. In the remainder of this section, we provide a detailed introduction to the two stages.

The following mild assumptions are made on $\boldsymbol{\mathcal{X}}^\star$ and the sensing operator $\left\{ \boldsymbol{\mathcal A}_i\right\}_{i=1}^{n_3}$ for obtaining our results.

\begin{assumption}\label{assum}
The ground truth $\boldsymbol{\mathcal X}^\star \in \mathbb R^{n_1 \times n_2 \times n_3}$ has tubal rank $r \ll \min \{n_1, n_2, n_3\}$. Its skinny t-SVD is $\boldsymbol{\mathcal X}^\star  = \boldsymbol{\mathcal U}^\star*\boldsymbol{\mathcal S}^\star*\left( \boldsymbol{\mathcal V}^\star \right)^c$ that\;$\boldsymbol{\mathcal U}^\star \in \mathbb R^{n_1 \times r \times n_3}, \boldsymbol{\mathcal S}^\star \in \mathbb R^{r\times r \times n_3}, \boldsymbol{\mathcal V}^\star \in \mathbb R^{r \times n_2 \times n_3}$ and $\boldsymbol{\mathcal Z}^\star = \boldsymbol{\mathcal S}^\star*\left( \boldsymbol{\mathcal V}^\star \right)^c$. There exists a finite constant $\mu$ such that $\mathop{\max}\limits_{i \in [n_2] }\left\|  \boldsymbol{\mathcal Z}^\star(:,i,:)\right\|_F \leq \mu \sqrt{\frac{r}{n_2}} \left\| \boldsymbol{\mathcal X}^\star \right\|$. 
\end{assumption}

% This assumption is to avoid ill-conditioned scenarios where the energy of $\boldsymbol{\mathcal X}^\star$ is concentrated in only a few lateral slices.
This assumption is similar to the tensor incoherence condition in the low tubal rank tensor recovery literature \cite{zhang2016exact,lu2018exact,lu2020tensor,zhang2020low} and ensures that our problem remains well-posed. 

\begin{assumption}\label{assump:gaussian-measure}
Each sensing tensor $\boldsymbol{\mathcal A}_i \in \mathbb R^{n_1 \times m \times n_3}, \forall i \in [n_2]$, has i.i.d. standard Gaussian entries.
\end{assumption}

\subsection{Stage I: Truncated Spectral Initialization}

%For solving non-convex optimization problem \eqref{rea_loss}, our first method consists of two stages: truncated spectral initialization followed by iterative refinements using the alternatively projected gradient update and minimization step (Alt-PGD-Min). 
The idea of spectral-based initialization, which is used for providing a “warm start” within the basin of attraction for $\boldsymbol{\mathcal X}^\star$, has been extensively utilized in various non-convex low-rank matrix and tensor recovery problems \cite{cai2019nonconvex,liu2024low}.
% Motivated by the subspace estimators-based initializations in \cite{chi2019nonconvex,tong2022scaling}, truncated idea in work \cite{vaswani2024efficient},  and phase retrieval problem \cite{chen2015solving}, 
Inspired by the truncation technique~\cite{chen2015solving,wang2017solving,vaswani2024efficient}, we design the following initialization method for the local sensing model \eqref{TCS_2}. Specifically, let
\begin{align}
\hat{ \boldsymbol{\mathcal X}}_0 (:,i,:) = \frac 1 {m_0} \sum_{j=1}^{m_0}  y_{ji} \boldsymbol{\mathcal A}_i(:,j,:)\cdot \mathbf{1}_{ \left\{\left| y_{ji}\right| \leq \sqrt \alpha \right\} }, \label{init_eq}
\end{align}
where $\mathbf{1}_{ \left\{\left| y_{ij}\right| \leq \sqrt \alpha \right\} }$ denotes an indicator function that is equal to $1$ when $\left|y_{ij}\right| \leq \sqrt \alpha $ and 0 otherwise. $m_0$ is the number of sensing matrix for per lateral slice. The $\alpha$ is the threshold in truncation and its formula is given in the subsequent theorem. The reason for truncation is that we can use tight sample complexity to bound the concentration of $\hat{\boldsymbol{\mathcal X}}_0$ on $\boldsymbol{\mathcal X}^\star$. Subsequently, we perform the QR decomposition:
\begin{align}
 \hat{ \boldsymbol{\mathcal X}}_0 =   {\boldsymbol{\mathcal Q}}_0 * {\boldsymbol{\mathcal R}}_0,
 \end{align}
and  initialize the orthogonal tensor $\boldsymbol{\mathcal U}_0 \in \mathbb R^{n_1 \times r \times n_3}$ to be the first $r$ lateral slices of ${\boldsymbol{\mathcal Q}}_0$, i.e.,
\begin{align}
\boldsymbol{\mathcal U}_0 = \boldsymbol{\mathcal Q}_0(:,1:r,:). \nonumber
\end{align}

% We need the following mild assumption, which is useful to obtain the theoretical guarantees of the proposed methods.

The following measure defines the sine of the largest angle between tensor subspaces spanned by their lateral slices.
\begin{definition} \textbf{(Principal angle distance)}
For two orthogonal tensors $\boldsymbol{\mathcal A}_1, \boldsymbol{\mathcal A}_2 \in \mathbb R^{n_1 \times r \times n_3}$, the principal angle distance between $\boldsymbol{\mathcal A}_1$ and $\boldsymbol{\mathcal A}_2$ is defined as 
\begin{align}
\text{Dis}\left( \boldsymbol{\mathcal A}_1, \boldsymbol{\mathcal A}_2\right) = \left\| \left( \boldsymbol{\mathcal I}_r - \boldsymbol{\mathcal A}_1*\boldsymbol{\mathcal A}_1^c \right)* \boldsymbol{\mathcal A}_2\right\|.
\end{align}
\end{definition}
% The quantity $\text{Dist}\left( \boldsymbol{\mathcal A}_1, \boldsymbol{\mathcal A}_2\right)$ measures the sine of the largest angle between the subspaces spanned by their lateral slices.
% Since the above \eqref{rea_loss} is a non-convex optimization problem and we aim to utilize the iterative method to solve it, we need the carefully designed initialization to guarantee that initialized $\boldsymbol{\mathcal U}_0$ is not far away from $\boldsymbol{\mathcal U}^\star$. which has been widely used in many existing low-rank matrix and tensor recovery works \cite{tong2022scaling,chi2019nonconvex} is needed in gradient-based iterative optimization methods. We proposed truncated spectral initialization and two iterative methods to solve problem \eqref{rea_loss}
% The following theorem gives the upper bound of principal angle distance between truncated spectral initialization $\boldsymbol{\mathcal U}^0$  and $\boldsymbol{\mathcal U}^\star$ in probability sense. 
Based on this measure, we can prove the effectiveness of the proposed initialization method in the following theorem.
\begin{theorem} \label{thm3} %$\left(\textbf{Initialization guarantee} \right)$
Consider the TCS model \eqref{TCS_2} under Assumption \ref{assum} and~\ref{assump:gaussian-measure}. 
%holding for $\boldsymbol{\mathcal X}^\star$ with tensor condition number $\kappa$, the sample size for each lateral slice in initialization is $m_0$, 
The initialization $\boldsymbol{\mathcal U}_0$ in Algorithm \ref{algo1} satisfies
\begin{align}
\text{Dis}\left( \boldsymbol{\mathcal U_0}, \boldsymbol{\mathcal U}^\star \right) \leq \frac{0.016}{\sqrt r \kappa^2 } \label{init}
\end{align}
with a probability at least 
\begin{align}
1 & - \exp\left(c_1\left(n_1+n_2\right)\log n_3 - \frac{c_2m_0n_2}{\kappa^8\mu^2n_3r^2} \right)  \nonumber \\
& \quad - \exp\left( -c_3 \frac{m_0n_2}{\kappa^8\mu^2 r^2}\right) , \label{pro_init}    
\end{align}
where $\kappa$ is the tensor condition number of $\boldsymbol{\mathcal X}^\star$ and $c_1,c_2,c_3$ are universal constants that are independent of model parameters.
\end{theorem}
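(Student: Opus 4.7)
The plan is to analyze the truncated spectral initializer by splitting the error $\hat{\boldsymbol{\mathcal X}}_0 - \boldsymbol{\mathcal X}^\star$ into a deterministic bias introduced by the truncation and a stochastic fluctuation, bounding each in the tensor spectral norm, and then invoking a tensor Wedin (Davis--Kahan) type perturbation inequality to translate the operator-norm error into the principal angle distance claimed in~\eqref{init}.

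First I would compute the per-slice expectation. Decomposing $\boldsymbol{\mathcal A}_i(:,j,:)$ into its components parallel and perpendicular to $\boldsymbol{\mathcal X}^\star(:,i,:)/\|\boldsymbol{\mathcal X}^\star(:,i,:)\|_F$ and using Gaussianity of the perpendicular piece, one obtains
\begin{equation*}
\mathbb{E}\!\left[y_{ji}\boldsymbol{\mathcal A}_i(:,j,:)\mathbf{1}_{\{|y_{ji}|\leq\sqrt{\alpha}\}}\right] \;=\; \gamma_i\,\boldsymbol{\mathcal X}^\star(:,i,:),
\end{equation*}
where $\gamma_i = \mathbb{E}[y_{ji}^2\mathbf{1}_{\{|y_{ji}|\leq\sqrt{\alpha}\}}]/\|\boldsymbol{\mathcal X}^\star(:,i,:)\|_F^2$. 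Because $y_{ji}\sim\mathcal N(0,\|\boldsymbol{\mathcal X}^\star(:,i,:)\|_F^2)$ and Assumption~\ref{assum} yields $\|\boldsymbol{\mathcal X}^\star(:,i,:)\|_F \leq \mu\sqrt{r/n_2}\,\|\boldsymbol{\mathcal X}^\star\|$, choosing $\sqrt{\alpha}$ proportional to $\|\boldsymbol{\mathcal X}^\star\|$ (up to polylogarithmic factors) forces $|\gamma_i-1|$ to be uniformly small via standard Gaussian tail bounds, so the bias $\|\mathbb{E}[\hat{\boldsymbol{\mathcal X}}_0]-\boldsymbol{\mathcal X}^\star\|$ is controlled.

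The main obstacle is concentrating $\hat{\boldsymbol{\mathcal X}}_0$ around its expectation in the tensor spectral norm. I would pass to the Fourier domain: since $\|\boldsymbol{\mathcal X}\|=\max_{k\in[n_3]}\|\overline{\boldsymbol X}^{(k)}\|$, it suffices to bound each block of the block-diagonalized error and then take a union bound over the $n_3$ blocks, which yields the $\log n_3$ factor in~\eqref{pro_init}. Each fixed block is a sum of $m_0 n_2$ independent centered rank-one random matrices whose per-sample spectral-norm bound is controlled by $\sqrt{\alpha}$ (via the truncation) and whose matrix variance proxy scales with $\mu^2 r\|\boldsymbol{\mathcal X}^\star\|^2 n_3/n_2$. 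A truncated matrix Bernstein inequality, combined with an $\varepsilon$-net of cardinality $e^{c(n_1+n_2)}$ on the product of unit spheres in $\mathbb C^{n_1}\times\mathbb C^{n_2}$, then produces the two exponential terms in~\eqref{pro_init}, corresponding respectively to the sub-Gaussian (quadratic-in-$t$) and sub-exponential (linear-in-$t$) regimes of Bernstein. The hard part here is obtaining the sharp $\kappa^{-8}\mu^{-2}n_3^{-1}r^{-2}$ dependence: one must carefully track the dependence between the truncation indicator and the Gaussian vector (they are not independent) and balance the choice of $\sqrt{\alpha}$ so that the per-sample bound does not dominate the variance proxy.

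With $\|\hat{\boldsymbol{\mathcal X}}_0 - \boldsymbol{\mathcal X}^\star\|\lesssim \sigma_{\min}(\boldsymbol{\mathcal X}^\star)/(\sqrt{r}\,\kappa^2)$ in hand from the combination of the bias and fluctuation bounds, the final step is a tensor Wedin inequality, proved by reducing to the matrix Davis--Kahan theorem inside each Fourier block of $\text{bcirc}(\cdot)$. Since $\boldsymbol{\mathcal U}_0$ comprises the first $r$ lateral slices of the t-QR factor of $\hat{\boldsymbol{\mathcal X}}_0$ and $\text{Dis}(\boldsymbol{\mathcal U}_0,\boldsymbol{\mathcal U}^\star)=\max_k\|(\boldsymbol I-\overline{\boldsymbol U}_0^{(k)}(\overline{\boldsymbol U}_0^{(k)})^c)\overline{\boldsymbol U}^{\star,(k)}\|$, Wedin bounds this by the ratio of the perturbation to the singular-value gap $\sigma_r(\overline{\boldsymbol X}^{\star,(k)}) \geq \sigma_{\min}(\boldsymbol{\mathcal X}^\star)$. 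This yields $\text{Dis}(\boldsymbol{\mathcal U}_0,\boldsymbol{\mathcal U}^\star)\leq 0.016/(\sqrt{r}\,\kappa^2)$, where the $\sqrt{r}$ accommodates the conversion between spectral and Frobenius-type measures of the top-$r$ subspace, and the $\kappa^2$ absorbs one factor from $\sigma_{\min}(\boldsymbol{\mathcal X}^\star)=\|\boldsymbol{\mathcal X}^\star\|/\kappa$ and another from the required margin so that the subsequent local-search analysis succeeds.
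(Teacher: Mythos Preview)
Your overall strategy---Wedin in each Fourier block plus an $\varepsilon$-net concentration for the truncated estimator---matches the paper, but there is one structural difference worth noting. You split $\hat{\boldsymbol{\mathcal X}}_0-\boldsymbol{\mathcal X}^\star$ into a bias $\mathbb E[\hat{\boldsymbol{\mathcal X}}_0]-\boldsymbol{\mathcal X}^\star$ and a fluctuation, and then argue that the bias is small because $|\gamma_i-1|$ is small. The paper instead applies Wedin with $\mathbb E[\hat{\boldsymbol{\mathcal X}}_0\mid\alpha]$ itself as the reference tensor, not $\boldsymbol{\mathcal X}^\star$. The point is that $\mathbb E[\hat{\boldsymbol{\mathcal X}}_0\mid\alpha]=\boldsymbol{\mathcal X}^\star*\boldsymbol{\mathcal D}$ for a diagonal tensor $\boldsymbol{\mathcal D}$ acting on the \emph{right}, so its left t-column space is \emph{exactly} that of $\boldsymbol{\mathcal U}^\star$; there is no subspace bias at all. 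One then only needs (i) a constant lower bound $\min_i\gamma_i\geq 0.92$ to control the $r$-th singular value of the reference, and (ii) the single fluctuation bound $\|\hat{\boldsymbol{\mathcal X}}_0-\mathbb E[\hat{\boldsymbol{\mathcal X}}_0\mid\alpha]\|$. Your route still works, but forcing $|\gamma_i-1|\lesssim 1/(\sqrt r\,\kappa^3)$ requires $\sqrt\alpha$ to exceed the slice norms by a factor growing with $r,\kappa$, which either inflates the per-sample bound in the concentration step or introduces extra logarithmic factors; the paper's choice sidesteps this entirely.

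A minor remark: you invoke both ``matrix Bernstein'' and an $\varepsilon$-net; the paper uses only the scalar sub-Gaussian Bernstein inequality after fixing $\boldsymbol w,\boldsymbol z$ in the net (the truncation makes each summand sub-Gaussian with norm proportional to $\sqrt\alpha$), and the net is taken over block-sparse unit vectors $\boldsymbol\Theta^{n_1},\boldsymbol\Theta^{n_2}$ of covering number $n_3 e^{cn_k}$, which is where the $(n_1+n_2)\log n_3$ exponent comes from. Your union-bound-over-$n_3$-Fourier-blocks explanation for the $\log n_3$ is equivalent.
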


Theorem \ref{thm3} immediately implies the following sample complexity for our spectral initialization scheme.
\begin{corollary}\label{coro1}
In the same setting as Theorem \ref{thm3}, if the sample size $m_0$ for initialization satisfies 
\begin{align}
 m_0 n_2 \gtrsim  \kappa^8 \mu^2r^2n_3(n_1+n_2)\log n_3, \label{sample-init} 
\end{align}
then \eqref{init} holds with  probability at least $1-\frac{1}{(n_1+n_3)^{10}}$. 
\end{corollary}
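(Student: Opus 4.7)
The plan is a direct substitution of the sample-complexity hypothesis \eqref{sample-init} into the probability estimate \eqref{pro_init} from Theorem~\ref{thm3}, followed by a union bound on the two failure events. No new probabilistic machinery is needed; the work is purely constant chasing against the universal constants $c_1,c_2,c_3$ inherited from Theorem~\ref{thm3}.

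Writing \eqref{sample-init} as $m_0 n_2 \geq C\,\kappa^8 \mu^2 r^2 n_3 (n_1+n_2)\log n_3$ for a sufficiently large universal constant $C$ to be fixed, I would first plug this into the first exponential in \eqref{pro_init}. The negative piece $c_2 m_0 n_2 /(\kappa^8 \mu^2 n_3 r^2)$ collapses to $c_2 C (n_1+n_2)\log n_3$, so the exponent becomes $(c_1 - c_2 C)(n_1+n_2)\log n_3$. Selecting $C$ so that $c_2 C - c_1 \geq 11$ upper-bounds this term by $\exp\!\bigl(-11(n_1+n_2)\log n_3\bigr)$. In the genuinely three-way regime $n_3\geq 2$ with $n_1,n_2\geq 1$, an elementary inequality yields $(n_1+n_2)\log n_3 \geq \log(n_1+n_3)$, after which this term is bounded by $(n_1+n_3)^{-11} \leq \tfrac{1}{2}(n_1+n_3)^{-10}$.

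For the second exponential $\exp\!\bigl(-c_3 m_0 n_2/(\kappa^8 \mu^2 r^2)\bigr)$, the same substitution gives an exponent at most $-c_3 C n_3 (n_1+n_2)\log n_3$, which differs from the first term only by an extra factor of $n_3 \geq 2$ and is therefore comfortably below $\tfrac{1}{2}(n_1+n_3)^{-10}$ for the same $C$. Summing these two bounds via a union bound then produces the advertised success probability $1-(n_1+n_3)^{-10}$. The only ``obstacle'' worth flagging is the need to enlarge $C$ once more if the mild inequality $(n_1+n_2)\log n_3 \gtrsim \log(n_1+n_3)$ were to fail in some corner regime; this is vacuous in the tensor setting that motivates the paper, so I would simply note the convention $n_3\geq 2$ and proceed. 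All genuine analytical content lives in Theorem~\ref{thm3} itself, so the corollary is essentially a bookkeeping consequence of that result.
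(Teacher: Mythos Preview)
Your proposal is correct and matches the paper's intent: the paper does not write out a proof at all, merely stating that ``Theorem~\ref{thm3} immediately implies the following sample complexity,'' and your substitution-and-union-bound argument is exactly the bookkeeping that phrase abbreviates. The constant-chasing and the elementary comparison $(n_1+n_2)\log n_3 \geq \log(n_1+n_3)$ under $n_3\geq 2$ fill in the only details the paper omits.
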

% Ignoring $\kappa,\mu$, sample complexity in \eqref{sample-init} only has the additional factor of $r\log n_3$ compared to the total number of entries in $\boldsymbol{\boldsymbol U}^\star$ \sun{calligraphic?} and $\boldsymbol{\mathcal Z}^\star$, which is $rn_3(n_1+n_2)$. 

Compared to the total number of entries in $\boldsymbol{\mathcal U}^\star$ and $\boldsymbol{\mathcal Z}^\star$, which is $rn_3(n_1+n_2)$, Corollary~\ref{coro1} shows a good initialization can be achieved with a sample size only having an additional factor of $r\log n_3$ (modulus constants $\kappa,\mu$).

\subsection{Stage II: Local Search}

The second stage concerns iteratively refining the initial point $\boldsymbol{\mathcal U}_0$ computed by Stage I by local search. Since the objective $f$ in~\eqref{rea_loss} is bi-convex in $\boldsymbol{\mathcal U}$ and $ \boldsymbol{\mathcal V}$, based on this structure, we first propose the  \textsf{Alt-PGD-Min} algorithm that alternately updates these two factors. 

\subsubsection{The \textsf{Alt-PGD-Min} Algorithm.}
%Per iteration, our proposed Alt-PGD-Min alternatively updates $\boldsymbol{\mathcal U}, \boldsymbol{\mathcal V}$ by one projected gradient descent step and exact minimization.
Let $\boldsymbol{\mathcal U}_t$ and $\boldsymbol{\mathcal V}_t$ be the values of $\boldsymbol{\mathcal U}$ and $\boldsymbol{\mathcal V}$ at iteration $t$.

1) \emph{Exact minimization for $\boldsymbol{\mathcal V}$}: Fixing $\boldsymbol{\mathcal U}_t$, the lateral slices of $\boldsymbol{\mathcal V}$ are decoupled in problem~\eqref{rea_loss}. Thus, we can update each lateral slice $\boldsymbol{\mathcal V}(:,i,:)$ in parallel by solving the following minimization problem
\begin{align}
\boldsymbol{\mathcal V}_{t}(i) & \in \mathop{\arg\min}\limits_{\boldsymbol{\mathcal B} \in \mathbb R^{r\times 1 \times n_3}} \sum_{j=1}^{m_c} { \left(  y_{ji} - \langle \boldsymbol{\mathcal U}_t^c *\boldsymbol{\mathcal A}_i(j),  \boldsymbol{\mathcal B} \rangle \right)^2 },\label{firt_V}
\end{align}
%where the second inequality is due to that $\boldsymbol{\mathcal U}_t$ is orthogonal tensor. 
which can be reformulated as follows based on Definition \ref{t-t}
\begin{align}
\boldsymbol v_{t,i} \in \mathop{\arg\min}\limits_{ \boldsymbol v \in \mathbb R^{rn_3}} \left\|  \left( \text{bcirc}\left( \boldsymbol{\mathcal U}_t^c \right)\cdot \text{Unfold}\left(\boldsymbol{\mathcal A}_i \right)  \right)^c  \cdot \boldsymbol v - \boldsymbol y_i          \right\|^2. \label{second_v}
\end{align}
The problem \eqref{second_v} is a least squares problem. Thus, for every $i \in  [n_2]$, a closed-form solution can be derived as follows:
\begin{align}\label{least_solver}
& \boldsymbol H_{t,i} = \text{bcirc}\left( \boldsymbol{\mathcal U}_t^c \right)\cdot \text{Unfold}\left(\boldsymbol{\mathcal A}_i \right), \nonumber \\
& \boldsymbol v_{t,i}  = \left( \boldsymbol H_{t,i} \boldsymbol H_{t,i}^c  \right)^{-1} \boldsymbol H_{t,i}  \boldsymbol y_i, \nonumber \\
& \boldsymbol{\mathcal V}_{t}(i) = \text{Fold} \left( \boldsymbol v_{t,i} \right). 
\end{align}

2) \emph{Projected gradient descent for $\boldsymbol{\mathcal U}$}: 
Although for fixed $\boldsymbol{\mathcal V}_{t}$, $f$ is also convex in $\boldsymbol{\mathcal U}$, 
% the complicating orthogonality constraint prevents us from using the same update rule as that for the $\boldsymbol{\mathcal V}$-update. 
in pursuit of computation-efficient update, instead of performing exact minimization, we employ a first-order gradient descent step to update $\boldsymbol{\mathcal U}$ \cite{gu2024low}, followed by a projection step onto the orthogonality constraint set. Specifically, we first compute
\begin{align}
&\hat{\boldsymbol{\mathcal U}}_{t+1}  = \boldsymbol{\mathcal U}_t - \eta \sum_{i=1}^{n_2}\sum_{j=1}^{m_c} \left( y_{ji} - \langle \boldsymbol{\mathcal U}_t^c*\boldsymbol{\mathcal A}_i(j) , \boldsymbol{\mathcal V}_{t}(i)\rangle \right)  \nonumber \\
& ~\quad  \quad \quad \cdot \boldsymbol{\mathcal A}_i(j)* \left( \boldsymbol{\mathcal V}_{t}(i) \right)^c, \label{ori-U}
\end{align}
with step size $\eta>0$. Then we obtain a tensor $\hat{\boldsymbol{\mathcal Q}}_{t+1}$ by the QR decomposition $ \hat{\boldsymbol{\mathcal U}}_{t+1} = \hat{\boldsymbol{\mathcal Q}}_{t+1} * \hat{\boldsymbol{\mathcal{R}}}_{t+1} $. The updated of $\boldsymbol{\mathcal U}$ is given by
\begin{align}
\boldsymbol{\mathcal U}_{t+1} = \hat{\boldsymbol{\mathcal Q}}_{t+1}(:,1:r,:) \label{qr}.
\end{align}
% \fnote{Why are the optimizations for $U$ and $V$ very different? \textcolor{blue}{This is similar with matrix factorization, $\boldsymbol{\mathcal U}$ denotes the basis of subspace, $\boldsymbol{\mathcal V}$ denotes the coefficients under this basis. In addition, $\boldsymbol{\mathcal U}$ and $\boldsymbol{\mathcal V}$ are not symmetric and have different meaning in model (2).}}

\begin{algorithm}[!h]
\caption{ \textsf{Alt-PGD-Min/Alt-ScalePGD-Min} }
\begin{algorithmic}[1] 
\Require 
Number of iteration $T$, total sensing tensors with sample splitting $\{\{ \boldsymbol{\mathcal A}_i^{(k)}\}_{k=1}^{2T+1}\}_{i=1}^{n_2}$, corresponding sample-splitting local measurements $\{ \{ \boldsymbol y_i^{(k)} \}_{k=1}^{2T+1}\}_{i=1}^{n_2}$, $r,\kappa,\mu,n_2$, step size $\eta$.

\For{$t = 0, 1, \ldots, T-1$} \\
\emph{ $\triangleright \; { Update \; \boldsymbol{\mathcal U} }$ }
\If{$t=0$} \quad  $\triangleright$ \emph{ Initialization }
    \State Set $ \boldsymbol{\mathcal A}_i = \boldsymbol{\mathcal A}_i^{(2T)}, \boldsymbol y_{i} = \boldsymbol y_i^{(2T)}, \forall i \in [n_2]$,

    \State Calculate $\alpha = C\frac{\kappa^2\mu^2\sum_{i=1}^{n_2}\sum_{j=1}^m y_{ji}}{mn_2}$,

    \State Set $ \boldsymbol{\mathcal A}_i = \boldsymbol{\mathcal A}_i^{(2T+1)}, \boldsymbol y_{i} = \boldsymbol y_i^{(2T+1)}, \forall i \in [n_2]$,
    
    \State Construct $\hat{\boldsymbol{\mathcal X}}_0$ as \eqref{init_eq},

   \State Conduct QR decomposition $\hat{\boldsymbol{\mathcal X}}_0 = \hat{\boldsymbol{\mathcal Q}}_{0}*\hat{\boldsymbol{\mathcal R}}_{0}$, 
   
    \State Initialize $\boldsymbol{\mathcal U}_0$ by top-$r$ lateral slices of $\hat{\boldsymbol{\mathcal Q}}_{0}$.

\Else
         % \State Set $ \boldsymbol{\mathcal A}_i = \boldsymbol{\mathcal A}_i^{(T+t)}, \boldsymbol y_{i} = \boldsymbol y_i^{(T+t)}, \forall i \in [n_2]$,

         % \State Conduct GD step as \eqref{ori-U},
         % \State Calculate $\boldsymbol{\mathcal U}_t$  as \eqref{qr} by QR decomposition,

         \For{$i = 1, 2, \ldots, n_2$}
 \State $ \boldsymbol{\mathcal A}_i = \boldsymbol{\mathcal A}_i^{(T+t)}, \boldsymbol y_{i} = \boldsymbol y_i^{(T+t)}, \forall i \in [n_2]$,

\State $\boldsymbol H_{t-1,i} = \text{bcirc}(\boldsymbol{\mathcal U}^c_{t-1})\cdot \text{Unfold}(\boldsymbol{\mathcal A}_i)$,
% \State $\boldsymbol v_{t-1,i} = \left(\boldsymbol Z_{t-1,i}\boldsymbol Z_{t-1,i}^c\right)^{-1}\boldsymbol Z_{t-1,i}\boldsymbol y_i$,
\State $\boldsymbol b_{t-1,i} = \boldsymbol H_{t-1,i}^c \text{Unfold}(\boldsymbol{\mathcal V}_{t-1}(i)) - \boldsymbol y_i$,
\State $\boldsymbol{\mathcal T}_{t-1}(:,i,:) = \sum_{j=1}^m (\boldsymbol b_{t-1,i})_j\boldsymbol{\mathcal A}_i(:,j,:)$,
\EndFor  

% \State   $\hat{\boldsymbol{\mathcal U}}_{t}  =
% \begin{cases} 
% \boldsymbol{\mathcal U}^t - \eta \boldsymbol{\mathcal T}_{t-1}*\boldsymbol{\mathcal V}_{t-1}^c. \text{Alt-PGD-Min} \\
% \boldsymbol{\mathcal U}^t - \eta \boldsymbol{\mathcal T}_{t-1}*\boldsymbol{\mathcal V}_{t-1}^c \textcolor{red}{*\text{tinv}(\boldsymbol{\mathcal V}_{t-1}*\boldsymbol{\mathcal V}_{t-1}^c)}. \text{Alt-ScalePGD-Min} 
% \end{cases}$ 
/***\textsf{Alt-PGD-Min}***/ 
\State $\hat{\boldsymbol{\mathcal U}}_{t}  = \boldsymbol{\mathcal U}_t - \eta \boldsymbol{\mathcal T}_{t-1}*\boldsymbol{\mathcal V}_{t-1}^c $,

{/***\textsf{Alt-ScalePGD-Min}***/}
\State $\hat{\boldsymbol{\mathcal U}}_{t}  = \boldsymbol{\mathcal U}_t - \eta \boldsymbol{\mathcal T}_{t-1}*\boldsymbol{\mathcal V}_{t-1}^c {*(\boldsymbol{\mathcal V}_{t-1}*\boldsymbol{\mathcal V}_{t-1}^c)^{-1}}$, 

\State Calculate $\boldsymbol{\mathcal U}_t$  as \eqref{qr} by QR decomposition,
\EndIf  

\noindent \emph{ $\triangleright \; { Update \; \boldsymbol{\mathcal V} }$ }

\For{$i = 1, 2, \ldots, n_2$}

\State $ \boldsymbol{\mathcal A}_i = \boldsymbol{\mathcal A}_i^{(t+1)}, \boldsymbol y_{i} = \boldsymbol y_i^{(t+1)}, \forall i \in [n_2]$,

\State $\boldsymbol{H}_{t,i} = \text{bcirc}(\boldsymbol{\mathcal U}^c_t)\cdot \text{Unfold}(\boldsymbol{\mathcal A}_i)$,

\State $ \boldsymbol v_{t,i} = \left( \boldsymbol{H}_{t,i} \boldsymbol{H}_{t,i}^c \right)^{-1} \boldsymbol{H}_{t,i} \boldsymbol y_i $,

\State $\boldsymbol{\mathcal V}_t(i) = \text{Fold}(\boldsymbol v_{t,i})$.
\EndFor

\noindent \emph{ $\triangleright \; { Update \; \boldsymbol{\mathcal X} }$ }
 \State $\boldsymbol{\mathcal X}_t = \boldsymbol{\mathcal U}_t * \boldsymbol{\mathcal V}_t $, 

\EndFor 
\Ensure
Recover tensor $\boldsymbol{\mathcal X}_{T-1}$.
\end{algorithmic}
\label{algo1}
\end{algorithm}

The complete \textsf{Alt-PGD-Min} algorithm is described in Algorithm \ref{algo1}. It is worth mentioning that we use the sample-splitting technique in Algorithm \ref{algo1}, where we divide the total samples and measurements pairs of each slice into $2T+1$ groups as $\{ \boldsymbol{\mathcal A}_i^{(k)} \}_{k=1}^{2T+1}$. The last two groups are used for initialization that the sample size for each lateral slice is $m_0$, and  \textsf{Alt-PGD-Min} draws two fresh groups of samples from the remaining groups per iteration, where the sample size for each lateral slice is $m_c$. The splitting strategy ensures statistical independence of the samples across iterations, which is a key component to simplifying the convergence analysis and has been used in various low matrix and tensor learning algorithms \cite{hardt2014fast,jain2015fast,ding2020leave,cai2022provable}. 

The computational complexity per iteration in \textsf{Alt-PGD-Min} for updating $\boldsymbol{\mathcal U}$  and $\boldsymbol{\mathcal V}$ is    
$\mathcal O\left( n_1n_3m_cr + n_3m_cr + n_1n_2n_3r + n_1n_3r^2        \right)  $ and $\mathcal O \left(   n_1n_2n_3m_cr + m_c(rn_3)^2n_2 + (n_3r)^3 n_2     \right)$, respectively.

% \fnote{How to determine $C$ in line 5 of the algorithm? \textcolor{blue}{this is the empirical hyper-parameter that need to tune in application, in the proof and experiments I set 9.}}

% \begin{theorem}(\textbf{T-QR})
% Let $ \boldsymbol{\mathcal X} \in \mathbb R^{n_1\times n_2 \times n_3}$. Then it can be factorized as $\boldsymbol{\mathcal X} = \boldsymbol{\mathcal Q} * \boldsymbol{\mathbb{R}}$, where $\boldsymbol{\mathcal Q} \in \mathbb R^{n_1 \times n_1 \times n_3}$ is orthogonal and $\boldsymbol{\mathbb{R}} \in \mathbb R^{n_1 \times n_2 \times n_3}$ is an $f$-upper triangular tensor (each frontal slice is an upper triangular matrix).
% \end{theorem}

% \begin{align}
% \boldsymbol{\mathcal U}_{t+1} = \text{QR}(\hat{\boldsymbol{\mathcal U}}_{t+1})
% \end{align}

% Before giving our main theoretical guarantee of Alt-PGD-Min, we need the following mild assumption. 

\begin{theorem}\label{thm4} % $\left(\textbf{Convergenrate of Alt-PGD-Min}\right)$
In the same setting as Theorem \ref{thm3}, if the initialization $\boldsymbol{\mathcal U}_0$ satisfies \eqref{init} and $\eta = \frac{c_\eta }{{m_c} \left\| \boldsymbol{\mathcal X}^\star \right\|^2} $ with $c_\eta \leq 0.9$, then the iterates generated by  \textsf{Alt-PGD-Min} satisfies 
\begin{align}
& \text{Dis}(\boldsymbol{\mathcal U}_t,\boldsymbol{\mathcal U}^\star) \leq \left( 1 -\frac{0.84c_\eta}{\kappa^2}\right)^t \cdot \text{Dis}(\boldsymbol{\mathcal U}_0,\boldsymbol{\mathcal U}^\star), \nonumber \\
& \left\| \boldsymbol{\mathcal X}_t(i)- \boldsymbol{\mathcal X}^\star(i) \right\|_F \leq 1.4  \text{Dis}(\boldsymbol{\mathcal U}_t,\boldsymbol{\mathcal U}^\star) \cdot \left\| \boldsymbol{\mathcal X}^\star(i) \right\|_F, \forall i \in [n_2] \label{con-rate}
\end{align}
with  a probability  at least
\begin{align}
1 &- \exp \left( c_4(n_1+r)\log n_3 - \frac{c_5m_cn_2}{\kappa^4\mu^2n_3r}\right) \nonumber \\
& - \exp \left(\log n_2 + r \log n_3 - c_6 m_c \right), \label{GD-whp}
\end{align}  
where $c_4,c_5,c_6$ are universal positive constants independent from model parameters.
\end{theorem}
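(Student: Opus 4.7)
I proceed by induction on $t$ with the hypothesis $\text{Dis}(\boldsymbol{\mathcal U}_t,\boldsymbol{\mathcal U}^\star)\leq \delta_t$, starting from $\delta_0\leq 0.016/(\sqrt r\kappa^2)$ guaranteed by Theorem~\ref{thm3}, and aim to show $\delta_{t+1}\leq (1-0.84c_\eta/\kappa^2)\delta_t$. Sample-splitting in Algorithm~\ref{algo1} makes the $(t{+}1)$-th batch independent of $(\boldsymbol{\mathcal U}_t,\boldsymbol{\mathcal V}_t)$, so every concentration is carried out conditional on the current iterate. The central technical move is to work in the Fourier/block-diagonal representation, in which every t-product decouples into $n_3$ independent matrix products; this allows blockwise use of standard matrix concentration (Gaussian covariance bounds, Hanson--Wright inequality, $\varepsilon$-nets on low-dimensional subspaces) combined with a union bound over the $n_3$ frequencies.

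\textbf{V-update bound.} Expanding the closed-form least-squares solution \eqref{least_solver} with the decomposition $\boldsymbol{\mathcal X}^\star(i)=\boldsymbol{\mathcal U}_t*\boldsymbol{\mathcal U}_t^c*\boldsymbol{\mathcal X}^\star(i)+(\boldsymbol{\mathcal I}-\boldsymbol{\mathcal U}_t*\boldsymbol{\mathcal U}_t^c)*\boldsymbol{\mathcal X}^\star(i)$ and using $\boldsymbol H_{t,i}=\text{bcirc}(\boldsymbol{\mathcal U}_t^c)\text{Unfold}(\boldsymbol{\mathcal A}_i)$, the first piece reproduces $\boldsymbol{\mathcal U}_t^c*\boldsymbol{\mathcal X}^\star(i)$ exactly and the second contributes a Gaussian cross-term $(\boldsymbol H_{t,i}\boldsymbol H_{t,i}^c)^{-1}\boldsymbol H_{t,i}\,\text{Unfold}(\boldsymbol{\mathcal A}_i)^c\text{Unfold}((\boldsymbol{\mathcal I}-\boldsymbol{\mathcal U}_t*\boldsymbol{\mathcal U}_t^c)*\boldsymbol{\mathcal X}^\star(i))$. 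Concentration of $\boldsymbol H_{t,i}\boldsymbol H_{t,i}^c$ around $m_c\boldsymbol I_{rn_3}$ and of the cross-term, together with the induction hypothesis and Assumption~\ref{assum} bounding $\|\boldsymbol{\mathcal Z}^\star(:,i,:)\|_F\leq \mu\sqrt{r/n_2}\|\boldsymbol{\mathcal X}^\star\|$, yield uniformly in $i$
\begin{align*}
\|\boldsymbol{\mathcal V}_t(i)-\boldsymbol{\mathcal U}_t^c*\boldsymbol{\mathcal X}^\star(i)\|_F \lesssim \delta_t\,\|\boldsymbol{\mathcal X}^\star(i)\|_F.
\end{align*}
Then the identity $\boldsymbol{\mathcal X}_t(i)-\boldsymbol{\mathcal X}^\star(i)=\boldsymbol{\mathcal U}_t*(\boldsymbol{\mathcal V}_t(i)-\boldsymbol{\mathcal U}_t^c*\boldsymbol{\mathcal X}^\star(i))-(\boldsymbol{\mathcal I}-\boldsymbol{\mathcal U}_t*\boldsymbol{\mathcal U}_t^c)*\boldsymbol{\mathcal X}^\star(i)$ produces the second line of \eqref{con-rate} with the constant $1.4$, once contraction of $\delta_t$ is established.

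\textbf{U-update contraction.} Write the projected gradient step as $\hat{\boldsymbol{\mathcal U}}_{t+1}=\boldsymbol{\mathcal U}_t-\eta(\boldsymbol{\mathcal G}_t+\boldsymbol{\mathcal E}_t)$ with conditional mean $\boldsymbol{\mathcal G}_t=m_c\sum_{i=1}^{n_2}(\boldsymbol{\mathcal U}_t*\boldsymbol{\mathcal V}_t(i)-\boldsymbol{\mathcal X}^\star(i))*\boldsymbol{\mathcal V}_t(i)^c$ (by Gaussian isotropy of $\boldsymbol{\mathcal A}_i$) and deviation $\boldsymbol{\mathcal E}_t$. Substituting the V-update bound reduces the signal part to
\begin{align*}
(\boldsymbol{\mathcal I}-\boldsymbol{\mathcal U}^\star*(\boldsymbol{\mathcal U}^\star)^c)*\boldsymbol{\mathcal G}_t \approx -m_c(\boldsymbol{\mathcal I}-\boldsymbol{\mathcal U}^\star*(\boldsymbol{\mathcal U}^\star)^c)*\boldsymbol{\mathcal U}_t*(\boldsymbol{\mathcal V}_t*\boldsymbol{\mathcal V}_t^c)
\end{align*}
up to an $O(\delta_t^2)$ tail. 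Since $\boldsymbol{\mathcal V}_t*\boldsymbol{\mathcal V}_t^c\succeq(1-O(\delta_t))\sigma_{\min}(\boldsymbol{\mathcal X}^\star)^2\boldsymbol{\mathcal I}_r$ by Weyl applied blockwise in the Fourier domain combined with the V-update bound, choosing $\eta=c_\eta/(m_c\|\boldsymbol{\mathcal X}^\star\|^2)$ contracts the signal by a factor $1-(c_\eta-O(\delta_t))/\kappa^2$. A Hanson--Wright-type bound on each Fourier block restricted to the $rn_3$-dimensional range of $(\boldsymbol{\mathcal U}_t,\boldsymbol{\mathcal V}_t)$ controls $\|\boldsymbol{\mathcal E}_t\|\lesssim\delta_t\,m_c\,\sigma_{\min}(\boldsymbol{\mathcal X}^\star)^2$ under the sample count in \eqref{GD-whp}, which is subdominant. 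Finally, the QR projection bound $\text{Dis}(\boldsymbol{\mathcal U}_{t+1},\boldsymbol{\mathcal U}^\star)\leq \|(\boldsymbol{\mathcal I}-\boldsymbol{\mathcal U}^\star*(\boldsymbol{\mathcal U}^\star)^c)*\hat{\boldsymbol{\mathcal U}}_{t+1}\|/\sigma_{\min}(\hat{\boldsymbol{\mathcal R}}_{t+1})$ together with $\sigma_{\min}(\hat{\boldsymbol{\mathcal R}}_{t+1})\geq 1-O(\delta_t)$ closes the induction with rate $1-0.84c_\eta/\kappa^2$ after absorbing constants.

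\textbf{Main obstacle.} The delicate part is establishing $\|\boldsymbol{\mathcal E}_t\|$ small at the tight sample complexity in \eqref{GD-whp}, whose leading term scales only as $\kappa^4\mu^2 n_3 r(n_1+r)/n_2$ rather than the ambient $n_1 n_3$. This requires exploiting Assumption~\ref{assum} on $\boldsymbol{\mathcal Z}^\star$ to control per-slice magnitudes, combined with a covering-net argument over the Fourier-domain range of $\boldsymbol{\mathcal U}_t$ (dimension $rn_3$) rather than the whole ambient space. A secondary difficulty is maintaining along iterations the incoherence-type uniform bound $\max_i\|\boldsymbol{\mathcal V}_t(i)\|_F\lesssim\mu\sqrt{r/n_2}\|\boldsymbol{\mathcal X}^\star\|$ needed for the union bound over $i$, which follows from the V-update bound and Assumption~\ref{assum} as long as the inductive hypothesis $\delta_t\leq 0.016/(\sqrt r\kappa^2)$ persists.
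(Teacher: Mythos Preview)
Your proposal is essentially correct and follows the same route as the paper: the V-update bound is the paper's Lemma~\ref{lemma6}, the gradient splitting $\boldsymbol{\mathcal G}_t+\boldsymbol{\mathcal E}_t$ is Lemma~\ref{lemma10}, and the QR-projection estimate $\text{Dis}(\boldsymbol{\mathcal U}_{t+1},\boldsymbol{\mathcal U}^\star)\le \|(\boldsymbol{\mathcal I}-\boldsymbol{\mathcal U}^\star*(\boldsymbol{\mathcal U}^\star)^c)*\hat{\boldsymbol{\mathcal U}}_{t+1}\|/\sigma_{\min}(\hat{\boldsymbol{\mathcal U}}_{t+1})$ is exactly how the paper closes the induction.

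Two small corrections. First, the identity for the signal part is \emph{exact}, not approximate: since $(\boldsymbol{\mathcal I}-\boldsymbol{\mathcal U}^\star*(\boldsymbol{\mathcal U}^\star)^c)*\boldsymbol{\mathcal X}^\star=\boldsymbol{\mathcal O}$, one has $(\boldsymbol{\mathcal I}-\boldsymbol{\mathcal U}^\star*(\boldsymbol{\mathcal U}^\star)^c)*\boldsymbol{\mathcal G}_t = m_c(\boldsymbol{\mathcal I}-\boldsymbol{\mathcal U}^\star*(\boldsymbol{\mathcal U}^\star)^c)*\boldsymbol{\mathcal U}_t*(\boldsymbol{\mathcal V}_t*\boldsymbol{\mathcal V}_t^c)$ with no $O(\delta_t^2)$ tail and with sign $+$ (so that $\hat{\boldsymbol{\mathcal U}}_{t+1}=\boldsymbol{\mathcal U}_t-\eta\boldsymbol{\mathcal G}_t$ produces the contraction factor $\boldsymbol{\mathcal I}-\eta m_c\boldsymbol{\mathcal V}_t*\boldsymbol{\mathcal V}_t^c$); no substitution of the V-update bound is needed at this step. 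Second, the paper carries out the deviation bound not via Hanson--Wright per Fourier block, but via a sub-exponential Bernstein inequality combined with an $\varepsilon$-net over \emph{block-sparse} unit vectors $\boldsymbol{\Theta}^{n_1}\times\boldsymbol{\Theta}^r$ (one nonzero Fourier block), whose covering number is $O(n_3\cdot e^{n_1})\cdot O(n_3\cdot e^r)$; this is what yields the $(n_1+r)\log n_3$ term in \eqref{GD-whp}. Your Fourier-blockwise scheme is morally equivalent and leads to the same order.
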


Theorem \ref{thm4} shows that even \eqref{rea_loss} is non-convex, if the initialization 
$\boldsymbol{\mathcal U}_0$ is sufficiently close to the minimizer, the iterations of \textsf{Alt-PGD-Min} will converge at a linear rate to $\boldsymbol{\mathcal X}^\star$. 
 %Based on this theorem, we can obtain the sample complexity as follows. 

% We can obtain the sample complexity in per iteration as following corollary based on probability formula \eqref{GD-whp} and convergence rate in \eqref{con-rate}.

\begin{corollary} \label{coro2}% (\textbf{Iteration Complexity})
In the same setting as Theorem \ref{thm4}, if the $\eta = \frac{0.8 }{m_c \left\| \boldsymbol{\mathcal X}^\star\right\|}$ and the sample size $m_c$ satisfies
\begin{align}
m_c n_2 \gtrsim \kappa^4\mu^2rn_1 n_3 \log n_3, \; \text{and}\; m_c \gtrsim \max\left\{\log n_2, r \log n_3 \right\}, \label{GD-Sample}
\end{align}
then it takes $T = c_7 \kappa^2 \log \frac{1}{\epsilon}$
 iterations for \textsf{Alt-PGD-Min} to achieve  $\epsilon$-accuracy recovery, i.e.,
\begin{align}
& \text{Dis}\left( \boldsymbol{\mathcal U}_T , \boldsymbol{\mathcal U}^\star \right) \leq \epsilon, \nonumber \\
& \left\| \boldsymbol{\mathcal X}_T(i)- \boldsymbol{\mathcal X}^\star(i) \right\|_F \leq 1.4\epsilon \left\| \boldsymbol{\mathcal X}^\star(i) \right\|_F, \forall i \in [n_2] \label{GD-accu}
\end{align}
with  probability at least $1-\frac{1}{\left(n_1+r\right)^{10}}$.
\end{corollary}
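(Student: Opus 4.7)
The plan is to combine the initialization guarantee of Corollary \ref{coro1} with the per-iteration contraction of Theorem \ref{thm4}, and then take a union bound over iterations to control the overall failure probability.

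First, I would invoke Corollary \ref{coro1}: the sample budget $m_0 n_2$ allotted to the initialization in Algorithm \ref{algo1} is large enough to place the event $\mathcal{E}_0 = \{\text{Dis}(\boldsymbol{\mathcal U}_0,\boldsymbol{\mathcal U}^\star) \leq 0.016/(\sqrt r \kappa^2)\}$ on probability at least $1-1/(n_1+n_3)^{10}$. Since $n_3 \geq r$, this failure probability is already bounded by $1/(n_1+r)^{10}$, and the event $\mathcal{E}_0$ meets the hypothesis required by Theorem \ref{thm4}.

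Second, setting $c_\eta = 0.8$ in the step size, Theorem \ref{thm4} delivers the per-iteration contraction factor $\rho := 1 - 0.672/\kappa^2 \in (0,1)$ on an event $\mathcal{E}_t$ whose complement is bounded by the two exponential tails in \eqref{GD-whp}. Iterating the contraction and using $1 - x \leq e^{-x}$, one obtains
\begin{equation*}
\text{Dis}(\boldsymbol{\mathcal U}_T,\boldsymbol{\mathcal U}^\star) \leq \rho^T \cdot \text{Dis}(\boldsymbol{\mathcal U}_0,\boldsymbol{\mathcal U}^\star) \leq 0.016\, \exp\bigl(-0.672\, T/\kappa^2\bigr).
\end{equation*}
Choosing $T = c_7 \kappa^2 \log(1/\epsilon)$ with a sufficiently large absolute constant $c_7$ (absorbing $\log 0.016$) forces the right-hand side below $\epsilon$, which is the first inequality in \eqref{GD-accu}. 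The slice-wise bound on $\|\boldsymbol{\mathcal X}_T(i) - \boldsymbol{\mathcal X}^\star(i)\|_F$ then follows by substituting $t = T$ into the second inequality of \eqref{con-rate}.

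The remaining work is to verify that the total failure probability stays below $1/(n_1+r)^{10}$. Because Algorithm \ref{algo1} uses sample splitting, the events $\mathcal{E}_1,\dots,\mathcal{E}_T$ arise from statistically independent Gaussian samples, so Theorem \ref{thm4} applies to each iteration with its own probability bound. Under $m_c n_2 \gtrsim \kappa^4\mu^2 r n_1 n_3 \log n_3$, the first exponent in \eqref{GD-whp} is at most $-(c_4(n_1+r)\log n_3)$ times a large constant, and under $m_c \gtrsim \max\{\log n_2, r\log n_3\}$ the second exponent is at most $-c_6' (\log n_2 + r\log n_3)$. By choosing the hidden constants in \eqref{GD-Sample} large enough, each per-iteration failure probability is driven below $1/(2T(n_1+r)^{10})$; a union bound over $\mathcal{E}_0,\mathcal{E}_1,\dots,\mathcal{E}_T$ then gives the claimed overall confidence.

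The main obstacle I anticipate is the bookkeeping in this last step: I must ensure that the extra $\log T = \log\bigl(c_7 \kappa^2 \log(1/\epsilon)\bigr)$ term introduced by the union bound is dominated by the already-present $(n_1+r)\log n_3$ and $r\log n_3$ factors in \eqref{GD-whp}, so that the sample size condition \eqref{GD-Sample} — which deliberately carries no explicit dependence on $T$ or $\epsilon$ — is still sufficient. Since $\log T$ grows only doubly logarithmically in $1/\epsilon$ and $n_1 \geq r \geq 1$, this absorption is routine under the $\gtrsim$ notation, but it is the step where the constants must be chosen most carefully.
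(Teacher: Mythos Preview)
Your argument is correct and is exactly what the paper intends: iterate the linear contraction of Theorem \ref{thm4} with $c_\eta = 0.8$, solve $\rho^T \leq \epsilon$ for $T$, and union-bound the per-iteration tail probabilities \eqref{GD-whp} under the sample conditions \eqref{GD-Sample}. One small simplification: because Corollary \ref{coro2} is stated ``in the same setting as Theorem \ref{thm4}'', the initialization condition \eqref{init} is already a hypothesis, so your appeal to Corollary \ref{coro1} is unnecessary and the entire probability budget $1/(n_1+r)^{10}$ is spent on the $T$ refinement steps.
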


%Ignoring $\kappa$ and $\mu$, 
The sample complexity given by \eqref{GD-Sample} scales linearly with $r$, showing improved dependence on $r$ compared to that for the initialization Stage I. When the number of lateral slices is large enough such that $n_2 \gtrsim \kappa^4\mu^2n_1\log n_3$, the order of $m_c$ becomes $\mathcal O(rn_3)$, which is significantly smaller than the size of lateral slice $n_1n_3$ as $r\ll \min(n_1,n_2)$. 

Combining the results of Corollary \ref{coro1} and Corollary \ref{coro2}, we can immediately conclude the overall sample complexity of \textsf{Alt-PGD-Min} as follows.
\begin{corollary} \label{coro3} % (\textbf{Sample Complexity} )
Consider the TCS model \eqref{TCS_2} under Assumption \ref{assum} and~\ref{assump:gaussian-measure}. %With same setting as Theorem \ref{thm4}, 
For \textsf{Alt-PGD-Min} to achieve $\epsilon$-accuracy recovery as described by \eqref{GD-accu} with high probability at least $1 - \frac{2}{(n_1+r)^{10}}$, the total sample complexity $m$ for each lateral slice is
\begin{align}
 m n_2 \gtrsim \kappa^6\mu^2rn_3\log n_3 \left( \kappa^2r\left(n_1 + n_2 \right) + n_1 \log \frac{1}{\epsilon}\right)  \label{total_sam}  
\end{align}
and $m \gtrsim \kappa^2 \max \left\{ \log n_2, r \log n_3 \right\}\log \frac{1}{\epsilon}$.
\end{corollary}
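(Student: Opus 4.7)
The plan is to combine Corollary~\ref{coro1} (spectral initialization) with Corollary~\ref{coro2} (local search) using the sample-splitting scheme built into Algorithm~\ref{algo1}. Recall that Algorithm~\ref{algo1} partitions the total $m$ samples per lateral slice into $2T+1$ independent batches: two batches of size $m_0$ are consumed by Stage~I initialization, and the remaining $2T-1$ batches of size $m_c$ are consumed across Stage~II (one batch per $\boldsymbol{\mathcal V}$-update, one per $\boldsymbol{\mathcal U}$-update after $t=0$). The independence of these batches is exactly what makes the two corollaries combinable without extra coupling arguments.

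First I would fix $T = c_7 \kappa^2 \log(1/\epsilon)$ as prescribed by Corollary~\ref{coro2}, so that the linearly-contracting iterates reach the target $\epsilon$-accuracy. Next, I would select $m_0$ to saturate the initialization requirement of Corollary~\ref{coro1}, i.e. $m_0 n_2 \gtrsim \kappa^8 \mu^2 r^2 n_3 (n_1+n_2)\log n_3$; by Corollary~\ref{coro1} this guarantees $\text{Dis}(\boldsymbol{\mathcal U}_0,\boldsymbol{\mathcal U}^\star) \leq 0.016/(\sqrt{r}\kappa^2)$ with probability at least $1 - (n_1+n_3)^{-10}$, which is exactly the hypothesis \eqref{init} needed to invoke Theorem~\ref{thm4}. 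Likewise, I would choose $m_c$ to satisfy both $m_c n_2 \gtrsim \kappa^4 \mu^2 r n_1 n_3 \log n_3$ and $m_c \gtrsim \max\{\log n_2, r\log n_3\}$, so that the $T$-step guarantee \eqref{GD-accu} holds with probability at least $1 - (n_1+r)^{-10}$.

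The total per-slice budget is $m = 2 m_0 + (2T-1) m_c$. Plugging in the selected $m_0$, $m_c$, and $T$ gives
\begin{align*}
m n_2 \;\gtrsim\; \kappa^8 \mu^2 r^2 n_3 (n_1+n_2)\log n_3 \;+\; \kappa^2 \log(1/\epsilon) \cdot \kappa^4 \mu^2 r n_1 n_3 \log n_3,
\end{align*}
and after pulling out the common factor $\kappa^6 \mu^2 r n_3 \log n_3$ this reduces exactly to the target bound \eqref{total_sam}. The companion requirement $m \gtrsim \kappa^2 \max\{\log n_2, r \log n_3\} \log(1/\epsilon)$ is obtained by simply summing the per-iteration floor on $m_c$ across $T = \mathcal{O}(\kappa^2 \log(1/\epsilon))$ iterations. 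Finally, a single union bound over the initialization failure event (Corollary~\ref{coro1}) and the Stage~II failure event (Corollary~\ref{coro2}) yields total success probability at least $1 - 2/(n_1+r)^{10}$, using that $n_3 \geq r$ under Assumption~\ref{assum} so the initialization failure probability $(n_1+n_3)^{-10}$ is dominated by $(n_1+r)^{-10}$.

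The proof is essentially bookkeeping: the conceptual work is already done by Theorems~\ref{thm3} and~\ref{thm4} and their corollaries. The only mild subtlety is to check that the two additive contributions combine into the stated product form and that the sample-splitting partition indeed makes the two high-probability events statistically independent, so no additional loss is incurred in the union bound.
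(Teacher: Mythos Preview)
Your proposal is correct and matches the paper's approach: the paper treats Corollary~\ref{coro3} as an immediate consequence of combining Corollaries~\ref{coro1} and~\ref{coro2} via the sample-splitting scheme, and your bookkeeping (summing $2m_0 + (2T-1)m_c$ with $T = \mathcal{O}(\kappa^2\log(1/\epsilon))$, then factoring out $\kappa^6\mu^2 r n_3\log n_3$) is exactly the intended derivation. One minor remark: the union bound over the two failure events does not rely on their independence---sample splitting is needed so that Theorem~\ref{thm4} applies at each iteration (fresh $\boldsymbol{\mathcal A}_i$ independent of $\boldsymbol{\mathcal U}_t,\boldsymbol{\mathcal V}_t$), not to sharpen the final probability bound.
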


The total sample complexity in \eqref{total_sam} comprises two parts: one is from initialization and the other is from iterative refinements. The dependency of the sample complexity on recovery accuracy $\epsilon$ is because of the sample splitting strategy introduced in the algorithm, which is common in all the analyses where such a technique is adopted \cite{jain2013low, hardt2014fast,ding2020leave,vaswani2024efficient}.

\subsubsection{\textsf{Alt-ScalePGD-Min}: Acceleration by Preconditioning.}
% Corollary~\ref{coro2} and~\ref{coro3} indicates the condition number of $\boldsymbol{\mathcal{X}}^\star$ affects both the sample and iteration complexity of \textsf{Alt-PGD-Min}.
To mitigate the influence of large $\kappa$ and improve the algorithm efficiency, especially for ill-conditioned problems, we propose to accelerate by pre-conditioning the gradient step that is sensitive to $\kappa$. 
% Although the per-iteration cost in Alt-PGD-Min is small due to the compact formulas of $\boldsymbol{\mathcal U}, \boldsymbol{\mathcal V}$, the iteration complexity $T = \mathcal O(\kappa^2 \log \frac{1}{\epsilon})$ of Alt-PGD-Min scales quadratically with respect to the $\kappa$ of the $\boldsymbol{\mathcal X}^\star$. This quadratic dependence indicates that Alt-PGD-Min is still computationally expensive, especially for ill-conditioned $\boldsymbol{\mathcal X}^\star$. Thus, we propose an accelerating algorithm dubbed Alt-ScalePGD-Min, which can be viewed as the projected preconditioned gradient descent. To be more precise,
Recall \eqref{ori-U} and let $\boldsymbol b_{t,i} := \boldsymbol H_{t,i}^c \boldsymbol v_{t,i} - \boldsymbol y_i, i \in [n_2]$ and $\boldsymbol{\mathcal T}_{t}(:,i,:) := \sum_{j=1}^{m_c} (\boldsymbol b_{t,i})_j \cdot \boldsymbol{\mathcal A}_i(:,j,:)$, we rewrite the updating step as 
\begin{align}
\hat{\boldsymbol{\mathcal U}}_{t+1} = \boldsymbol{\mathcal U}_t - \eta \boldsymbol{\mathcal T}_{t} * \boldsymbol{\mathcal V}^c_t. \label{com-gd}
\end{align}
Comparing to the  gradient step \eqref{com-gd}, the key difference of \textsf{Alt-ScalePGD-Min} is that it preconditions the search direction of $\boldsymbol{\mathcal U}_t$ by inverse of $\boldsymbol{\mathcal V}_t* \boldsymbol{\mathcal V}_t^c$, i.e.,
\begin{align}
\hat{\boldsymbol{\mathcal U}}_{t+1} = \boldsymbol{\mathcal U}_t - \eta \boldsymbol{\mathcal T}_t*\boldsymbol{\mathcal V}_t^c*\left( \boldsymbol{\mathcal V}_t* \boldsymbol{\mathcal V}_t^c \right)^{-1}. \label{U_inverse}
\end{align}
%Intuitively, the above scaling is analogous to the preconditioner in second-order quasi-Newton algorithms, aiming to enhance the quality of search direction and enable larger step sizes. 
Note that the inverse in \eqref{U_inverse} is easy to compute because 
%Due to the high computational cost of calculating the Hessian, it is essential to design computationally efficient preconditioners with theoretical guarantees. Scaled GD meets these requirements by inverting 
%
the related tensor has a size of $r\times r \times n_3$, significantly smaller than the dimension of the tensor factors. Thus, each iteration of scaled GD incurs minor additional complexity $\mathcal O \left(  n_1 r^2 n_3 + r^3 n_3 \right)$ than the GD of \textsf{Alt-PGD-Min} in \eqref{com-gd}. %Additionally, these preconditioners are adaptive to iterations. %

% We note that although scaling the GD step has been utilized in some existing works \cite{tong2022scaling,tong2022scalings}, its effectiveness in our local  TCS model and \textsf{Alt-PGD-Min} is unexplored.
The convergence of \textsf{Alt-ScalePGD-Min} is given in the following theorem.
% \begin{theorem}\label{thm5}
% Consider the TCS model \ref{TCS_2} with Assumption \ref{assum}, if the initialization $\boldsymbol{\mathcal U}_0$ also satisfies the condition in \eqref{init}, $\eta = \frac{c_\eta}{m}$ with $c_\eta \leq 0.8$, then there is a linear convergence rate for $\text{Dis}\left(\boldsymbol{\mathcal U}_t, \boldsymbol{\mathcal U}^\star \right)$ by Alt-ScalePGD-Min as
% \begin{align}
% \text{Dis}\left(\boldsymbol{\mathcal U}_t, \boldsymbol{\mathcal U}^\star \right) \leq \left( 1 - 0.77 c_\eta \right)^t \cdot \text{Dis}\left(\boldsymbol{\mathcal U}_0, \boldsymbol{\mathcal U}^\star \right)
% \end{align}
% with at least the same probability as \eqref{GD-whp}.
% \end{theorem}

\begin{theorem}\label{thm5}
Consider the TCS model \eqref{TCS_2} under Assumption \ref{assum} and~\ref{assump:gaussian-measure}. Let the step size $\eta= \frac{c_\eta}{m_c}$ and all other parameters are set the same as Theorem \ref{thm4}, then the iterates generated by  \textsf{Alt-ScalePGD-Min} satisfy
\begin{align}
\text{Dis}\left(\boldsymbol{\mathcal U}_t, \boldsymbol{\mathcal U}^\star \right) \leq \left( 1 - 0.89 c_\eta \right)^t \cdot \text{Dis}\left(\boldsymbol{\mathcal U}_0, \boldsymbol{\mathcal U}^\star \right)
\end{align}
with at least the same probability as \eqref{GD-whp}.
\end{theorem}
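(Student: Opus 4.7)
The plan is to follow the same two-stage template as the proof of Theorem~\ref{thm4} (one-step contraction $+$ induction along the trajectory) but to isolate the one mechanism by which preconditioning eliminates the $\kappa^{-2}$ factor. First I would write the pre-QR iterate explicitly as
\begin{align*}
\hat{\boldsymbol{\mathcal U}}_{t+1} = \boldsymbol{\mathcal U}_t - \eta\,\boldsymbol{\mathcal T}_t * \boldsymbol{\mathcal V}_t^c * (\boldsymbol{\mathcal V}_t * \boldsymbol{\mathcal V}_t^c)^{-1}.
\end{align*}
Since each $\boldsymbol{\mathcal A}_i$ has i.i.d.\ standard Gaussian entries, a direct expectation computation gives $\mathbb{E}[\boldsymbol{\mathcal T}_t]= m_c\,(\boldsymbol{\mathcal U}_t*\boldsymbol{\mathcal V}_t-\boldsymbol{\mathcal X}^\star)$. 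Plugging $\boldsymbol{\mathcal X}^\star=\boldsymbol{\mathcal U}^\star*\boldsymbol{\mathcal Z}^\star$ and the choice $\eta=c_\eta/m_c$, the ``ideal'' (population) update is
\begin{align*}
\hat{\boldsymbol{\mathcal U}}_{t+1}^{\mathrm{ideal}}=(1-c_\eta)\boldsymbol{\mathcal U}_t+c_\eta\,\boldsymbol{\mathcal U}^\star*\boldsymbol{\mathcal Z}^\star*\boldsymbol{\mathcal V}_t^c*(\boldsymbol{\mathcal V}_t*\boldsymbol{\mathcal V}_t^c)^{-1}.
\end{align*}
The key structural fact is that the second summand lies entirely in the column span of $\boldsymbol{\mathcal U}^\star$, so the orthogonal projector $\boldsymbol{\mathcal I}-\boldsymbol{\mathcal U}^\star*(\boldsymbol{\mathcal U}^\star)^c$ annihilates it; this is precisely why the $\boldsymbol{\mathcal Z}^\star$ and $(\boldsymbol{\mathcal V}_t*\boldsymbol{\mathcal V}_t^c)^{-1}$ scaling factors (which would otherwise inject the $\kappa^2$ penalty) disappear from the leading contraction term.

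Next, using the symmetry of principal angles for equal-dimensional subspaces and the QR factorization $\hat{\boldsymbol{\mathcal U}}_{t+1}=\hat{\boldsymbol{\mathcal Q}}_{t+1}*\hat{\boldsymbol{\mathcal R}}_{t+1}$, I would upper bound
\begin{align*}
\text{Dis}(\boldsymbol{\mathcal U}_{t+1},\boldsymbol{\mathcal U}^\star)\le \bigl\|(\boldsymbol{\mathcal I}-\boldsymbol{\mathcal U}^\star*(\boldsymbol{\mathcal U}^\star)^c)*\hat{\boldsymbol{\mathcal U}}_{t+1}\bigr\|\cdot\|\hat{\boldsymbol{\mathcal R}}_{t+1}^{-1}\|.
\end{align*}
Splitting $\hat{\boldsymbol{\mathcal U}}_{t+1}=\hat{\boldsymbol{\mathcal U}}_{t+1}^{\mathrm{ideal}}+\boldsymbol{\mathcal E}_t$ with $\boldsymbol{\mathcal E}_t:=\eta(\boldsymbol{\mathcal T}_t-\mathbb{E}\boldsymbol{\mathcal T}_t)*\boldsymbol{\mathcal V}_t^c*(\boldsymbol{\mathcal V}_t*\boldsymbol{\mathcal V}_t^c)^{-1}$, the projection kills the $\boldsymbol{\mathcal U}^\star$-aligned part of the ideal term and leaves the clean one-step bound
\begin{align*}
\bigl\|(\boldsymbol{\mathcal I}-\boldsymbol{\mathcal U}^\star*(\boldsymbol{\mathcal U}^\star)^c)*\hat{\boldsymbol{\mathcal U}}_{t+1}\bigr\|\le (1-c_\eta)\,\text{Dis}(\boldsymbol{\mathcal U}_t,\boldsymbol{\mathcal U}^\star)+\|\boldsymbol{\mathcal E}_t\|,
\end{align*}
which is $\kappa$-independent, in contrast to Theorem~\ref{thm4}.

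Finally, the probabilistic part reuses the same concentration machinery as Theorem~\ref{thm4}: bounding $\|(\boldsymbol{\mathcal T}_t-\mathbb{E}\boldsymbol{\mathcal T}_t)*\boldsymbol{\mathcal V}_t^c\|$ via sub-exponential/Bernstein concentration over the i.i.d.\ Gaussian sensing matrices, combined with a control of $\|(\boldsymbol{\mathcal V}_t*\boldsymbol{\mathcal V}_t^c)^{-1}\|$ provided by the exact-minimization step; this gives $\|\boldsymbol{\mathcal E}_t\|$ small enough (say $\le 0.05\,c_\eta\,\text{Dis}(\boldsymbol{\mathcal U}_t,\boldsymbol{\mathcal U}^\star)$) on the same event of probability \eqref{GD-whp}. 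A parallel argument lower bounds $\sigma_{\min}(\hat{\boldsymbol{\mathcal U}}_{t+1})$ (hence upper bounds $\|\hat{\boldsymbol{\mathcal R}}_{t+1}^{-1}\|$) by noting that $\hat{\boldsymbol{\mathcal U}}_{t+1}^{\mathrm{ideal}}$ is a near-convex combination of two near-orthogonal tensors and $\boldsymbol{\mathcal E}_t$ is small, so $\|\hat{\boldsymbol{\mathcal R}}_{t+1}^{-1}\|\le 1+\mathcal{O}(c_\eta)\,\text{Dis}(\boldsymbol{\mathcal U}_t,\boldsymbol{\mathcal U}^\star)$. Combining everything and absorbing $\mathcal{O}(c_\eta)$ slack produces the stated $(1-0.89\,c_\eta)$ rate.

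The hardest part will be the propagation of the preconditioner $(\boldsymbol{\mathcal V}_t*\boldsymbol{\mathcal V}_t^c)^{-1}$ through the noise term: because this inverse is sensitive to $\sigma_{\min}(\boldsymbol{\mathcal V}_t)$, one must argue inductively that the exact-minimization $\boldsymbol{\mathcal V}$-update keeps $\sigma_{\min}(\boldsymbol{\mathcal V}_t)\gtrsim \sigma_r(\boldsymbol{\mathcal X}^\star)$ uniformly in $t$, using the incoherence Assumption~\ref{assum} and the inductive smallness of $\text{Dis}(\boldsymbol{\mathcal U}_t,\boldsymbol{\mathcal U}^\star)$. The seemingly dangerous factor $1/\sigma_r^2(\boldsymbol{\mathcal V}_t)$ that appears in $\|\boldsymbol{\mathcal E}_t\|$ is exactly the $\|\boldsymbol{\mathcal X}^\star\|^2$ factor that was absorbed in Theorem~\ref{thm4}'s step size; here it is reinstated through $\eta=c_\eta/m_c$, which is precisely why the same high-probability event supports both theorems.
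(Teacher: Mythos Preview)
Your plan is essentially the paper's proof: split the preconditioned update into population plus deviation, observe that the preconditioner collapses $\boldsymbol{\mathcal V}_t*\boldsymbol{\mathcal V}_t^c*(\boldsymbol{\mathcal V}_t*\boldsymbol{\mathcal V}_t^c)^{-1}=\boldsymbol{\mathcal I}$ so that the projected population part contracts by exactly $(1-c_\eta)$, control the deviation via the same sub-exponential/$\epsilon$-net concentration as Theorem~\ref{thm4}, and bound $\|\hat{\boldsymbol{\mathcal R}}_{t+1}^{-1}\|$ to close the recursion. The inductive control of $\sigma_{\min}(\boldsymbol{\mathcal V}_t)$ through Lemma~\ref{lemma8}(5) is also exactly what the paper uses.

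One step does not work as written: your proposed lower bound on $\sigma_{\min}(\hat{\boldsymbol{\mathcal U}}_{t+1})$ via the ``near-convex combination of two near-orthogonal tensors'' heuristic. The second summand $c_\eta\,\boldsymbol{\mathcal U}^\star*M$ with $M=\boldsymbol{\mathcal Z}^\star*\boldsymbol{\mathcal V}_t^c*(\boldsymbol{\mathcal V}_t*\boldsymbol{\mathcal V}_t^c)^{-1}$ is neither small nor near-orthogonal in general (indeed $\|M\|$ can be of order $\kappa^2$), so a convex-combination argument gives no useful singular-value control. The paper instead stays with the undecomposed form $\hat{\boldsymbol{\mathcal U}}_{t+1}=\boldsymbol{\mathcal U}_t-\eta\nabla^s_{\boldsymbol{\mathcal U}_t}f$ and applies Weyl exactly as in Theorem~\ref{thm4}: $\sigma_{\min}(\hat{\boldsymbol{\mathcal U}}_{t+1})\ge 1-\eta\|\nabla^s_{\boldsymbol{\mathcal U}_t}f\|$, then bounds $\|\nabla^s_{\boldsymbol{\mathcal U}_t}f\|\le \|\mathbb E[\nabla^s f]\|+\|\nabla^s f-\mathbb E[\nabla^s f]\|$. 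The population term carries a $\sqrt{r}\,\kappa^2$ factor (Lemma~\ref{lemma11}(2)), but this is absorbed by the inductive bound $\text{Dis}(\boldsymbol{\mathcal U}_t,\boldsymbol{\mathcal U}^\star)\le 0.016/(\sqrt{r}\kappa^2)$, yielding your target $\|\hat{\boldsymbol{\mathcal R}}_{t+1}^{-1}\|\le 1+O(c_\eta)\,\text{Dis}$. Swap in this Weyl step and the rest of your outline goes through verbatim.
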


\begin{corollary}\label{coro4}
In the same setting as Theorem \ref{thm5}, if $\eta = \frac{0.8}{m_c}$ and sample size for each lateral slice $m_c$ satisfies \eqref{GD-Sample}, then to obtain the $\epsilon$-accuracy recovery as described by \eqref{GD-accu}, the iteration complexity for Alt-Scale-GD is 
\begin{align}
T = c_7 \log \frac{1}{\epsilon}.
\end{align}.
\end{corollary}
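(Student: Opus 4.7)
The plan is to read off the iteration count directly from the linear contraction guaranteed by Theorem~\ref{thm5}, combined with the initial subspace bound from Theorem~\ref{thm3}, and then transfer the bound from $\mathrm{Dis}(\boldsymbol{\mathcal U}_t,\boldsymbol{\mathcal U}^\star)$ to the per-slice error on $\boldsymbol{\mathcal X}_t$ using the same lifting identity that appears in Theorem~\ref{thm4}.

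First, I would fix $c_\eta = 0.8$ so that $\eta = 0.8/m_c$, which is admissible under the hypothesis $c_\eta\leq 0.9$ of Theorem~\ref{thm5}. Plugging into the contraction bound of Theorem~\ref{thm5} yields
\begin{equation*}
\mathrm{Dis}(\boldsymbol{\mathcal U}_t,\boldsymbol{\mathcal U}^\star) \leq (1 - 0.89\cdot 0.8)^t\cdot \mathrm{Dis}(\boldsymbol{\mathcal U}_0,\boldsymbol{\mathcal U}^\star) = \rho^t\cdot \mathrm{Dis}(\boldsymbol{\mathcal U}_0,\boldsymbol{\mathcal U}^\star),
\end{equation*}
with a universal contraction factor $\rho = 0.288 < 1$ that, crucially, is \emph{independent of} $\kappa$. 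This is exactly what distinguishes the scaled variant from \textsf{Alt-PGD-Min}, where the analogous factor was $1 - 0.84 c_\eta/\kappa^2$.

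Next, I would invoke Theorem~\ref{thm3} (applicable because the sample size for initialization in Algorithm~\ref{algo1} meets the condition of Corollary~\ref{coro1} under the hypothesis \eqref{GD-Sample}) to obtain the warm start $\mathrm{Dis}(\boldsymbol{\mathcal U}_0,\boldsymbol{\mathcal U}^\star)\leq \tfrac{0.016}{\sqrt r\,\kappa^2}\leq 1$. Requiring $\rho^T\cdot 1 \leq \epsilon$ gives
\begin{equation*}
T \geq \frac{\log(1/\epsilon)}{\log(1/\rho)} = c_7 \log\frac{1}{\epsilon},
\end{equation*}
where $c_7 := 1/\log(1/0.288)$ is an absolute constant. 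This establishes the first half of \eqref{GD-accu}. For the second half, I would reuse the lifting step from Theorem~\ref{thm4}: the bound $\|\boldsymbol{\mathcal X}_t(i)-\boldsymbol{\mathcal X}^\star(i)\|_F \leq 1.4\,\mathrm{Dis}(\boldsymbol{\mathcal U}_t,\boldsymbol{\mathcal U}^\star)\,\|\boldsymbol{\mathcal X}^\star(i)\|_F$ holds slice-wise once $\boldsymbol{\mathcal V}_t(i)$ is the least-squares solution for a given $\boldsymbol{\mathcal U}_t$, which is unchanged between \textsf{Alt-PGD-Min} and \textsf{Alt-ScalePGD-Min}; the preconditioner only alters the $\boldsymbol{\mathcal U}$-update, not the $\boldsymbol{\mathcal V}$-update. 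Combining these two bounds gives \eqref{GD-accu}.

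Finally, the probability bound follows since, per Theorem~\ref{thm5}, \textsf{Alt-ScalePGD-Min} enjoys at least the same high-probability guarantee as \eqref{GD-whp}, which under the sample size condition \eqref{GD-Sample} is at least $1 - 1/(n_1+r)^{10}$ as in Corollary~\ref{coro2}. I do not anticipate any serious obstacle here: the statement is a bookkeeping consequence of Theorem~\ref{thm5} together with Theorem~\ref{thm3}, much like Corollary~\ref{coro2} followed from Theorem~\ref{thm4}. The only mild subtlety is keeping track of the fact that the $\kappa$-dependence is absorbed entirely into the warm-start radius (through $\tfrac{0.016}{\sqrt r\,\kappa^2}$) and the sample complexity, while the contraction factor itself is a pure constant, so no extra $\kappa^2$ multiplier appears in front of $\log(1/\epsilon)$.
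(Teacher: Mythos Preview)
Your proposal is correct and follows exactly the intended bookkeeping: the paper does not give a separate proof of Corollary~\ref{coro4} because it is an immediate consequence of Theorem~\ref{thm5} (for the $\kappa$-free contraction factor), the initialization bound \eqref{init} (assumed via ``same setting as Theorem~\ref{thm5}''), and the slice-wise lifting from Theorem~\ref{thm4}, exactly as you lay out. One small correction: your parenthetical that the initialization sample condition of Corollary~\ref{coro1} is met ``under the hypothesis \eqref{GD-Sample}'' is not right---\eqref{GD-Sample} constrains $m_c$, not $m_0$, and does not imply \eqref{sample-init}; however this does not matter, since the initialization bound \eqref{init} is already part of the hypothesis ``same setting as Theorem~\ref{thm5}'' (via Theorem~\ref{thm4}), so you can simply invoke it directly without appealing to Corollary~\ref{coro1}.
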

\vspace{-5mm}
Theorem~\ref{thm5} and Corollary~\ref{coro4} show \textsf{Alt-ScalePGD-Min} converges linearly at a rate that is \emph{independent of} the condition number $\kappa$, significantly improving over the $\mathcal O\left( \kappa^2 \log \frac 1\epsilon \right)$ complexity of \textsf{Alt-PGD-Min}.

%The following result is directly from Theorem \ref{thm3} and Corollary \ref{coro4}.  
\begin{corollary}
Consider the TCS model \eqref{TCS_2} under Assumption \ref{assum} and~\ref{assump:gaussian-measure}. The total sample complexity $m$ for each lateral slice to achieve $\epsilon$-accuracy recovery as \eqref{GD-accu} with high probability at least $1-\frac{2}{(n_1+r)^{10}}$ is
\begin{align}
mn_2 \gtrsim \kappa^4 \mu^2rn_3 \log n_3 \left( \kappa^4r(n_1+n_2) + n_1 \log \frac{1}{\epsilon}\right) \label{scale-sam}
\end{align}
and $m \gtrsim \max \left\{ \log n_2, r \log n_3 \right\}\log \frac{1} {\epsilon}$.
\end{corollary}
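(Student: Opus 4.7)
The strategy exactly mirrors the derivation of Corollary~\ref{coro3} for \textsf{Alt-PGD-Min}: I combine the initialization sample complexity with the per-iteration sample complexity, scaled by the iteration count, then apply a union bound. The improvement over Corollary~\ref{coro3} comes solely from replacing the $\kappa^2\log(1/\epsilon)$ iteration bound with the $\kappa$-free bound for \textsf{Alt-ScalePGD-Min}.

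\textbf{Step 1: Initialization budget.} I invoke Corollary~\ref{coro1} to guarantee that $\boldsymbol{\mathcal U}_0$ satisfies the warm-start condition $\mathrm{Dis}(\boldsymbol{\mathcal U}_0,\boldsymbol{\mathcal U}^\star)\leq 0.016/(\sqrt r \kappa^2)$ required by Theorem~\ref{thm5}, using $m_0 n_2 \gtrsim \kappa^8\mu^2 r^2 n_3(n_1+n_2)\log n_3$ samples, which succeeds with probability at least $1-1/(n_1+n_3)^{10}$. This contributes the $\kappa^4 r(n_1+n_2)$ term inside the parenthesis of \eqref{scale-sam} (the outer $\kappa^4\mu^2 r n_3\log n_3$ factor multiplied by $\kappa^4 r(n_1+n_2)$ recovers $\kappa^8\mu^2 r^2 n_3(n_1+n_2)\log n_3$).

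\textbf{Step 2: Iterative refinement budget.} Conditioned on the warm start, Corollary~\ref{coro4} says that $T = c_7\log(1/\epsilon)$ iterations suffice, with each iteration needing $m_c n_2 \gtrsim \kappa^4\mu^2 r n_1 n_3 \log n_3$ samples and $m_c \gtrsim \max\{\log n_2, r\log n_3\}$. Multiplying the per-iteration sample size by $T$ yields $T\cdot m_c n_2 \gtrsim \kappa^4\mu^2 r n_1 n_3\log n_3 \cdot \log(1/\epsilon)$, which matches the $n_1\log(1/\epsilon)$ term inside the parenthesis of \eqref{scale-sam} after pulling out the common factor $\kappa^4\mu^2 r n_3\log n_3$. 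The per-iteration scalar bound $m\gtrsim \max\{\log n_2, r\log n_3\}\log(1/\epsilon)$ likewise follows by multiplying by $T$.

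\textbf{Step 3: Combining and union bound.} Because Algorithm~\ref{algo1} uses sample splitting with $2T+1$ independent groups, the total per-slice sample count obeys $m = m_0 + (2T+1) m_c \lesssim m_0 + T m_c$ up to constants. Substituting the two budgets produces the bound in \eqref{scale-sam}. For the probability guarantee, I union-bound the failure event of initialization (probability at most $1/(n_1+n_3)^{10}$ by Corollary~\ref{coro1}) against the failure event of the $T$ iterative refinement steps (each at most the tail in \eqref{GD-whp} by Theorem~\ref{thm5}, summed over $T = c_7\log(1/\epsilon)$ iterations); absorbing the $\log(1/\epsilon)$ factor into the exponents via the sample lower bound \eqref{GD-Sample} yields the stated $1-2/(n_1+r)^{10}$ probability.

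\textbf{Main obstacle.} The arithmetic is essentially bookkeeping, so there is no conceptual difficulty beyond what has already been addressed by Theorem~\ref{thm5} and Corollary~\ref{coro4}. The only subtlety is verifying that the union bound over $T = \Theta(\log(1/\epsilon))$ iterates does not degrade the exponent $(n_1+r)^{-10}$; this requires that the sample size $m_c n_2$ in \eqref{GD-Sample} is slightly inflated so that the exponentially small per-iteration failure probability dominates the $\log(1/\epsilon)$ union bound factor. This is absorbed into the implicit constants of the $\gtrsim$ notation, as is standard in the sample-splitting analyses cited in the paper.
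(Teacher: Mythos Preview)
Your proposal is correct and matches the paper's approach. The paper does not give an explicit proof of this corollary, but the surrounding discussion makes clear it is obtained exactly as you describe: reuse the initialization budget from Corollary~\ref{coro1}, multiply the per-iteration budget \eqref{GD-Sample} by the $\kappa$-free iteration count $T=c_7\log(1/\epsilon)$ from Corollary~\ref{coro4}, and combine via sample splitting and a union bound, precisely paralleling the derivation of Corollary~\ref{coro3}.
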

 Due to the same initialization, the first term in \eqref{total_sam} and \eqref{scale-sam} are the same. However, \textsf{Alt-ScalePGD-Min} improves over \textsf{Alt-PGD-Min} by a factor of $\kappa^2$ in the second term due to improved convergence rate. When the recovery accuracy is sufficiently high such that the second term dominates the first, i,e., $\log \frac{1}{\epsilon} \gtrsim \kappa^4r(1+\frac{n_2}{n_1})$, the total sample complexity of \textsf{Alt-ScalePGD-Min} is $\mathcal O \left( \kappa^4\mu^2rn_1n_3 \log n_3 \log \frac{1}{\epsilon} \right)$, significantly improving upon the $\mathcal O \left( \kappa^6\mu^2rn_1n_3 \log n_3 \log \frac{1}{\epsilon} \right)$ of \textsf{Alt-PGD-Min} for large $\kappa$. % To our knowledge, this is the first result demonstrating Alt-ScalePGD-Min can achieve $\kappa$-independent iteration complexity while simultaneously improving the sample complexity. 

\section{Experiments}
We evaluate our proposed methods on both synthetic and real-world data. Since model \eqref{TCS_2} has not been studied in existing works, we can only compare our method with the low-rank matrix column-wise CS method (LRcCS) \cite{nayer2022fast}, which is closest to our TCS model in application. To compare with this method, we conduct $\text{Unfold}(\boldsymbol{\mathcal X}^\star)$ operation which reshapes each lateral slice into a vector. The performance of algorithms is measured by the relative recovery error $\frac{\left\| \boldsymbol{\mathcal X}_t - \boldsymbol{\mathcal X}^\star \right\|_F}{\left\| \boldsymbol{\mathcal X}^\star \right\|_F}$, which is plotted concerning the iteration number.

% For set values of $n_1,n_2,n_2,r,m$, we synthetic the growth truth $\boldsymbol{\mathcal X}^\star$ as follows. We generate a same-size random tensor $\boldsymbol{\mathcal X}$, which has i.i.d. standard Gaussian entries and obtain $\overline{\boldsymbol{\mathcal X}} = \text{fft}(\boldsymbol{\mathcal X},[ \; ],3)$. Then we conduct SVD for each frontal slice $\overline{\boldsymbol X}^{(i)} = \overline{\boldsymbol U}^{(i)} \overline{\boldsymbol S}^{(i)}\overline{\boldsymbol V}^{(i)}$ and set first $r$ singular values to be linearly distributed from 1 to $\frac{1}{\kappa}$ and other singular values to be zero in each $\overline{\boldsymbol S}^{(i)}$. We construct $\overline{\boldsymbol{X}^\star}^{(i)} = \overline{\boldsymbol U}^{(i)} \overline{\boldsymbol S}^{(i)}\overline{\boldsymbol V}^{(i)}$ and obtain $\boldsymbol{\mathcal X}^\star = \text{ifft}(\overline{\boldsymbol X}^\star,[],3)$, which has tubal rank $r$ and tensor condition number $\kappa$. For each of the 20 Monte Carlo trials, each sensing tensor $\boldsymbol{\mathcal A}_i$ has i.i.d. standard Gaussian entries. The local measurements $\boldsymbol Y$ are generated from \eqref{TCS_2}. 

\subsection{Synthetic Data}
We generate a synthetic tensor with $n_1 = n_3 = 20, n_2 = 400, r=4$, of which the details are in the Appendix. The sample sizes are  $m_0=200$ and $m_c=100$ without sample splitting. We test three algorithms with the same step size and sample sizes under different $\kappa=1,2,4$. The results are plotted in Figure \ref{fig1}, which shows that both our proposed methods have linear convergence rates while LRcCS fails in all cases. The convergence rate of Alt-PGD-Min becomes slower with increasing $\kappa$  while  Alt-ScalePGD-Min converges with independence on $\kappa$, with all curves overlaying on each other. 
\begin{figure}[htbp]
 \centering
\includegraphics[width=6cm,height=4cm]{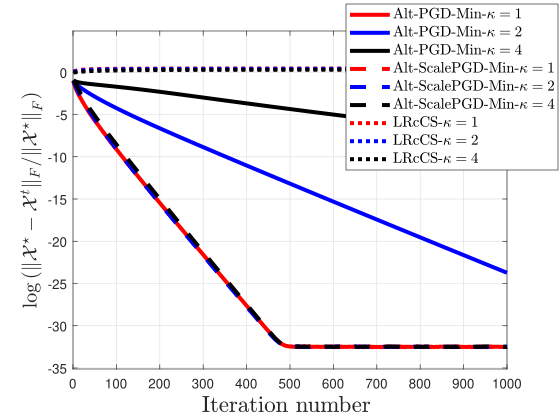}
\caption{Spectral initialization with $m_0=200,m_c=100$.}\label{fig1}
\end{figure}

In the second setting, the data generation is the same as the first. However, we validate the performance under random initialization, which has i.i.d. standard Gaussian entries. Without good initialization, we increase the $m_c$ slightly to $m_c=120$. The results depicted in Figure \ref{fig2} show that both of the proposed algorithms converge with linear rates after a small number of iterations 
during the initial phase while LRcCS still does not work. With larger $\kappa$, Alt-PGD-Min slows down significantly while the convergence speed of Alt-ScalePGD-Min remains almost the same with almost negligible initial phases.

\begin{figure}[htbp]
 \centering
\includegraphics[width=6cm,height=4cm]{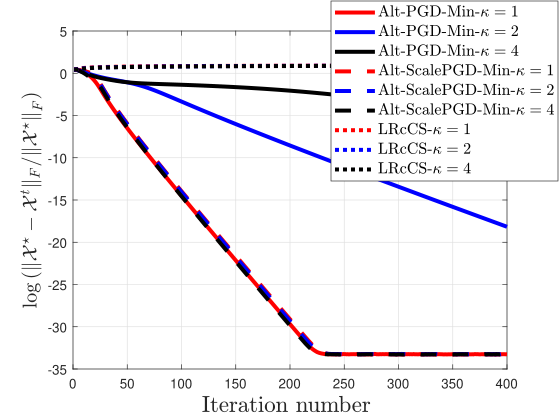}
\caption{Random initialization with $m_c=120$.}\label{fig2}
\end{figure}
\subsection{Video Compressed Sensing}
We test the proposed TCS model \eqref{TCS_2} in the plane video sequence (only approximately low tubal rank) that has been used in previous work such as \cite{nayer2019phaseless}. It has $105$ frames and each frame has been resized into $48\times 64$. We use the same samples with size $m=1600$ for initialization and iteration. We set $r=10$ for three methods. For each method, we tune the step size to guarantee it converges and achieves performance as good as possible. The visual and quantitative comparison results are shown in Figure \ref{fig3} and Figure \ref{fig4}, respectively. 
\begin{figure}[htbp]
%%\begin{tikzpicture}
\subfigure[ \scriptsize Original]{
\begin{minipage}[t]{0.5\linewidth}
\centering
\includegraphics[scale=0.85]{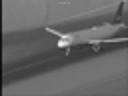}
%\caption{fig1}
\end{minipage}%
}%
\subfigure[\scriptsize LRcCS (PSNR = 33.69)]{
\begin{minipage}[t]{0.5\linewidth}
\centering
\includegraphics[scale=0.85]{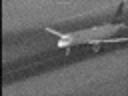}
%\caption{fig1}
\end{minipage}%
}%

\subfigure[\scriptsize Alt-PGD-Min (PSNR = 30.55)]{
\begin{minipage}[t]{0.5\linewidth}
\centering
\includegraphics[scale=0.85]{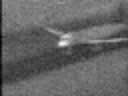}
%\caption{fig1}
\end{minipage}%
}%
\subfigure[ \scriptsize Alt-ScalePGD-Min (PSNR = 41.18)]{
\begin{minipage}[t]{0.5\linewidth}
\centering
\includegraphics[scale=0.85]{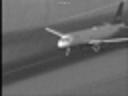}
%\caption{fig1}
\end{minipage}%
}%
\caption{Visualization of frame-7 in recovered videos.}
\label{fig3}
\end{figure}
Alt-ScalePGD-Min has the fastest convergence rate with the best recovery performance (Figure \ref{fig3}. (a) selected at 20-th iteration). Both Alt-PGD-Min and LRcCS converge very slowly with unsatisfied performance (Figure \ref{fig3}. (c) and (d) are selected by running 8000 iterations). This is because the video has very large matrix and tensor condition numbers that result in slow convergence rates.

We can observe that while Alt-PGD-Min outperforms LRcCS in synthetic data experiments, it performs worse in video compressive sensing. This discrepancy is because the synthetic data is generated to be exactly low-tubal-rank with a controlled tensor condition number. In contrast, the video data is only approximately low tubal rank and has a significantly larger tensor condition number compared to its reshaped matrix condition number. As a result, given the same number of iterations, Alt-PGD-Min performs worse than LRcCS in the video CS scenario.

\begin{figure}[htbp]
%%\begin{tikzpicture}
\subfigure[ PSNR for each frame ]{
\begin{minipage}[t]{0.47\linewidth}
\centering
\includegraphics[scale=0.28]{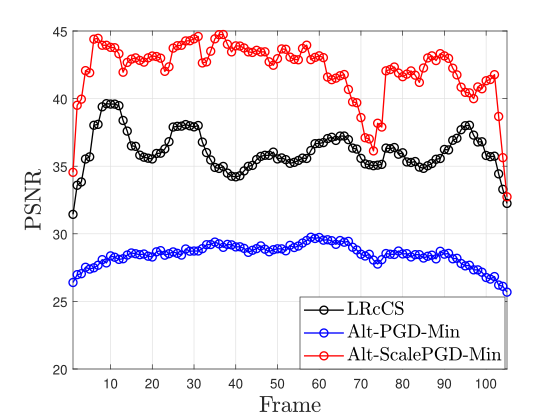}
%\caption{fig1}
\end{minipage}%
}%
\subfigure[Convergence comparison]{
\begin{minipage}[t]{0.47\linewidth}
\centering
\includegraphics[scale=0.28]{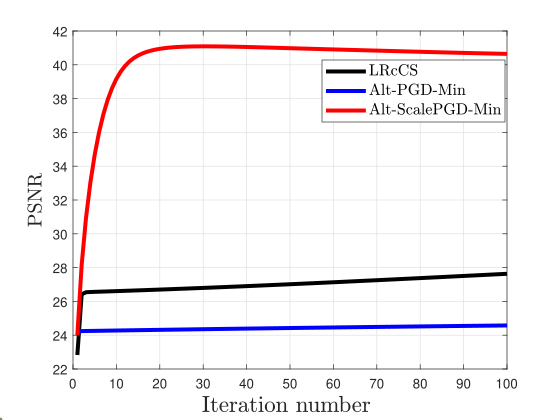}
%\caption{fig1}
\end{minipage}%
} 
\caption{Quantitative comparison}
\label{fig4}
\end{figure}

% \begin{figure}[htbp]
%  \centering
% \includegraphics[scale=0.4]{psnr_frame.eps}
% \caption{Relative recovery errors for each frame.}\label{fig4}
% \end{figure}

\section{Conclusion}
We introduced a novel local TCS model for low-tubal-rank tensors and developed two algorithms to solve this model with theoretical guarantees. The numerical results demonstrated the effectiveness of our method. There are several problems worth studying in the future. For instance, it would be valuable to investigate whether the current theoretical guarantees can be established without the sample-splitting technique. Additionally, exploring the possibility of achieving global convergence under random initialization, without spectral initialization, remains challenging. Lastly, improving the complexity dependence on the tensor condition number is another interesting direction for future work. 

\bigskip

\section*{Acknowledgments}
This work was partially supported by the Youth Program 62106211 of the National Natural Science Foundation of China.

\bibliography{aaai25}

\begin{thebibliography}{61}
\providecommand{\natexlab}[1]{#1}

\bibitem[{Ahmed, Raja, and Bajwa(2020)}]{ahmed2020tensor}
Ahmed, T.; Raja, H.; and Bajwa, W.~U. 2020.
\newblock Tensor regression using low-rank and sparse Tucker decompositions.
\newblock \emph{SIAM Journal on Mathematics of Data Science}, 2(4): 944--966.

\bibitem[{Cai et~al.(2019)Cai, Li, Poor, and Chen}]{cai2019nonconvex}
Cai, C.; Li, G.; Poor, H.~V.; and Chen, Y. 2019.
\newblock Nonconvex low-rank tensor completion from noisy data.
\newblock \emph{Advances in neural information processing systems}, 32.

\bibitem[{Cai, Li, and Xia(2022)}]{cai2022provable}
Cai, J.-F.; Li, J.; and Xia, D. 2022.
\newblock Provable tensor-train format tensor completion by riemannian
  optimization.
\newblock \emph{Journal of Machine Learning Research}, 23(123): 1--77.

\bibitem[{Cand{\`e}s, Romberg, and Tao(2006)}]{candes2006robust}
Cand{\`e}s, E.~J.; Romberg, J.; and Tao, T. 2006.
\newblock Robust uncertainty principles: Exact signal reconstruction from
  highly incomplete frequency information.
\newblock \emph{IEEE Transactions on information theory}, 52(2): 489--509.

\bibitem[{Chen, Raskutti, and Yuan(2019)}]{chen2019non}
Chen, H.; Raskutti, G.; and Yuan, M. 2019.
\newblock Non-convex projected gradient descent for generalized low-rank tensor
  regression.
\newblock \emph{Journal of Machine Learning Research}, 20(5): 1--37.

\bibitem[{Chen and Candes(2015)}]{chen2015solving}
Chen, Y.; and Candes, E. 2015.
\newblock Solving random quadratic systems of equations is nearly as easy as
  solving linear systems.
\newblock \emph{Advances in Neural Information Processing Systems}, 28.

\bibitem[{Ding and Chen(2020)}]{ding2020leave}
Ding, L.; and Chen, Y. 2020.
\newblock Leave-one-out approach for matrix completion: Primal and dual
  analysis.
\newblock \emph{IEEE Transactions on Information Theory}, 66(11): 7274--7301.

\bibitem[{Fan(2022)}]{fan2022multimode}
Fan, J. 2022.
\newblock Multi-Mode Deep Matrix and Tensor Factorization.
\newblock In \emph{International Conference on Learning Representations}.

\bibitem[{Fan et~al.(2023)Fan, Ding, Yang, Zhang, and
  Udell}]{fan2023euclideannorminduced}
Fan, J.; Ding, L.; Yang, C.; Zhang, Z.; and Udell, M. 2023.
\newblock Euclidean-Norm-Induced Schatten-p Quasi-Norm Regularization for
  Low-Rank Tensor Completion and Tensor Robust Principal Component Analysis.
\newblock \emph{Transactions on Machine Learning Research}.

\bibitem[{Gong and Zhang(2024)}]{gong2024non}
Gong, M.; and Zhang, G. 2024.
\newblock Non-local tensor sparse representation and tensor low rank
  regularization for dynamic MRI reconstruction.
\newblock \emph{International Journal of Machine Learning and Cybernetics},
  15(2): 493--503.

\bibitem[{Gu et~al.(2024)Gu, Song, Yin, and Zhang}]{gu2024low}
Gu, Y.; Song, Z.; Yin, J.; and Zhang, L. 2024.
\newblock Low Rank Matrix Completion via Robust Alternating Minimization in
  Nearly Linear Time.
\newblock In \emph{The Twelfth International Conference on Learning
  Representations}.

\bibitem[{Han, Willett, and Zhang(2022)}]{han2022optimal}
Han, R.; Willett, R.; and Zhang, A.~R. 2022.
\newblock An optimal statistical and computational framework for generalized
  tensor estimation.
\newblock \emph{The Annals of Statistics}, 50(1): 1--29.

\bibitem[{Hardt and Wootters(2014)}]{hardt2014fast}
Hardt, M.; and Wootters, M. 2014.
\newblock Fast matrix completion without the condition number.
\newblock In \emph{Conference on learning theory}, 638--678. PMLR.

\bibitem[{Hou et~al.(2021)Hou, Zhang, Qiu, Wang, Wang, and
  Meng}]{hou2021robust}
Hou, J.; Zhang, F.; Qiu, H.; Wang, J.; Wang, Y.; and Meng, D. 2021.
\newblock Robust low-tubal-rank tensor recovery from binary measurements.
\newblock \emph{IEEE Transactions on Pattern Analysis and Machine
  Intelligence}, 44(8): 4355--4373.

\bibitem[{Jain and Netrapalli(2015)}]{jain2015fast}
Jain, P.; and Netrapalli, P. 2015.
\newblock Fast exact matrix completion with finite samples.
\newblock In \emph{Conference on Learning Theory}, 1007--1034. PMLR.

\bibitem[{Jain, Netrapalli, and Sanghavi(2013)}]{jain2013low}
Jain, P.; Netrapalli, P.; and Sanghavi, S. 2013.
\newblock Low-rank matrix completion using alternating minimization.
\newblock In \emph{Proceedings of the forty-fifth annual ACM symposium on
  Theory of computing}, 665--674.

\bibitem[{Kilmer and Martin(2011)}]{kilmer2011factorization}
Kilmer, M.~E.; and Martin, C.~D. 2011.
\newblock Factorization strategies for third-order tensors.
\newblock \emph{Linear Algebra and its Applications}, 435(3): 641--658.

\bibitem[{Kolda and Bader(2009)}]{kolda2009tensor}
Kolda, T.~G.; and Bader, B.~W. 2009.
\newblock Tensor decompositions and applications.
\newblock \emph{SIAM review}, 51(3): 455--500.

\bibitem[{Kumar et~al.(2022)Kumar, Raghunathan, Jones, Ma, and
  Liang}]{kumar2022finetuning}
Kumar, A.; Raghunathan, A.; Jones, R.~M.; Ma, T.; and Liang, P. 2022.
\newblock Fine-Tuning can Distort Pretrained Features and Underperform
  Out-of-Distribution.
\newblock In \emph{International Conference on Learning Representations}.

\bibitem[{Kuzmin et~al.(2024)Kuzmin, Mikhailova, Dyakonov, and
  Straupe}]{kuzmin2024learning}
Kuzmin, S.; Mikhailova, V.; Dyakonov, I.; and Straupe, S. 2024.
\newblock Learning the tensor network model of a quantum state using a few
  single-qubit measurements.
\newblock \emph{Physical Review A}, 109(5): 052616.

\bibitem[{Lee et~al.(2023)Lee, Sharma, Junge, and
  Romberg}]{lee2023approximately}
Lee, K.; Sharma, R.~S.; Junge, M.; and Romberg, J. 2023.
\newblock Approximately low-rank recovery from noisy and local measurements by
  convex program.
\newblock \emph{Information and Inference: A Journal of the IMA}, 12(3):
  1612--1654.

\bibitem[{Li, Ye, and Xu(2017)}]{li2017low}
Li, X.; Ye, Y.; and Xu, X. 2017.
\newblock Low-rank tensor completion with total variation for visual data
  inpainting.
\newblock In \emph{Proceedings of the AAAI Conference on Artificial
  Intelligence}, volume~31.

\bibitem[{Li et~al.(2022)Li, Wang, Zhao, Zhang, and Meng}]{li2022tensor}
Li, Z.; Wang, Y.; Zhao, Q.; Zhang, S.; and Meng, D. 2022.
\newblock A tensor-based online RPCA model for compressive background
  subtraction.
\newblock \emph{IEEE Transactions on Neural Networks and Learning Systems},
  34(12): 10668--10682.

\bibitem[{Liu et~al.(2023{\natexlab{a}})Liu, Hou, Peng, Wang, Meng, and
  Wang}]{liu2023tensor}
Liu, X.; Hou, J.; Peng, J.; Wang, H.; Meng, D.; and Wang, J.
  2023{\natexlab{a}}.
\newblock Tensor compressive sensing fused low-rankness and local-smoothness.
\newblock In \emph{Proceedings of the AAAI Conference on Artificial
  Intelligence}, volume~37, 8879--8887.

\bibitem[{Liu et~al.(2019)Liu, Aeron, Aggarwal, and Wang}]{liu2019low}
Liu, X.-Y.; Aeron, S.; Aggarwal, V.; and Wang, X. 2019.
\newblock Low-tubal-rank tensor completion using alternating minimization.
\newblock \emph{IEEE Transactions on Information Theory}, 66(3): 1714--1737.

\bibitem[{Liu et~al.(2023{\natexlab{b}})Liu, Huang, Han, Wu, Kong, Walid, and
  Wang}]{liu2023real}
Liu, X.-Y.; Huang, Q.; Han, X.; Wu, B.; Kong, L.; Walid, A.; and Wang, X.
  2023{\natexlab{b}}.
\newblock Real-Time Decoding of Snapshot Compressive Imaging Using Tensor
  FISTA-Net.
\newblock \emph{IEEE Transactions on Neural Networks and Learning Systems}.

\bibitem[{Liu et~al.(2024)Liu, Han, Tang, Zhao, and Wang}]{liu2024low}
Liu, Z.; Han, Z.; Tang, Y.; Zhao, X.-L.; and Wang, Y. 2024.
\newblock Low-Tubal-Rank Tensor Recovery via Factorized Gradient Descent.
\newblock \emph{arXiv preprint arXiv:2401.11940}.

\bibitem[{Lu et~al.(2020)Lu, Feng, Chen, Liu, Lin, and Yan}]{lu2020tensor}
Lu, C.; Feng, J.; Chen, Y.; Liu, W.; Lin, Z.; and Yan, S. 2020.
\newblock Tensor Robust Principal Component Analysis with A New Tensor Nuclear
  Norm.
\newblock \emph{IEEE Transactions on Pattern Analysis and Machine Intelligence
  (TPAMI)}.

\bibitem[{Lu et~al.(2018)Lu, Feng, Lin, and Yan}]{lu2018exact}
Lu, C.; Feng, J.; Lin, Z.; and Yan, S. 2018.
\newblock Exact low tubal rank tensor recovery from Gaussian measurements.
\newblock In \emph{Proceedings of the 27th International Joint Conference on
  Artificial Intelligence}, 2504--2510.

\bibitem[{Luo and Zhang(2023)}]{luo2023low}
Luo, Y.; and Zhang, A.~R. 2023.
\newblock Low-rank tensor estimation via riemannian gauss-newton: Statistical
  optimality and second-order convergence.
\newblock \emph{The Journal of Machine Learning Research}, 24(1): 18274--18321.

\bibitem[{Ma et~al.(2019)Ma, Liu, Shou, and Yuan}]{ma2019deep}
Ma, J.; Liu, X.-Y.; Shou, Z.; and Yuan, X. 2019.
\newblock Deep tensor admm-net for snapshot compressive imaging.
\newblock In \emph{Proceedings of the IEEE/CVF International Conference on
  Computer Vision}, 10223--10232.

\bibitem[{Moothedath and Vaswani(2024)}]{moothedath2024decentralized}
Moothedath, S.; and Vaswani, N. 2024.
\newblock Decentralized Low Rank Matrix Recovery from Column-Wise Projections
  by Alternating GD and Minimization.
\newblock In \emph{ICASSP 2024-2024 IEEE International Conference on Acoustics,
  Speech and Signal Processing (ICASSP)}, 12936--12940. IEEE.

\bibitem[{Mu et~al.(2014)Mu, Huang, Wright, and Goldfarb}]{mu2014square}
Mu, C.; Huang, B.; Wright, J.; and Goldfarb, D. 2014.
\newblock Square deal: Lower bounds and improved relaxations for tensor
  recovery.
\newblock In \emph{International conference on machine learning}, 73--81. PMLR.

\bibitem[{Nayer, Narayanamurthy, and Vaswani(2019)}]{nayer2019phaseless}
Nayer, S.; Narayanamurthy, P.; and Vaswani, N. 2019.
\newblock Phaseless PCA: Low-rank matrix recovery from column-wise phaseless
  measurements.
\newblock In \emph{International Conference on Machine Learning}, 4762--4770.
  PMLR.

\bibitem[{Nayer and Vaswani(2022)}]{nayer2022fast}
Nayer, S.; and Vaswani, N. 2022.
\newblock Fast and sample-efficient federated low rank matrix recovery from
  column-wise linear and quadratic projections.
\newblock \emph{IEEE Transactions on Information Theory}, 69(2): 1177--1202.

\bibitem[{Ran et~al.(2020)Ran, Sun, Fei, Su, and Lewenstein}]{ran2020tensor}
Ran, S.-J.; Sun, Z.-Z.; Fei, S.-M.; Su, G.; and Lewenstein, M. 2020.
\newblock Tensor network compressed sensing with unsupervised machine learning.
\newblock \emph{Physical Review Research}, 2(3): 033293.

\bibitem[{Rauhut, Schneider, and Stojanac(2017)}]{rauhut2017low}
Rauhut, H.; Schneider, R.; and Stojanac, {\v{Z}}. 2017.
\newblock Low rank tensor recovery via iterative hard thresholding.
\newblock \emph{Linear Algebra and its Applications}, 523: 220--262.

\bibitem[{Recht, Fazel, and Parrilo(2010)}]{recht2010guaranteed}
Recht, B.; Fazel, M.; and Parrilo, P.~A. 2010.
\newblock Guaranteed minimum-rank solutions of linear matrix equations via
  nuclear norm minimization.
\newblock \emph{SIAM review}, 52(3): 471--501.

\bibitem[{Rojo and Rojo(2004)}]{rojo2004some}
Rojo, O.; and Rojo, H. 2004.
\newblock Some results on symmetric circulant matrices and on symmetric
  centrosymmetric matrices.
\newblock \emph{Linear algebra and its applications}, 392: 211--233.

\bibitem[{Shi et~al.(2013)Shi, Han, Zheng, and Li}]{shi2013guarantees}
Shi, Z.; Han, J.; Zheng, T.; and Li, J. 2013.
\newblock Guarantees of augmented trace norm models in tensor recovery.
\newblock In \emph{Proceedings of the Twenty-Third international joint
  conference on Artificial Intelligence}, 1670--1676.

\bibitem[{Singh and Vaswani(2024)}]{singh2024byzantine}
Singh, A.~P.; and Vaswani, N. 2024.
\newblock Byzantine Resilient and Fast Federated Few-Shot Learning.
\newblock In \emph{Forty-first International Conference on Machine Learning}.

\bibitem[{Srinivasa, Kim, and Lee(2023)}]{srinivasa2023sketching}
Srinivasa, R.~S.; Kim, S.; and Lee, K. 2023.
\newblock Sketching low-rank matrices with a shared column space by convex
  programming.
\newblock \emph{IEEE Journal on Selected Areas in Information Theory}, 4:
  54--60.

\bibitem[{Srinivasa et~al.(2019)Srinivasa, Lee, Junge, and
  Romberg}]{srinivasa2019decentralized}
Srinivasa, R.~S.; Lee, K.; Junge, M.; and Romberg, J. 2019.
\newblock Decentralized sketching of low rank matrices.
\newblock \emph{Advances in Neural Information Processing Systems}, 32.

\bibitem[{Tong et~al.(2022{\natexlab{a}})Tong, Ma, Prater-Bennette, Tripp, and
  Chi}]{tong2022scalings}
Tong, T.; Ma, C.; Prater-Bennette, A.; Tripp, E.; and Chi, Y.
  2022{\natexlab{a}}.
\newblock Scaling and scalability: Provable nonconvex low-rank tensor
  completion.
\newblock In \emph{International Conference on Artificial Intelligence and
  Statistics}, 2607--2617. PMLR.

\bibitem[{Tong et~al.(2022{\natexlab{b}})Tong, Ma, Prater-Bennette, Tripp, and
  Chi}]{tong2022scaling}
Tong, T.; Ma, C.; Prater-Bennette, A.; Tripp, E.; and Chi, Y.
  2022{\natexlab{b}}.
\newblock Scaling and scalability: Provable nonconvex low-rank tensor
  estimation from incomplete measurements.
\newblock \emph{Journal of Machine Learning Research}, 23(163): 1--77.

\bibitem[{Tucker(1966)}]{tucker1966some}
Tucker, L.~R. 1966.
\newblock Some mathematical notes on three-mode factor analysis.
\newblock \emph{Psychometrika}, 31(3): 279--311.

\bibitem[{Vaswani(2024)}]{vaswani2024efficient}
Vaswani, N. 2024.
\newblock Efficient federated low rank matrix recovery via alternating gd and
  minimization: A simple proof.
\newblock \emph{IEEE Transactions on Information Theory}.

\bibitem[{Vershynin(2018)}]{vershynin2018high}
Vershynin, R. 2018.
\newblock \emph{High-dimensional probability: An introduction with applications
  in data science}, volume~47.
\newblock Cambridge university press.

\bibitem[{Wang et~al.(2020)Wang, Li, Jin, and Zhao}]{wang2020robust}
Wang, A.; Li, C.; Jin, Z.; and Zhao, Q. 2020.
\newblock Robust Tensor Decomposition via Orientation Invariant Tubal Nuclear
  Norms.
\newblock In \emph{Proceedings of the AAAI Conference on Artificial
  Intelligence}, volume~34, 6102--6109.

\bibitem[{Wang, Giannakis, and Eldar(2017)}]{wang2017solving}
Wang, G.; Giannakis, G.~B.; and Eldar, Y.~C. 2017.
\newblock Solving systems of random quadratic equations via truncated amplitude
  flow.
\newblock \emph{IEEE Transactions on Information Theory}, 64(2): 773--794.

\bibitem[{Wang et~al.(2021)Wang, Zhang, Wang, Huang, Huang, and
  Liu}]{wang2021generalized}
Wang, H.; Zhang, F.; Wang, J.; Huang, T.; Huang, J.; and Liu, X. 2021.
\newblock Generalized nonconvex approach for low-tubal-rank tensor recovery.
\newblock \emph{IEEE Transactions on Neural Networks and Learning Systems},
  33(8): 3305--3319.

\bibitem[{Wedin(1972)}]{wedin1972perturbation}
Wedin, P.-{\AA}. 1972.
\newblock Perturbation bounds in connection with singular value decomposition.
\newblock \emph{BIT Numerical Mathematics}, 12: 99--111.

\bibitem[{Wu and Fan(2024)}]{wu2024smooth}
Wu, T.; and Fan, J. 2024.
\newblock Smooth Tensor Product for Tensor Completion.
\newblock \emph{IEEE Transactions on Image Processing}.

\bibitem[{Wu et~al.(2022)Wu, Gao, Fan, and Xue}]{wu2022low}
Wu, T.; Gao, B.; Fan, J.; and Xue, J. 2022.
\newblock Low-rank tensor completion based on self-adaptive learnable
  transforms.
\newblock \emph{IEEE Transactions on Neural Networks and Learning Systems}.

\bibitem[{Wu and Sun(2024)}]{wu2024implicit}
Wu, T.; and Sun, Y. 2024.
\newblock Implicit Regularization of Decentralized Gradient Descent for Sparse
  Regression.
\newblock In \emph{The Thirty-eighth Annual Conference on Neural Information
  Processing Systems}.

\bibitem[{Yu et~al.(2014)Yu, Jin, Liu, and Crozier}]{yu2014multidimensional}
Yu, Y.; Jin, J.; Liu, F.; and Crozier, S. 2014.
\newblock Multidimensional compressed sensing MRI using tensor
  decomposition-based sparsifying transform.
\newblock \emph{PloS one}, 9(6): e98441.

\bibitem[{Zhang et~al.(2020{\natexlab{a}})Zhang, Wang, Wang, and
  Xu}]{zhang2020low}
Zhang, F.; Wang, J.; Wang, W.; and Xu, C. 2020{\natexlab{a}}.
\newblock Low-tubal-rank plus sparse tensor recovery with prior subspace
  information.
\newblock \emph{IEEE transactions on pattern analysis and machine
  intelligence}, 43(10): 3492--3507.

\bibitem[{Zhang et~al.(2020{\natexlab{b}})Zhang, Wang, Huang, Wang, and
  Wang}]{zhang2020rip}
Zhang, F.; Wang, W.; Huang, J.; Wang, J.; and Wang, Y. 2020{\natexlab{b}}.
\newblock RIP-based performance guarantee for low-tubal-rank tensor recovery.
\newblock \emph{Journal of Computational and Applied Mathematics}, 374: 112767.

\bibitem[{Zhang and Aeron(2016)}]{zhang2016exact}
Zhang, Z.; and Aeron, S. 2016.
\newblock Exact tensor completion using t-SVD.
\newblock \emph{IEEE Transactions on Signal Processing}, 65(6): 1511--1526.

\bibitem[{Zhang et~al.(2014)Zhang, Ely, Aeron, Hao, and
  Kilmer}]{zhang2014novel}
Zhang, Z.; Ely, G.; Aeron, S.; Hao, N.; and Kilmer, M. 2014.
\newblock Novel methods for multilinear data completion and de-noising based on
  tensor-SVD.
\newblock In \emph{Proceedings of the IEEE conference on computer vision and
  pattern recognition}, 3842--3849.

\bibitem[{Zhou et~al.(2017)Zhou, Lu, Lin, and Zhang}]{zhou2017tensor}
Zhou, P.; Lu, C.; Lin, Z.; and Zhang, C. 2017.
\newblock Tensor factorization for low-rank tensor completion.
\newblock \emph{IEEE Transactions on Image Processing}, 27(3): 1152--1163.

\end{thebibliography}

% \bigskip

\onecolumn
\appendix
\begin{center}
    \huge \textbf{Supplementary Materials}
\end{center}

The first part of this supplementary material gives more detailed discussions of our results. Then we provide proofs of the main theorems in the main paper. Specifically, Section \ref{sec1}, Section \ref{sec2} and Section \ref{sec3} give the proofs of Theorem 3, Theorem 4, and Theorem 5, respectively. The technical lemmas are provided in Section \ref{sec4}. Finally, we describe synthetic data generation in Section \ref{sec5} and give more experiments on video data and MRI data to evaluate the effectiveness of proposed methods on the introduced local tensor compressed sensing model.

\section{Discussion of our results}
\subsection{More comparisons with related works}
Because the local TCS for the low tubal rank tensor is a new model and has not been investigated yet, we can only compare our work with existing global low tubal rank tensor compressed sensing results. We compare with works TNN \cite{lu2018exact}, IR-t-TNN \cite{wang2021generalized}, and FGD \cite{liu2024low}. TNN is the convex method that applies the ADMM method to minimize the tensor nuclear norm to achieve recovery. IR-t-TNN utilizes a non-convex penalty and proposes iteratively reweighted t-TNN to solve. FGD uses a similar non-convex low tubal rank decomposition as ours and uses the gradient descent method to obtain the tensor factor. However, then only consider the symmetric positive semi-definite (T-PSD) ground truth tensor $\boldsymbol{\mathcal X}^\star = \boldsymbol{\mathcal F}*\boldsymbol{\mathcal F}^\star$, while we consider then general ground truth tensor with asymmetric decomposition as $\boldsymbol{\mathcal X}^\star = \boldsymbol{\mathcal U}*\boldsymbol{\mathcal V}^\star$. In addition, compared with these works, our method is the first to achieve the tensor condition number independent linear convergence rate.   

Specifically, we compare that for achieving $\epsilon$-accuracy recovery, the convergence rate, the dominant computational complexity per iteration, and the total sample complexity of these works and our methods in Table \ref{tabc}.
\begin{table}[h!] 
    \centering
    \caption{Comparison of t-SVD based methods for TCS}
    \resizebox{\textwidth}{!}{ % Resize table to fit within text width
    \begin{tabular}{lcccccc}
        \toprule
        \textbf{Methods} & \textbf{Measurement} & \textbf{Convexity} & \textbf{Convergence Rate} & \textbf{Computational complexity} & \textbf{Sample Complexity (number of sensing tensor)} \\
        \midrule
        TNN [1]         & global & convex  & sub-linear & $\mathcal O\left(n^3n_3+n^2n_3\log n_3 \right)$ & $\mathcal O\left(r(2n-r)n_3 \right)$   \\
        IR-t-TNN [2]    & global & non-convex & sub-linear & $\mathcal O\left(n^3n_3+n^2n_3\log n_3 \right)$ & $\times$ \\
        FGD [3]         & global & non-convex & $\kappa \log \frac{1}{\epsilon}$ & $\mathcal{O}\left( rn^2n_3 + rnn_3 \log n_3 \right)$ & $\times$ \\
         \textbf{Alt-PGD-Min}            & local  & non-convex & $\kappa^2 \log \frac{1}{\epsilon}$ & $\mathcal{O} \left( \left(n^2n_3^2r^2+nn_3^3r^3\right)\log n_3 \right)$ & $\mathcal O\left( \kappa^6rn_3\log n_3 \left( \kappa^2r\left(n_1 + n_2 \right) + n_1 \log \frac{1}{\epsilon}\right) \right)$ \\
         \textbf{Alt-ScalePGD-Min}            & local  & non-convex & $ \log \frac{1}{\epsilon}$ & $\mathcal{O} \left( \left(n^2n_3^2r^2+nn_3^3r^3\right)\log n_3 \right)$ & $\mathcal O\left( \kappa^4 r n_3\log n_3 \left( \kappa^2r\left(n_1 + n_2 \right) + n_1 \log \frac{1}{\epsilon}\right) \right)$ \\
        \bottomrule
    \end{tabular}
    }
\label{tabc}
\end{table}

It can be observed that only our work considers the low tubal rank tensor recovery from local measurements. In addition, our method is also the unique work that can achieve a linear convergence rate and simultaneously provide the sample complexity guarantee. For the sample complexity, if the number of lateral slices is sufficiently large such that $n_2 \gg n_1$, then our sample complexity becomes $\mathcal O(r^2n_3\log n_3)$ which is much smaller than $\mathcal O (r(n_1+n_2-r)n_3)$ of TNN under TCS.

\subsection{Intuition of sample complexity under local TCS}
Based on our local measurement sensing model, each lateral slice of ground truth tensor $\boldsymbol{\mathcal X}^\star \in \mathbb R^{n_1\times n_2 \times n_3}$ can be represented by $ \boldsymbol{\mathcal X}^\star(:,i,:) = \boldsymbol{\mathcal U}^\star * \boldsymbol{\mathcal V}^\star(:,i,:)$ where $\boldsymbol{\mathcal U}^\star \in \mathbb R^{n_1 \times r \times n_3},  \boldsymbol{\mathcal V}^\star(:,i,:) \in \mathbb R^{r \times 1\times n_3}$.  
The degree of freedom for learning $\boldsymbol{\mathcal U}^\star$ is $n_1n_3r$, which is shared by all $n_2$ lateral slices. Based on our uniform Assumption 1, the average samples for each lateral slice that can guarantee learning the $\boldsymbol{\mathcal U}^\star$ simultaneously is at least order $\mathcal O\left(\frac{n_1n_3r}{n_2} \right)$ which has a linear speedup with the number of lateral slices $n_2$. In addition, to learn its own $\boldsymbol{\mathcal V}^\star(:,i,:)$ which has a degree of freedom $rn_3$, each lateral slice at least needs $\mathcal O \left( rn_3 \right)$ order samples. Thus, intuitively, the information theoretical total sample complexity for each lateral slice is at least order $\mathcal O\left( \frac{n_1n_3r}{n_2} + rn_3 \right)$. 
Compared with our sample complexity $\mathcal O\left(  \frac{n_1n_3r^2}{n_2} \log n_3  + r^2n_3 \log n_3 \right)$ in (30) of our paper, we only have additional multiplicative term $r\log n_3$, which is very small for considered low
tubal rank tensor.

\section{Proof of Theorem 3}\label{sec1}
The proof of this theorem is based on the t-SVD version of Wedin's $\sin \theta$ theory and the concentration of the initialization obtained by the truncated spectral method around the ground truth.  

 To proceed, we first introduce two lemmas. Lemma \ref{lemma1} provides the expression for the estimate  $\boldsymbol{\mathcal X}_0$ in terms of the conditional expectation $\mathbb E \left[ \boldsymbol{\mathcal X}_0 |\alpha \right]$. This expression shows that $\mathbb E \left[ \boldsymbol{\mathcal X}_0 |\alpha \right]$ can be represented by the ground truth $\boldsymbol{\mathcal X}^\star$, where the coefficients are determined by the truncated threshold.  

\begin{lemma}\label{lemma1}
The initialization tensor $\boldsymbol{\mathcal X}_0$ in Algorithm 1 has property that 
\begin{align}
\mathbb E \left[ \boldsymbol{\mathcal X}_0 | \alpha \right] = \boldsymbol{\mathcal X}^\star * \boldsymbol{\mathcal D}, \label{for_ex_x0}
\end{align}
where the $\boldsymbol{\mathcal D} \in \mathbb R^{n_2 \times n_2 \times n_3}$ is defined as 
\begin{align}
&\boldsymbol{\mathcal D}(:,:,k) = \boldsymbol 0_{n_2}, \forall k = 2,\cdots, n_3\nonumber \\
&\boldsymbol{\mathcal D}(i,i,1) = \mathbb E\left[ \xi^2 \boldsymbol 1_{ \left\{ \left\| \boldsymbol{\mathcal X}^\star(i)\right\|^2_F\xi^2 \leq \alpha \right\}} \right], \forall i \in [n_2],
\end{align}
where the $\xi$ is the standard random Gaussian variable. 
\end{lemma}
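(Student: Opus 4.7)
The strategy is to compute the conditional expectation one lateral slice at a time, exploit the Gaussianity of $\boldsymbol{\mathcal A}_i$ to show the expectation is a scalar multiple of the corresponding lateral slice of $\boldsymbol{\mathcal X}^\star$, and then verify that stacking these per-slice scalings is exactly the t-product $\boldsymbol{\mathcal X}^\star \ast \boldsymbol{\mathcal D}$ for the claimed $\boldsymbol{\mathcal D}$.

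First, I would fix an index $i \in [n_2]$ and vectorize. Let $\boldsymbol{x}_i := \mathrm{vec}(\boldsymbol{\mathcal X}^\star(:,i,:)) \in \mathbb{R}^{n_1 n_3}$ and $\boldsymbol{a}_{ji} := \mathrm{vec}(\boldsymbol{\mathcal A}_i(:,j,:)) \in \mathbb{R}^{n_1 n_3}$, so that $y_{ji} = \langle \boldsymbol{a}_{ji}, \boldsymbol{x}_i\rangle$ and, by Assumption~\ref{assump:gaussian-measure}, $\boldsymbol{a}_{ji} \sim \mathcal{N}(\boldsymbol{0}, \boldsymbol{I}_{n_1 n_3})$. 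From the definition of $\hat{\boldsymbol{\mathcal X}}_0$ in \eqref{init_eq} and the i.i.d. structure in $j$, the conditional expectation of the $i$-th lateral slice reduces (after vectorizing) to $\mathbb{E}\bigl[\,\boldsymbol{a}_{ji}\,\langle \boldsymbol{a}_{ji}, \boldsymbol{x}_i\rangle\,\mathbf{1}_{\{|\langle \boldsymbol{a}_{ji}, \boldsymbol{x}_i\rangle|\le \sqrt{\alpha}\}}\,\big|\,\alpha\bigr]$.

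Next I would decompose $\boldsymbol{a}_{ji} = (\langle \boldsymbol{a}_{ji},\boldsymbol{x}_i\rangle/\|\boldsymbol{x}_i\|^2)\,\boldsymbol{x}_i + \boldsymbol{a}_\perp$, where $\boldsymbol{a}_\perp$ is the projection of $\boldsymbol{a}_{ji}$ onto the orthogonal complement of $\boldsymbol{x}_i$. By rotational invariance of the standard Gaussian, $\boldsymbol{a}_\perp$ is independent of $\langle \boldsymbol{a}_{ji},\boldsymbol{x}_i\rangle$ and has mean zero, so the cross term vanishes and
\begin{equation*}
\mathbb{E}\bigl[\boldsymbol{a}_{ji}\,y_{ji}\,\mathbf{1}_{\{|y_{ji}|\le\sqrt\alpha\}}\,\big|\,\alpha\bigr] = \frac{\boldsymbol{x}_i}{\|\boldsymbol{x}_i\|^2}\,\mathbb{E}\bigl[y_{ji}^2\,\mathbf{1}_{\{|y_{ji}|\le\sqrt\alpha\}}\,\big|\,\alpha\bigr].
\end{equation*}
Writing $y_{ji} = \|\boldsymbol{x}_i\|\,\xi$ with $\xi \sim \mathcal{N}(0,1)$, the scalar factor becomes $\mathbb{E}[\xi^2 \mathbf{1}_{\{\|\boldsymbol{x}_i\|^2 \xi^2 \le \alpha\}}]$, and noting $\|\boldsymbol{x}_i\| = \|\boldsymbol{\mathcal X}^\star(:,i,:)\|_F = \|\boldsymbol{\mathcal X}^\star(i)\|_F$ gives exactly the scalar $\boldsymbol{\mathcal D}(i,i,1)$ in the statement. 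Folding back, this shows $\mathbb{E}[\hat{\boldsymbol{\mathcal X}}_0(:,i,:) \mid \alpha] = \boldsymbol{\mathcal D}(i,i,1)\cdot \boldsymbol{\mathcal X}^\star(:,i,:)$ for every $i$.

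Finally, I would verify that the per-slice scaling is represented exactly by the t-product with the claimed $\boldsymbol{\mathcal D}$. The cleanest route is the Fourier-domain characterization: since only the first frontal slice of $\boldsymbol{\mathcal D}$ is nonzero, its frontal-tube DFT yields $\overline{\boldsymbol{D}}^{(k)} = \boldsymbol{D}^{(1)}$ for every $k\in[n_3]$, so the block-diagonalized form of $\boldsymbol{\mathcal X}^\star \ast \boldsymbol{\mathcal D}$ has $\overline{(\boldsymbol{\mathcal X}^\star\ast\boldsymbol{\mathcal D})}^{(k)} = \overline{\boldsymbol{X}^\star}^{(k)}\,\boldsymbol{D}^{(1)}$. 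Because $\boldsymbol{D}^{(1)}$ is diagonal, right-multiplication scales the $i$-th column of each Fourier slice by $\boldsymbol{\mathcal D}(i,i,1)$; taking the inverse DFT across the third dimension then scales the $i$-th lateral slice of $\boldsymbol{\mathcal X}^\star$ by the same scalar, matching the per-slice formula obtained above. Combining the two displays gives \eqref{for_ex_x0}.

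The only genuinely delicate step is the last one: pinning down that a tensor whose only nonzero frontal slice is a diagonal matrix acts, under the t-product on the right, as a per-lateral-slice scaling. Once this is established (either by the Fourier-domain argument above or directly from the bcirc/Unfold definition, noting that the block-circulant of $\boldsymbol{\mathcal D}$ reduces to a block-diagonal copy of $\boldsymbol{D}^{(1)}$ acting on $\mathrm{Unfold}(\boldsymbol{\mathcal D})$), the remainder is a routine Gaussian computation.
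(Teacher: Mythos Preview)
Your proposal is correct and rests on the same core idea as the paper: use rotational invariance of the Gaussian sensing entries to show that the conditional expectation of each lateral slice is a scalar multiple of $\boldsymbol{\mathcal X}^\star(:,i,:)$, then recognize the collection of per-slice scalings as the t-product with $\boldsymbol{\mathcal D}$. The paper packages the first step in tensor language---writing $\boldsymbol{\mathcal X}^\star(i) = \|\boldsymbol{\mathcal X}^\star(i)\|_F\,\boldsymbol{\mathcal Q}_i * \mathring{\mathfrak{e}}_1$ for an orthogonal tensor $\boldsymbol{\mathcal Q}_i$ and then rotating $\boldsymbol{\mathcal A}_i(j) \mapsto \boldsymbol{\mathcal Q}_i^c * \boldsymbol{\mathcal A}_i(j)$ so that the relevant inner product becomes a single coordinate---whereas you vectorize and use the parallel/perpendicular decomposition of $\boldsymbol a_{ji}$ directly. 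Your route is slightly more elementary and avoids introducing the tensor column basis $\mathring{\mathfrak{e}}_i$; the paper's route stays native to the t-product algebra, which dovetails with the rest of its analysis. For the final identification step the paper simply says ``based on the definition of T-product,'' while your Fourier-domain justification (that $\overline{\boldsymbol D}^{(k)} = \boldsymbol D^{(1)}$ for all $k$ when only the first frontal slice is nonzero, so right-t-product by $\boldsymbol{\mathcal D}$ is a per-lateral-slice scalar) is more explicit and is exactly the content the paper later records as Lemma~\ref{lemm3}.
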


\begin{proof}
We first can rewrite $\boldsymbol{\mathcal X}^\star(i)$ as $\boldsymbol{\mathcal X}^\star(i) = \left\| \boldsymbol{\mathcal X}^\star(i) \right\|_F \boldsymbol{\mathcal Q}_i * \mathring{\mathfrak {e}}_1$, where $\boldsymbol{\mathcal Q}_i \in \mathbb R^{n_1 \times n_1 \times n_3}$ is an orthogonal tensor that the first lateral slice $\boldsymbol{\mathcal Q}{(1)}$ has same direction as $\boldsymbol{\mathcal X}^\star(i)$. $\mathring{\mathfrak {e}}_1$ is the tensor column basis, which is a tensor of size $n_1 \times 1 \times n_3$ with its $(i,1,1)$-th entry equaling 1 and the rest equaling 0 \citep{lu2020tensor}. Then $\forall i \in [n_2]$, based on truncated spectral initialization, we have
\begin{align}
 \mathbb E \left[ \boldsymbol{\mathcal X}_0(i) | \alpha \right] & = \mathbb E \left[ \frac 1 m \sum_{j=1}^m \langle \boldsymbol{\mathcal A}_i(j), \boldsymbol{\mathcal X}^\star(i) \rangle \boldsymbol{\mathcal A}_i(j) \boldsymbol 1_{ \left\{ \ |y_{ji}|\leq \sqrt \alpha \right\} } \Biggm| \alpha \right] \nonumber \\
& \overset{(a)}{=} \mathbb E \left[ \frac 1 m \sum_{j=1}^m  \boldsymbol{\mathcal Q}_i * \boldsymbol{\mathcal Q}_i^c * \boldsymbol{\mathcal A}_i(j) \langle \boldsymbol{\mathcal A}_i(j), \boldsymbol{\mathcal X}^\star(i) \rangle \boldsymbol 1_{ \left\{ \ |y_{ji}|\leq \sqrt \alpha \right\} } \Biggm| \alpha \right] \nonumber \\ 
& \overset{(b)}{=} \mathbb E \left[ \frac 1 m \sum_{j=1}^m  \boldsymbol{\mathcal Q}_i * \boldsymbol{\mathcal Q}_i^c * \boldsymbol{\mathcal A}_i(j) \langle \boldsymbol{\mathcal A}_i(j), \boldsymbol{\mathcal Q}_i * \mathring{\mathfrak {e}}_1 \rangle \left\| \boldsymbol{\mathcal X}^\star(i)\right\|_F \boldsymbol 1_{ \left\{ \left\| \boldsymbol{\mathcal X}^\star(i)\right\|_F | \langle  \boldsymbol{\mathcal A}_i(j), \boldsymbol{\mathcal Q}_i * \mathring{\mathfrak {e}}_1  \rangle | \leq \sqrt \alpha    \right\}} \Biggm | \alpha \right] \nonumber \\
& \overset{(c)}{=} \mathbb E \left[ \frac 1 m \sum_{j=1}^m \boldsymbol{\mathcal Q}_i  * \hat{\boldsymbol{\mathcal A_i}}(j) \langle \hat{\boldsymbol{\mathcal A_i}}(j) , \mathring{\mathfrak {e}}_1\rangle \left\| \boldsymbol{\mathcal X}^\star(i)\right\|_F  \boldsymbol 1_{ \left\{ | \langle \hat{\boldsymbol{\mathcal A_i}}(j) , \mathring{\mathfrak {e}}_1\rangle | \leq \frac {\sqrt \alpha   }{ \left\| \boldsymbol{\mathcal X}^\star(i)\right\|_F  }\right\}} \Biggm | \alpha \right] \nonumber \\
& \overset{(d)}{=} \mathbb E \left[ \frac 1 m \sum_{j=1}^m \boldsymbol{\mathcal Q}_i  * \hat{\boldsymbol{\mathcal A_i}}(j) \hat{\boldsymbol{\mathcal A_i}}(j)(1,j,1) \left\| \boldsymbol{\mathcal X}^\star(i)\right\|_F  \boldsymbol 1_{ \left\{ | \hat{\boldsymbol{\mathcal A_i}}(j)(1,j,1)   | \leq \frac {\sqrt \alpha   }{ \left\| \boldsymbol{\mathcal X}^\star(i)\right\|_F  }\right\}} \Biggm | \alpha \right] \nonumber \\
& \overset{(e)}{=} \mathbb E \left[ \frac 1 m \sum_{j=1}^m \left\| \boldsymbol{\mathcal X}^\star(i)\right\|_F \boldsymbol{\mathcal Q}_i  * \mathring{\mathfrak {e}}_1 \left( \hat{\boldsymbol{\mathcal A_i}}(j)(1,j,1) \right)^2   \boldsymbol 1_{ \left\{ | \hat{\boldsymbol{\mathcal A_i}}(j)(1,j,1)   | \leq \frac {\sqrt \alpha   }{ \left\| \boldsymbol{\mathcal X}^\star(i)\right\|_F  }\right\}} \Biggm |\alpha\right] \nonumber \\
& \overset{(f)}{=}  \boldsymbol{\mathcal X}^\star * \mathbb E\left[ \xi^2 \boldsymbol 1_{ \left\{ \left\| \boldsymbol{\mathcal X}^\star(i)\right\|^2_F\xi^2 \leq \alpha \right\}} \right] \mathring{\mathfrak {e}}_i.
\end{align}
Equality (a) holds because $\boldsymbol{\mathcal Q}_i$ is orthogonal. (b) is due to the fact $\boldsymbol{\mathcal X}^\star(i) = \left\| \boldsymbol{\mathcal X}^\star(i) \right\|_F \boldsymbol{\mathcal Q}_i * \mathring{\mathfrak {e}}_1$. (c) is  because $\boldsymbol{\mathcal Q}_i$ is orthogonal and definition $\hat{\boldsymbol{\mathcal A_i}} = \boldsymbol{\mathcal Q}_i^c * \boldsymbol{\mathcal A_i}$. (d) is due to the definition of basis $\mathring{\mathfrak {e}}_i$. Equality (e) holds because  $\hat{\boldsymbol{\mathcal A_i}}$ has same probability distribution as $\boldsymbol{\mathcal A}_i$ since $\boldsymbol{\mathcal Q}_i$ is orthogonal. (f) is based on the definition of T-product.

\end{proof}

We can now leverage the result above to prove Lemma \ref{lemma2}, which provides an upper bound for $\text{Dis}(\boldsymbol{\mathcal  U_0}, \boldsymbol{\mathcal U}^\star)$ based on t-SVD version of Wedin's $\sin \theta$ theory. This bound primarily depends on the extent to which the estimated initialization concentrates around the ground truth $\boldsymbol{\mathcal X}^\star$.  
\begin{lemma}\label{lemma2}
With the same setting as Theorem 3, initialization orthogonal space $\boldsymbol{\mathcal U}_0$ is close $\boldsymbol{\mathcal U}^\star$ enough as
\begin{align}
\text{Dis}(\boldsymbol{\mathcal  U_0}, \boldsymbol{\mathcal U}^\star)   \leq \sqrt 2 \frac{\left\| \boldsymbol{\mathcal X_0} - \mathbb E \left[ \boldsymbol{\mathcal X}_0 | \alpha\right]\right\|}{ \sigma^\star_{\min} \min_j \mathcal D(j,j,1)  - \left\| \boldsymbol{\mathcal X_0} - \mathbb E \left[ \boldsymbol{\mathcal X}_0 | \alpha\right]\right\| }.
\end{align}

\end{lemma}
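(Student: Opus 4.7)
The plan is to prove this as a tensor $\sin\Theta$ bound, treating $\hat{\boldsymbol{\mathcal X}}_0$ as a noisy observation of its conditional mean and applying a Wedin-type inequality. The key inputs are Lemma 1 for the structure of $\mathbb E[\boldsymbol{\mathcal X}_0|\alpha]$ and the block-diagonal Fourier representation of the t-product, which lets every tensor quantity be analyzed slice-by-slice in the DFT domain and recombined through the tensor spectral norm.

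First I would identify the signal subspace. By Lemma 1, $\mathbb E[\boldsymbol{\mathcal X}_0|\alpha]=\boldsymbol{\mathcal X}^\star*\boldsymbol{\mathcal D}$, where $\boldsymbol{\mathcal D}$ has only its first frontal slice nonzero and that slice is diagonal with strictly positive entries $\mathcal D(j,j,1)$. Since a tube that is zero except in its first entry has a flat DFT, $\overline{\boldsymbol D}^{(k)}=\boldsymbol D^{(1)}$ for every frequency $k\in[n_3]$. Therefore $\overline{\mathbb E[\boldsymbol{\mathcal X}_0|\alpha]}^{(k)}=\overline{\boldsymbol X^\star}^{(k)}\boldsymbol D^{(1)}$, and right-multiplication by a positive diagonal matrix preserves the left singular subspace. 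Hence the top-$r$ left t-SVD subspace of $\mathbb E[\boldsymbol{\mathcal X}_0|\alpha]$ is exactly $\boldsymbol{\mathcal U}^\star$, and moreover $\sigma_r\bigl(\mathbb E[\boldsymbol{\mathcal X}_0|\alpha]\bigr)\geq \sigma^\star_{\min}\min_j\mathcal D(j,j,1)$, since the smallest nonzero singular value at each Fourier frequency inherits this product lower bound.

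Next I would apply Weyl's inequality slice-wise in the Fourier domain to control the singular values of the perturbed tensor. Writing $\boldsymbol{\mathcal E}:=\boldsymbol{\mathcal X}_0-\mathbb E[\boldsymbol{\mathcal X}_0|\alpha]$, one gets $\sigma_{r+1}(\boldsymbol{\mathcal X}_0)\leq\|\boldsymbol{\mathcal E}\|$ (the signal has tubal rank at most $r$, so $\sigma_{r+1}(\mathbb E[\boldsymbol{\mathcal X}_0|\alpha])=0$) together with the companion lower bound on $\sigma_r(\boldsymbol{\mathcal X}_0)$. Invoking the tensor Wedin $\sin\Theta$ theorem (applied frequency-by-frequency and unified through the tensor spectral norm) then yields
\[
\text{Dis}(\boldsymbol{\mathcal U}_0,\boldsymbol{\mathcal U}^\star)\leq\frac{\sqrt 2\,\|\boldsymbol{\mathcal E}\|}{\sigma_r\bigl(\mathbb E[\boldsymbol{\mathcal X}_0|\alpha]\bigr)-\sigma_{r+1}(\boldsymbol{\mathcal X}_0)}\leq\frac{\sqrt 2\,\|\boldsymbol{\mathcal E}\|}{\sigma^\star_{\min}\min_j\mathcal D(j,j,1)-\|\boldsymbol{\mathcal E}\|},
\]
which is precisely the stated bound once one recalls $\boldsymbol{\mathcal E}=\boldsymbol{\mathcal X}_0-\mathbb E[\boldsymbol{\mathcal X}_0|\alpha]$.

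The main obstacle is reconciling the algorithmic definition of $\boldsymbol{\mathcal U}_0$ via the tensor QR factorization with the natural SVD-based analysis underlying Wedin's theorem. Because the principal angle distance only sees the lateral-slice subspace, one needs to verify that the first $r$ lateral slices of $\hat{\boldsymbol{\mathcal Q}}_0$ and the top-$r$ left singular tensor of $\hat{\boldsymbol{\mathcal X}}_0$ span the same subspace modulo an orthogonal right rotation, a point that becomes transparent once the Fourier slices concentrate around their rank-$r$ signal parts. After that subspace identification is in place, the bound above follows by routine manipulation of the Fourier-domain perturbation bounds.
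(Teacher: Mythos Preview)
Your approach is essentially identical to the paper's: both use Lemma~1 to identify the left t-SVD subspace of $\mathbb E[\boldsymbol{\mathcal X}_0\mid\alpha]$ with $\boldsymbol{\mathcal U}^\star$ via the flat-DFT property of $\boldsymbol{\mathcal D}$, lower-bound $\sigma_r$ by $\sigma^\star_{\min}\min_j\mathcal D(j,j,1)$, and then apply Wedin's $\sin\Theta$ theorem slice-by-slice in the Fourier domain before taking the maximum over frequencies to recover the tensor spectral norm. The QR-versus-SVD obstacle you flag is legitimate, but the paper's proof does not address it either---it simply invokes Wedin for $\boldsymbol{\mathcal U}_0$ as though it were the top-$r$ left singular tensor of $\hat{\boldsymbol{\mathcal X}}_0$---so your proposal is at least as careful as the original.
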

\begin{proof}
Based on the Lemma \ref{lemma1}, we have t-SVD of $ \boldsymbol{\mathcal M}^\star : = \mathbb E[ \boldsymbol{\mathcal X}_0|\alpha] $ as 
\begin{align}
  \boldsymbol{\mathcal M}^\star = \boldsymbol{\mathcal X}^\star * \boldsymbol{\mathcal D} = \boldsymbol{\mathcal U}^\star* \boldsymbol{\mathcal Q}*\hat{\boldsymbol{\mathcal S}}* \hat{\boldsymbol{\mathcal V}}, \label{t-svd-expec-x0}
\end{align}  
Where $ \boldsymbol{\mathcal Q}, \hat{\boldsymbol{\mathcal V}}$ are orthogonal tensors. The formula $\boldsymbol{\mathcal U}^\star* \boldsymbol{\mathcal Q}$ is because $\mathbb E[ \boldsymbol{\mathcal X}_0|\alpha]$ and $\boldsymbol{\mathcal X}^\star$ have same tensor column space. Based on \eqref{t-svd-expec-x0}, we can conclude that 
\begin{align}
\hat{\boldsymbol{\mathcal S}} = \boldsymbol{\mathcal Q}^c * \boldsymbol{\mathcal S}^\star* \boldsymbol{\mathcal V}^\star* \boldsymbol{\mathcal D}* \hat{\boldsymbol{\mathcal V}}^c.
\end{align}
Firstly, we can conclude that $\sigma_{r+1} \left( \overline{\boldsymbol M^\star}^{(i)}\right) = 0,\forall i \in [n_3]$ due to $\boldsymbol{\mathcal X}^\star$ has $r$ tubal rank. Next, there is 
\begin{align}
\sigma_r \left( \overline{\boldsymbol M^\star}^{(i)}\right) & \overset{(a)}{=} \sigma_r \left( \overline{\boldsymbol U^\star}^{(i)} \overline{\boldsymbol Q}^{(i)}  \overline{\hat{\boldsymbol S}}^{(i)}   \overline{\hat{\boldsymbol V}}^{(i)}\right) \nonumber \\
 & \overset{(b)}{=} \sigma_r \left( \overline{\hat{\boldsymbol S}}^{(i)} \right) \nonumber \\
 & \overset{(c)}{=} \sigma_{\min} \left(  \overline{\boldsymbol Q^c}^{(i)}  \overline{\boldsymbol S^\star}^{(i)} \overline{\boldsymbol V^\star}^{(i)} \overline{\boldsymbol D}^{(i)}   \overline{ \hat{\boldsymbol V}^c}^{(i)} \right) \nonumber \\
 & \overset{(d)}{\geq} \sigma_{\min}  \left( \overline{\boldsymbol Q^c}^{(i)} \right)  \sigma_{\min}  \left( \overline{\boldsymbol S^\star}^{(i)}  \right)  \sigma_{\min}  \left( \overline{\boldsymbol V^\star}^{(i)} \right)  \sigma_{\min}  \left(  \overline{\boldsymbol D}^{(i)}  \right) \sigma_{\min}  \left(  \overline{ \hat{\boldsymbol V}^c}^{(i)}    \right) \nonumber \\
 & \overset{(e)}{\geq} 1  \cdot \sigma_{\min}^\star \cdot 1 \cdot  \min_i \boldsymbol{\mathcal D}(i,i,1) \cdot 1 \nonumber \\
 & = \sigma_{\min}^\star \min_i \boldsymbol{\mathcal D}(i,i,1) \label{lower-bound}
\end{align}
where (a) is due to FFT property, (b) and (c) are due to orthogonal property. (d) is because Lemma \ref{lemma12}. Inequality (e) is due to Lemma \ref{lemm3}. Based on the definition of $\text{Dis}(\boldsymbol{\mathcal  U_0}, \boldsymbol{\mathcal U}^\star)$, we have
\begin{align}
\text{Dis}(\boldsymbol{\mathcal  U_0}, \boldsymbol{\mathcal U}^\star) & = \left\|   \left( \boldsymbol{\mathcal I} - \boldsymbol{\mathcal U}_0* \boldsymbol{\mathcal U}_0^c \right) * \boldsymbol{\mathcal U}^\star \right\| \nonumber \\
& = \max_i \left\|  \left(  \boldsymbol I - \overline{\boldsymbol U_0}^{(i)} \left( \overline{\boldsymbol U_0}^{(i)}  \right)^c  \right) \overline{\boldsymbol U^\star}^{(i)}  \right\| \nonumber \\
& \overset{(a)}{\leq} \sqrt 2 \max_i \frac{ \max \left\{ \left\|\left( \overline{\boldsymbol X_0}^{(i)} - \mathbb E \left[ \overline{\boldsymbol X_0}^{(i)} \Big|\alpha \right] \right)^c \overline{\boldsymbol U^\star}^{(i)} \right\|, \left\|\left( \overline{\boldsymbol X_0}^{(i)} - \mathbb E \left[ \overline{\boldsymbol X_0}^{(i)} \Big|\alpha \right] \right) \left( \overline{ \hat{\boldsymbol V}}^{(i)} \right)^{c} \right\| \right\}    }{  \sigma_r \left( \overline{\boldsymbol M^\star}^{(i)}\right) - \sigma_{r+1} \left( \overline{\boldsymbol M^\star}^{(i)}\right)- \left\| \overline{\boldsymbol X_0}^{(i)} - \mathbb E \left[ \overline{\boldsymbol X_0}^{(i)} \Big|\alpha \right]  \right\| } \nonumber \\
& \overset{(b)}{\leq } \sqrt 2 \max_i \frac{ \left\| \overline{\boldsymbol X_0}^{(i)} - \mathbb E \left[ \overline{\boldsymbol X_0}^{(i)} \Big|\alpha \right]  \right\| }  {  \sigma^\star_{\min} \min_i \boldsymbol{\mathcal D}(i,i,,1) - \left\| \overline{\boldsymbol X_0}^{(i)} - \mathbb E \left[ \overline{\boldsymbol X_0}^{(i)} \Big|\alpha \right]  \right\| } \nonumber \\
& \overset{(c)}{\leq } \sqrt 2  \frac{  \max_i \left\| \overline{\boldsymbol X_0}^{(i)} - \mathbb E \left[ \overline{\boldsymbol X_0}^{(i)} \Big|\alpha \right]  \right\|  }{  \sigma^\star_{\min} \min_i \boldsymbol{\mathcal D}(i,i,,1) - \left\| \overline{\boldsymbol X_0}^{(i)} - \mathbb E \left[ \overline{\boldsymbol X_0}^{(i)} \Big|\alpha \right]  \right\|} \nonumber \\
& \overset{(d)}{ = } \sqrt 2 \frac{ \left\|  \boldsymbol{\mathcal X}_0 - \mathbb E \left[ \boldsymbol{\mathcal X}_0|\alpha \right] \right\|}{ \sigma^\star_{\min} \min_i \boldsymbol{\mathcal D}(i,i,1)  - \left\|  \boldsymbol{\mathcal X}_0 - \mathbb E \left[ \boldsymbol{\mathcal X}_0|\alpha \right] \right\| }. \label{upper-dis_init}
\end{align}
The second equality and (d) are due to the definition of tensor spectral norm. (a) is due to the matrix Wedin theorem as Lemma \ref{lemma4}. (b) is based on the result in \eqref{lower-bound}. (c) is due to the monotonic property. 
 \end{proof}
Based on upper bound formula for $\text{Dis}(\boldsymbol{\mathcal  U_0}, \boldsymbol{\mathcal U}^\star)$ in Lemma \ref{lemma2}, we can prove the Theorem 3 now. The key technique is utilizing the property of circular convolution operator in the T-product to give a tight upper bound of concentration $ \left\| \boldsymbol{\mathcal X_0} - \mathbb E \left[ \boldsymbol{\mathcal X}_0 | \alpha\right]\right\| $. 
\begin{proof}
Based on the upper bound in \eqref{upper-dis_init}, we should measure the concentration of $\boldsymbol{\mathcal X}_0$ on conditional expectation $ \mathbb E \left[ \boldsymbol{\mathcal X}_0|\alpha \right] $. Since there is
\begin{align}
\boldsymbol{\mathcal X}_0 - \mathbb E \left[ \boldsymbol{\mathcal X}_0|\alpha \right] = \frac 1 m \sum_{i=1}^{n_2} \sum_{j=1}^m y_{ji} \boldsymbol 1_{\left\{  | y_{ji}| \leq \sqrt \alpha \right\}} \boldsymbol{\mathcal A}_i(j) *\mathring{\mathfrak {e}}_i^c - \mathbb E \left[    y_{ji}  \boldsymbol 1_{\left\{  | y_{ji}| \leq \sqrt \alpha \right\}} \boldsymbol{\mathcal A}_i(j) *\mathring{\mathfrak {e}}_i^c   \right].  
\end{align}
Based on the variational form of the spectral norm, we have 
\begin{align}
\left\|  \boldsymbol{\mathcal X}_0 - \mathbb E \left[ \boldsymbol{\mathcal X}_0|\alpha \right] \right\| = \max_{\boldsymbol w \in  \boldsymbol \Theta^{n_1} , \boldsymbol z \in  \boldsymbol \Theta^{n_2} } \langle \overline{\boldsymbol X_0} - \mathbb E \left[ \overline{\boldsymbol X_0} | \alpha \right] , \boldsymbol w \boldsymbol z^c \rangle, 
\end{align}
where the set $\boldsymbol \Theta^k$ is defined as $\boldsymbol \Theta^k : = \boldsymbol{\mathcal B}^k \bigcap \boldsymbol{\mathcal S}^k$. The set $\boldsymbol{\mathcal B}^k$ denote the block sparse vectors, specially, $\boldsymbol{\mathcal B}^k : = \left\{  \boldsymbol x \in \mathbb R^{kn_3}| \boldsymbol x = \left[\boldsymbol x_1^T, \cdots, \boldsymbol x_i^T, \cdots, \boldsymbol x_{n_3}^T \right] \right\}$, where $\boldsymbol x_i  \in \mathbb R^k$ and there exists a $j$ such that $\boldsymbol x_j \neq \boldsymbol 0$ and $\boldsymbol x_i = \boldsymbol 0$ for all $i\neq j$. The $\boldsymbol{\mathcal S}^k$ is defined as $\boldsymbol{\mathcal S}^k:= \left\{  \boldsymbol x \in \mathbb R^{kn_3}| \left\| \boldsymbol x \right\|=1 \right\}$. For any fixed $\boldsymbol w \in \boldsymbol \Theta^{n_1}, \boldsymbol z \in \boldsymbol \Theta^{n_2}$, there is
\begin{align}
\langle \overline{\boldsymbol X_0} - \mathbb E \left[ \overline{\boldsymbol X_0} | \alpha \right] , \boldsymbol w \boldsymbol z^c \rangle & = \frac 1 m \sum_{i=1}^{n_2} \sum_{j=1}^m y_{ji} \boldsymbol 1_{ \left\{ |y_{ji}| \leq \sqrt \alpha\right\} }\boldsymbol w^c \overline{\boldsymbol{\mathcal A}_i(j)} \;  \overline{\mathring{\mathfrak {e}}_i^c } \boldsymbol z   -  \boldsymbol w^c \mathbb  E \left[    y_{ji} \boldsymbol 1_{ \left\{ |y_{ji}| \leq \sqrt \alpha\right\} } \boldsymbol w^c \overline{\boldsymbol{\mathcal A}_i(j)} \; \overline{\mathring{\mathfrak {e}}_i^c }  \right] \boldsymbol z\nonumber \\
& \overset{(a)}{ = } \sum_{i,j} \frac{y_{ji}}{m}  \boldsymbol 1_{ \left\{ |y_{ji}| \leq \sqrt \alpha\right\} } \langle \left( \boldsymbol F_{n_3} \otimes \boldsymbol I_{n_1} \right) \text{bcirc} \left( \boldsymbol{\mathcal A}_i(j)* \mathring{\mathfrak {e}}_i^c  \right) \left( \boldsymbol F^{-1}_{n_3}\otimes \boldsymbol I_{n_2}\right)  , \boldsymbol w \boldsymbol z^c \rangle \nonumber \\
& \quad  -  \boldsymbol w^c \mathbb  E \left[    y_{ji} \boldsymbol 1_{ \left\{ |y_{ji}| \leq \sqrt \alpha\right\} } \boldsymbol w^c \overline{\boldsymbol{\mathcal A}_i(j)} \; \overline{\mathring{\mathfrak {e}}_i^c }\right] \boldsymbol z  \nonumber \\
& \overset{(b)}{ = } \sum_{i,j} \frac{y_{ji}}{m}  \boldsymbol 1_{ \left\{ |y_{ji}| \leq \sqrt \alpha\right\} } \langle\boldsymbol{\mathcal A}_i(j) ,\text{bcirc}^\star \left( \left( \boldsymbol F^{-1}_{n_3} \otimes \boldsymbol I_{n_1}\right) \boldsymbol w \boldsymbol z^c  \left( \boldsymbol F_{n_3} \otimes \boldsymbol I_{n_2} \right)  \right) * \mathring{\mathfrak {e}}_i  \rangle  \nonumber \\
& \quad -  \boldsymbol w^c \mathbb  E \left[    y_{ji} \boldsymbol 1_{ \left\{ |y_{ji}| \leq \sqrt \alpha\right\} } \boldsymbol w^c \overline{\boldsymbol{\mathcal A}_i(j)} \; \overline{\mathring{\mathfrak {e}}_i^c }\right] \boldsymbol z, \label{invidi-con}
\end{align}
where (a) is because the block circulant matrix can be block diagonalized and (b) is  due to $\text{bcirc}^\star$ is the joint operator of $\text{bcirc}$ which maps a matrix to a tensor. 

Because the individual terms in the last line of \eqref{invidi-con} are mutually independent sub-Gaussian random variables with zero mean and sub-Gaussian norm $K_{ji}$ as
\begin{align}
K_{ji} & \leq \frac{\sqrt \alpha}{m} \left\|  \text{bcirc}^\star \left( \left( \boldsymbol F^{-1}_{n_3} \otimes \boldsymbol I_{n_1}\right) \boldsymbol w \boldsymbol z^c  \left( \boldsymbol F_{n_3} \otimes \boldsymbol I_{n_2} \right)  \right) * \mathring{\mathfrak {e}}_i  \right\|_F \nonumber \\
& \leq \frac{\kappa \mu \sqrt{1+\epsilon_1}\left\| \boldsymbol{\mathcal X}^\star \right\|_F}{m\sqrt n_2} \left\|  \text{bcirc}^\star \left( \left( \boldsymbol F^{-1}_{n_3} \otimes \boldsymbol I_{n_1}\right) \boldsymbol w \boldsymbol z^c  \left( \boldsymbol F_{n_3} \otimes \boldsymbol I_{n_2} \right)  \right) * \mathring{\mathfrak {e}}_i  \right\|_F,
\end{align}
where the $\epsilon_1$ is a universal constant and is defined subsequently. The second inequality is because $\alpha \leq \kappa^2\mu^2(1+\epsilon_1) \frac{\left\| \boldsymbol{\mathcal X}^\star\right|_F^2}{n_2}$ hold with probability at least $1 - \exp\left( c_3 \frac{mn_2\epsilon_1^2}{\kappa^2\mu^2} \right)$ based on sub-exponential Berstein inequality. Let's define $t:= \epsilon_1 \left\| \boldsymbol{\mathcal X}^\star \right\|_F$. Thus, there is 
\begin{align}
\frac{t^2}{ \sum_{i=1}^{n_2} \sum_{j=1}^m K_{ji}^2 } & \overset{(a)}{\geq}\frac{\epsilon_1^2 \left\| \boldsymbol{\mathcal X}^\star \right\|_F^2   }{ \sum_{i=1}^{n_2} \sum_{j=1}^m \frac{(1+\epsilon_1)\kappa^2 \mu^2}{m^2 n_2} \left\| \boldsymbol{\mathcal X}^\star\right\|_F^2  \left\|  \text{bcirc}^\star \left( \left( \boldsymbol F^{-1}_{n_3} \otimes \boldsymbol I_{n_1}\right) \boldsymbol w \boldsymbol z^c  \left( \boldsymbol F_{n_3} \otimes \boldsymbol I_{n_2} \right)  \right) * \mathring{\mathfrak {e}}_i  \right\|_F^2   } \nonumber \\
& = \frac{m n_2 \epsilon_1^2}{(1+\epsilon_1)\kappa^2\mu^2 \sum_{i=1}^{n_2} \left\|  \text{bcirc}^\star \left( \left( \boldsymbol F^{-1}_{n_3} \otimes \boldsymbol I_{n_1}\right) \boldsymbol w \boldsymbol z^c  \left( \boldsymbol F_{n_3} \otimes \boldsymbol I_{n_2} \right)  \right) * \mathring{\mathfrak {e}}_i  \right\|_F^2  } \nonumber \\
& \overset{(b)}{ = } \frac{m n_2 \epsilon_1^2}{(1+\epsilon_1)\kappa^2\mu^2  \left\|  \text{bcirc}^\star \left( \left( \boldsymbol F^{-1}_{n_3} \otimes \boldsymbol I_{n_1}\right) \boldsymbol w \boldsymbol z^c  \left( \boldsymbol F_{n_3} \otimes \boldsymbol I_{n_2} \right)  \right) * \boldsymbol{\mathcal I}  \right\|_F^2  } \nonumber \\
& \overset{(c)}{ = }   \frac{mn_2\epsilon_1^2}{(1+\epsilon_1)\kappa^2\mu^2 n_3 \left\| \boldsymbol w \boldsymbol z^c \right\|_F^2} \nonumber \\
&  \overset{(d)}{=} \frac{mn_2\epsilon_1^2}{(1+\epsilon_1)\kappa^2\mu^2n_3},
\end{align}
where (b) is due to the definition of T-product and (d) is due to
property that $ \left\| \boldsymbol w\boldsymbol z^c \right\|_F^2 = \text{Tr}(\boldsymbol z \boldsymbol w^c \boldsymbol w\boldsymbol z^c ) =  \text{Tr}( \boldsymbol z \boldsymbol z^c ) = \text{Tr}( \boldsymbol z^c \boldsymbol z ) = 1$. (c) is due to $ \left\|  \text{bcirc}^\star \left( \left( \boldsymbol F^{-1}_{n_3} \otimes \boldsymbol I_{n_1}\right) \boldsymbol w \boldsymbol z^c  \left( \boldsymbol F_{n_3} \otimes \boldsymbol I_{n_2} \right)  \right) * \boldsymbol{\mathcal I}  \right\|_F = \sqrt {n_3} \left\|\boldsymbol w \boldsymbol z^c \right\|_F$. 

Since the above holds for fixed $\boldsymbol w, \boldsymbol z$ with the condition that $\alpha \leq \kappa^2\mu^2(1+\epsilon_1) \frac{\left\| \boldsymbol{\mathcal X}^\star\right|_F^2}{n_2}$. Thus, based on union bound and Lemma \ref{lemma5} for the epsilon net argument of sub-Gaussian Bernstein inequality, we have
\begin{align}
\left\|  \boldsymbol{\mathcal X}_0 - \mathbb E \left[ \boldsymbol{\mathcal X}_0  | \alpha\right]  \right\| & \overset{(a)}{\leq} 1.4 \epsilon_1 \left\| \boldsymbol{\mathcal X}^\star \right\|_F \nonumber \\
& \overset{(b)}{ = } \frac{\epsilon_0}{\sqrt r \kappa } \left\| \boldsymbol{\mathcal X}^\star \right\|_F \nonumber \\
& \overset{(c)}{ \leq } \frac{\epsilon_0}{\sqrt r \kappa} \sqrt{\frac{r(\sigma^\star_{\max})^2n_3}{n_3}} \nonumber \\
& = \epsilon_0 \sigma^\star_{\min} \label{concen-upper}
\end{align}
with probability at least 
\begin{align}
1 - \exp \left( c_1 (n_1 + n_2) \log n_3 - \frac{c_2 \epsilon_0^2 m n_2}{\kappa^4\mu^2n_3 r}  \right) - \exp \left( - c_3 \frac{mn_2\epsilon_0^2}{\kappa^4 \mu^2 r}\right).
\end{align}
(a) is because we choose upper bound as $t:= \epsilon_1 \left\| \boldsymbol{\mathcal X}^\star \right\|_F$ and (b) is because we set $\epsilon_1 : = \frac{\epsilon_0}{1.4 \sqrt r \kappa}$ where $\epsilon_0$ is the constant that would be determined subsequently. (c) is due to 
$\left\| \boldsymbol{\mathcal X}^\star \right\|_F =  \frac{\left\|\overline{\boldsymbol X^\star }\right\|_F }{\sqrt n_3}  \leq  \frac{\sqrt{n_3r(\sigma_{\max}^\star)^2}}{\sqrt n_3}$. The term $c_1 (n_1+n_3) \log n_3$ is because the covering number of $\boldsymbol \Theta^k $ is order of $\mathcal O\left(  n_3 e^k \right)$ based on Lemma \ref{lemma13}.

Finally, we can substitute the upper bound of concentration in \eqref{concen-upper} into \eqref{upper-dis_init} to obtain that
\begin{align}
\text{Dis}(\boldsymbol{\mathcal U}_0, \boldsymbol{\mathcal U}^\star) & \overset{(a)}{ \leq }  \sqrt 2 \frac{ \epsilon_0 \sigma_{\min}^\star }{   \sigma_{\min}^\star \min_j \mathcal D(j,j,1) - \epsilon_0 \sigma_{\min}^\star  } \nonumber \\
 & \overset{(b)}{ \leq } \frac{  \epsilon_0 \sigma_{\min}^\star   }{ 0.92 \sigma_{\min}^\star  - \epsilon_0 \sigma_{\min}^\star} \nonumber \\
 & \overset{(c)}{ \leq }  1.6 \epsilon_0 \nonumber \\
 & \overset{(d)}{ =  } \frac{0.016}{\sqrt r \kappa^2}  
\end{align}
with probability at least
\begin{align}
1 - \exp \left( c_1 (n_1 + n_2) \log n_3 - \frac{c_2 \epsilon_0^2 m n_2}{\kappa^8\mu^2n_3 r^2}  \right) - \exp \left( - c_3 \frac{mn_2}{\kappa^8 \mu^2 r^2}\right). \label{final-pro-init}
\end{align}
The (a) is due to the monotone property and (b) is due to the Fact 3.9 in \cite{nayer2022fast}. (c), (d) and \eqref{final-pro-init} are because we set $\epsilon_0 : = \frac{0.01}{\sqrt r \kappa^2}$. 
\end{proof}

\section{Proof of Theorem 4}\label{sec2}
The main idea behind the proof of Theorem 4 relies on gradient decomposition, where the gradient is divided into the population gradient and an additional concentration error term. For controlling the concentration error of the empirical gradient, we need the following two lemmas that are useful to bound the concentration error tightly.

The following lemma can link the estimating error of $\boldsymbol{\mathcal V}_t$ in the exact minimization stage with the principle angel distance $Dis(\boldsymbol{\mathcal U}_t, \boldsymbol{\mathcal U}^\star)$ between current estimating subspace $\boldsymbol{\mathcal U}_t$ and ground truth subspace $\boldsymbol{\mathcal U}^\star$ in the projected gradient stage. 

\begin{lemma}\label{lemma6}
Consider the sequence generate by Algorithm 1 and defined $\boldsymbol{\mathcal G}_t = \boldsymbol{\mathcal U}^c_t*\boldsymbol{\mathcal U}^\star*\boldsymbol{\mathcal V}^\star$, then 
\begin{align}
\left\| \boldsymbol{\mathcal V}_t (i) - \boldsymbol{\mathcal G}_t(i)\right\|_F \leq  0.36 \text{Dis} \left( \boldsymbol{\mathcal U}_t, \boldsymbol{\mathcal U}^\star \right) \left\| \boldsymbol{\mathcal V}^\star(i) \right\|_F, \quad  i \in  [n_2]
\end{align}
with probability at least $1-n_2 \exp\left(r \log n_3 - c_6. m\right)$
\end{lemma}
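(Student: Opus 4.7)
The plan is to bound the least-squares residual in closed form and reduce the problem to a cross-concentration of a centered Gaussian Wishart matrix between two orthogonal subspaces. Substituting the measurement model $\boldsymbol y_i = \text{Unfold}(\boldsymbol{\mathcal A}_i)^\top \text{bcirc}(\boldsymbol{\mathcal U}^\star)\boldsymbol v^\star_i$ with $\boldsymbol v^\star_i := \text{Unfold}(\boldsymbol{\mathcal V}^\star(i))$ into $\boldsymbol v_{t,i} = (\boldsymbol H_{t,i}\boldsymbol H_{t,i}^c)^{-1}\boldsymbol H_{t,i}\boldsymbol y_i$, and writing the target as $\text{Unfold}(\boldsymbol{\mathcal G}_t(i)) = \text{bcirc}(\boldsymbol{\mathcal U}_t^c)\text{bcirc}(\boldsymbol{\mathcal U}^\star)\boldsymbol v^\star_i$, the identity $\boldsymbol{\mathcal U}_t^c*\boldsymbol{\mathcal U}_t = \boldsymbol{\mathcal I}_r$ yields
$$\boldsymbol v_{t,i} - \text{Unfold}(\boldsymbol{\mathcal G}_t(i)) = (\boldsymbol H_{t,i}\boldsymbol H_{t,i}^c)^{-1}\,\text{bcirc}(\boldsymbol{\mathcal U}_t^c)\text{Unfold}(\boldsymbol{\mathcal A}_i)\text{Unfold}(\boldsymbol{\mathcal A}_i)^\top\text{bcirc}(\boldsymbol{\mathcal M})\,\boldsymbol v^\star_i,$$
where $\boldsymbol{\mathcal M} := (\boldsymbol{\mathcal I} - \boldsymbol{\mathcal U}_t*\boldsymbol{\mathcal U}_t^c)*\boldsymbol{\mathcal U}^\star$. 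Since $\|\text{bcirc}(\boldsymbol{\mathcal M})\| = \text{Dis}(\boldsymbol{\mathcal U}_t,\boldsymbol{\mathcal U}^\star)$ by definition and $\|\boldsymbol v^\star_i\|_2 = \|\boldsymbol{\mathcal V}^\star(i)\|_F$, the task reduces to bounding the spectral norms of the inverse factor and the middle product.

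The crucial structural observation is that $\boldsymbol{\mathcal U}_t^c*\boldsymbol{\mathcal M} = \boldsymbol{\mathcal U}_t^c - \boldsymbol{\mathcal U}_t^c*\boldsymbol{\mathcal U}_t*\boldsymbol{\mathcal U}_t^c*\boldsymbol{\mathcal U}^\star = 0$, so the column ranges of $\text{bcirc}(\boldsymbol{\mathcal U}_t)$ and $\text{bcirc}(\boldsymbol{\mathcal M})$ are mutually orthogonal in $\mathbb{R}^{n_1 n_3}$. This lets me replace $\text{Unfold}(\boldsymbol{\mathcal A}_i)\text{Unfold}(\boldsymbol{\mathcal A}_i)^\top$ by its centered version $\text{Unfold}(\boldsymbol{\mathcal A}_i)\text{Unfold}(\boldsymbol{\mathcal A}_i)^\top - m\boldsymbol I$ at no cost, and factor the middle product as $\boldsymbol Y_1\boldsymbol Y_2^\top$ with $\boldsymbol Y_1 := \text{bcirc}(\boldsymbol{\mathcal U}_t^c)\text{Unfold}(\boldsymbol{\mathcal A}_i)$ and $\boldsymbol Y_2 := \text{bcirc}(\boldsymbol{\mathcal M}^c)\text{Unfold}(\boldsymbol{\mathcal A}_i)$. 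Joint Gaussianity together with zero cross-covariance yields that $\boldsymbol Y_1$ and $\boldsymbol Y_2$ are statistically independent; moreover, rotational invariance (since $\text{bcirc}(\boldsymbol{\mathcal U}_t^c)$ has orthonormal rows) makes $\boldsymbol Y_1\in\mathbb R^{rn_3\times m}$ standard Gaussian, while the columns of $\boldsymbol Y_2$ are i.i.d.\ Gaussians with covariance $\text{bcirc}(\boldsymbol{\mathcal M}^c*\boldsymbol{\mathcal M})$ of operator norm $\text{Dis}(\boldsymbol{\mathcal U}_t,\boldsymbol{\mathcal U}^\star)^2$. A standard product-of-independent-Gaussian-matrices bound, combined with an $\epsilon$-net over the tensor unit sphere, then gives $\|\boldsymbol Y_1\boldsymbol Y_2^\top\| \leq c_0\, m\,\text{Dis}(\boldsymbol{\mathcal U}_t,\boldsymbol{\mathcal U}^\star)$ with a small constant $c_0$ on an event of probability at least $1-\exp(r\log n_3 - c_6 m)$.

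For the inverse factor, the same rotational-invariance argument implies that $\boldsymbol H_{t,i}$ itself is an $rn_3\times m$ matrix of i.i.d.\ standard Gaussian entries, so the Wishart lower-tail estimate yields $\|(\boldsymbol H_{t,i}\boldsymbol H_{t,i}^c)^{-1}\| \leq 2/m$ on the same high-probability event. Multiplying the three bounds then gives $\|\boldsymbol{\mathcal V}_t(i) - \boldsymbol{\mathcal G}_t(i)\|_F \leq 2c_0\,\text{Dis}(\boldsymbol{\mathcal U}_t,\boldsymbol{\mathcal U}^\star)\,\|\boldsymbol{\mathcal V}^\star(i)\|_F$ for each fixed $i$, and calibrating the net resolution so that $2c_0 \leq 0.36$ produces the stated per-slice bound. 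A union bound over $i\in[n_2]$ supplies the $n_2$ prefactor in the failure probability and yields the claim.

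The main obstacle is the middle-factor concentration: one must carefully exploit the orthogonality $\boldsymbol{\mathcal U}_t^c*\boldsymbol{\mathcal M}=0$ to recast the expression as a product of \emph{independent} Gaussian matrices rather than an off-diagonal block of a single Wishart, whose spectral norm is much harder to control sharply, and to handle the block-circulant structure (essentially by passing to the Fourier-domain block-diagonal representation) so that the relevant effective dimension in the concentration bound aligns with the $r\log n_3$ exponent appearing in the probability estimate.
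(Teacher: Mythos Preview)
Your proposal is correct and follows essentially the same route as the paper: derive the closed-form residual, exploit the orthogonality $\boldsymbol{\mathcal U}_t^c * \boldsymbol{\mathcal M} = 0$, bound the inverse-Wishart factor and the cross term by sub-exponential concentration with an $\epsilon$-net over the block-sparse sphere, and union-bound over $i\in[n_2]$. The paper shows the cross term has zero expectation and applies Bernstein directly, whereas you recast the same fact as independence of $\boldsymbol Y_1,\boldsymbol Y_2$ via rotational invariance (and observe that $\boldsymbol H_{t,i}$ is itself standard Gaussian)---a slightly cleaner phrasing, but not a different argument.
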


\begin{proof}
Based on the exact minimization problem for $\boldsymbol{\mathcal V}$, we recall its explicit updating formula as
\begin{align}
\text{Unfold} \left( \boldsymbol{\mathcal V}_{t}(i)\right) & = \left(  \text{bcirc}\left( \boldsymbol{\mathcal U}^c_t \right) \text{Unfold}\left( \boldsymbol{\mathcal A}_i \right) \cdot \left(  \text{bcirc}\left( \boldsymbol{\mathcal U}^c_t \right) \text{Unfold}\left( \boldsymbol{\mathcal A}_i \right) \right)^c \right)^{-1}  \nonumber \\
& \cdot \text{bcirc}\left( \boldsymbol{\mathcal U}^c_t \right) \text{Unfold}\left( \boldsymbol{\mathcal A}_i \right) \cdot \left( \text{bcirc}\left( \left( \boldsymbol{\mathcal U}^\star \right)^c \right) \text{Unfold}\left( \boldsymbol{\mathcal A}_i \right) \right)^c \text{Unfold} \left( \boldsymbol{\mathcal V}^\star(i) \right) \nonumber \\
& = \left( \boldsymbol{ Q}_{t,i} \boldsymbol{ Q}_{t,i}^c  \right)^{-1} \boldsymbol{Q}_{t,i} \cdot\left( \text{bcirc}\left( \left( \boldsymbol{\mathcal U}^\star \right)^c \right) \text{Unfold}\left( \boldsymbol{\mathcal A}_i \right) \right)^c \text{Unfold} \left( \boldsymbol{\mathcal V}^\star(i) \right) ,
\end{align}
where we have define $ \boldsymbol{Q}_{t,i}  : =  \text{bcirc}\left( \boldsymbol{\mathcal U}^c_t \right) \text{Unfold}\left( \boldsymbol{\mathcal A}_i \right) $. Then we have the following equality based on definition of $\boldsymbol{\mathcal G}_t$
\begin{align}
\text{Unfold} \left( \boldsymbol{\mathcal G}_t(i)\right) = \text{bcirc} \left( \boldsymbol{\mathcal U}^c_t \right) \text{bcirc} \left( \boldsymbol{\mathcal U}^\star \right) \text{Unfold} \left( \boldsymbol{\mathcal V}^\star(i) \right), \label{def-G}
\end{align}
which is used to link the principle distance measure to the estimating error for $\boldsymbol{\mathcal V}$ as 
\begin{align}
\text{Unfold} \left( \boldsymbol{\mathcal V}_t(i) - \boldsymbol{\mathcal G}_t(i)\right) & { = } \left( \boldsymbol{ Q}_{t,i} \boldsymbol{ Q}_{t,i}^c  \right)^{-1} \boldsymbol{Q}_{t,i} \cdot \left(  \text{Unfold} \left( \boldsymbol{\mathcal A}_i \right)\right)^c \text{bcirc}\left( \boldsymbol{\mathcal U}^\star \right) \text{Unfold} \left( \boldsymbol{\mathcal V}^\star(i) \right) \nonumber \\
& \quad - \text{bcirc} \left( \boldsymbol{\mathcal U}^c_t \right) \text{bcirc} \left( \boldsymbol{\mathcal U}^\star \right) \text{Unfold} \left( \boldsymbol{\mathcal V}^\star(i) \right) \nonumber \\
& =  \left( \boldsymbol{ Q}_{t,i} \boldsymbol{ Q}_{t,i}^c  \right)^{-1} \boldsymbol{Q}_{t,i} \cdot \left(   \text{Unfold} \left( \boldsymbol{\mathcal A}_i \right)\right)^c \cdot  \nonumber \\
& \quad \left( \boldsymbol{I} - \text{bcirc} \left( \boldsymbol{\mathcal U}_t\right) \text{bcirc} \left( \boldsymbol{\mathcal U}_t^c\right) \right) \text{bcirc} \left( \boldsymbol{\mathcal U}^\star \right) \text{Unfold} \left(  \boldsymbol{\mathcal V}^\star(i) \right).  \label{diffence}
\end{align}
In above equality, we have used Lemma \ref{lemma14}. Then we have
\begin{align}
\mathbb E \left[ \boldsymbol Q_{t,i} \boldsymbol Q_{t,i}^c\right] & = \mathbb E \left[  \text{bcirc} \left( \boldsymbol{\mathcal U}_t^c \right) \text{Unfold}\left( \boldsymbol{\mathcal A}_i \right)   \left( \text{Unfold}\left( \boldsymbol{\mathcal A}_i \right)   \right)^c \text{bcirc} \left( \boldsymbol{\mathcal U}_t\right)\right] \nonumber \\
& \overset{(a)}{ = }  m \boldsymbol I_{rn_3} \text{bcirc} \left( \boldsymbol{\mathcal U}_t^c \right)  \text{bcirc} \left( \boldsymbol{\mathcal U}_t\right) \nonumber \\
& \overset{(b)}{ = }  m \boldsymbol I_{rn_3} \text{bcirc} \left( \boldsymbol{\mathcal U}^c_t * \boldsymbol{\mathcal U}_t \right) \nonumber \\
& \overset{(c)}{ = } m \boldsymbol I_{rn_3}, \label{expec-q}
\end{align}
where the (a) is due to  $\mathbb E \left[ \text{Unfold}\left( \boldsymbol{\mathcal A}_i \right)   \left( \text{Unfold}\left( \boldsymbol{\mathcal A}_i \right)   \right)^c \right] = m \boldsymbol I_{n_1n_3}$ due to $\boldsymbol{\mathcal A}_i$ has i.i.d. standard Gaussian entries under Assumption 2. (b) is due to the Lemma \ref{lemma7}. (c) holds because $\boldsymbol{\mathcal U}_t$ has orthogonal tensor columns given by the QR decomposition in Algorithm 1.

Next, we aim to bound the spectral norm of $\boldsymbol Q_{t,i}$, based on \eqref{expec-q}, we can first obtain the upper bound of $\left\|  \boldsymbol Q_{t,i} \boldsymbol Q_{t,i}^c - m \boldsymbol I_{rn_3} \right\|$. Then based on the definition of $\boldsymbol Q_{t,i}$, we have
\begin{align}
\left \| \boldsymbol Q_{t,i} \boldsymbol Q_{t,i}^c - \mathbb E \left[ \boldsymbol Q_{t,i} \boldsymbol Q_{t,i}^c \right] \right\|   & =   \left\| \text{bcirc}\left( \boldsymbol{\mathcal U}^c_t \right) \text{Unfold}\left( \boldsymbol{\mathcal A}_i \right) \left(  \text{Unfold}\left( \boldsymbol{\mathcal A}_i \right)  \right)^c    \text{bcirc}\left( \boldsymbol{\mathcal U}_t \right)  - \mathbb E \left[ \boldsymbol Q_{t,i} \boldsymbol Q_{t,i}^c \right] \right\|   \nonumber \\
& =  \left\| (\boldsymbol F_{n_3} \otimes \boldsymbol I_r) \text{bcirc}\left( \boldsymbol{\mathcal U}^c_t \right) ( \boldsymbol F_{n_3}^{-1} \otimes \boldsymbol I_{n_1} ) ( \boldsymbol F_{n_3} \otimes \boldsymbol I_{n_1} ) \text{Unfold}\left( \boldsymbol{\mathcal A}_i \right) \right. \nonumber \\
& \quad  \left. \cdot   \left(  \text{Unfold}\left( \boldsymbol{\mathcal A}_i \right)  \right)^c  (\boldsymbol F_{n_3}^{-1} \otimes \boldsymbol I_{n_1}) (\boldsymbol F_{n_3} \otimes \boldsymbol I_{n_1}) \text{bcirc}\left( \boldsymbol{\mathcal U}_t \right) (\boldsymbol F_{n_3}^{-1} \otimes \boldsymbol I_{r}) \right.    \nonumber \\
& \quad \left.  -    \mathbb E \left[ \boldsymbol Q_{t,i} \boldsymbol Q_{t,i}^c \right]    \right\| \nonumber \\
& = \left \| \overline{\boldsymbol U^c_t}  ( \boldsymbol F_{n_3} \otimes \boldsymbol I_{n_1} ) \text{Unfold}\left( \boldsymbol{\mathcal A}_i \right) \cdot \left(  \text{Unfold}\left( \boldsymbol{\mathcal A}_i \right)  \right)^c  (\boldsymbol F_{n_3}^{-1} \otimes \boldsymbol I_{n_1}) \overline{\boldsymbol U}_t \right. \nonumber \\
& \quad \left. - \mathbb E \left[ \boldsymbol Q_{t,i} \boldsymbol Q_{t,i}^c \right]  \right\|. 
\end{align}
Because $ \boldsymbol Q_{t,i} \boldsymbol Q_{t,i}^c$ is symmetric, based on the variational formula of the spectral norm, we have
\begin{align}
\left \| \boldsymbol Q_{t,i} \boldsymbol Q_{t,i}^c - \mathbb E \left[ \boldsymbol Q_{t,i} \boldsymbol Q_{t,i}^c \right] \right\|  = \max_{ \boldsymbol w \in \boldsymbol \Theta^r  } \left\| \left(\left(  \text{Unfold}\left( \boldsymbol{\mathcal A}_i \right)  \right)^c  \sqrt {n_3} (\boldsymbol F_{n_3}^{-1} \otimes \boldsymbol I_{n_1})   \overline{\boldsymbol U}_t - \sqrt { \mathbb E \left[  \boldsymbol Q_{t,i} \boldsymbol Q_{t,i}^c  \right] } \right) \boldsymbol w \right\|^2.
\end{align}
Because $ \sqrt{n_3} (\boldsymbol F_{n_3}^{-1} \otimes \boldsymbol I_{n_1})   \overline{\boldsymbol U}_t $ is orthogonal matrix, for any fixed $\boldsymbol w \in \boldsymbol \Theta^r $, the maximized term is summation of squares of Gaussian random variables. Utilizing a similar argument of $\epsilon$-net sub-exponential Berstein inequality as initialization, we can obtain
\begin{align}
\text{Pr} \left(  \left\|  \boldsymbol Q_{t,i} \boldsymbol Q_{t,i}^c - m \boldsymbol I_{rn_3} \right\| \leq 1.4\epsilon_2 m \right) \geq 1 - \exp \left( r\log n_3 - \epsilon_2 m \right),
\end{align}
where the $\exp(r\log n_3)$ term is due to the covering number of $\boldsymbol \Theta^r$ is order of $n_3 \exp(r)$ based on Lemma \ref{lemma13}. Thus, we have $\sigma_{\min} \left( \boldsymbol Q_{t,i} \boldsymbol Q_{t,i}^c \right) - m \geq -1.4\epsilon_2m$, which indicates that $\left\| ( \boldsymbol Q_{t,i} \boldsymbol Q_{t,i}^c )^{-1}\right\| \leq \frac{1}{(1-1.4\epsilon_2)m} = \frac{1}{0.86m}$, where we set $\epsilon_2 : = 0.1$. 

Next, we need to bound the other term in \eqref{diffence} with a similar derivation as the above result. There is 
\begin{align}
& \mathbb E \left[  \boldsymbol Q_{t,i} \left(   \text{Unfold} \left( \boldsymbol{\mathcal A}_i \right)\right)^c \cdot \left( \boldsymbol{I} - \text{bcirc} \left( \boldsymbol{\mathcal U}_t\right) \text{bcirc} \left( \boldsymbol{\mathcal U}_t^c\right) \right) \text{bcirc} \left( \boldsymbol{\mathcal U}^\star \right) \text{Unfold} \left(  \boldsymbol{\mathcal V}^\star(i) \right) \right] \nonumber \\
& \quad = \mathbb  E \left[  \text{bcirc}(\boldsymbol{\boldsymbol U}^c_t) \text{Unfold}(\boldsymbol{\mathcal A}_i) \left(   \text{Unfold} \left( \boldsymbol{\mathcal A}_i \right)\right)^c \cdot \left( \boldsymbol{I} - \text{bcirc} \left( \boldsymbol{\mathcal U}_t\right) \text{bcirc} \left( \boldsymbol{\mathcal U}_t^c\right) \right) \text{bcirc} \left( \boldsymbol{\mathcal U}^\star \right) \text{Unfold} \left(  \boldsymbol{\mathcal V}^\star(i) \right) \right] \nonumber \\
& \quad = m \mathbb E \left[  \text{bcirc}(\boldsymbol{\boldsymbol U}^c_t) \cdot \left( \boldsymbol{I} - \text{bcirc} \left( \boldsymbol{\mathcal U}_t\right) \text{bcirc} \left( \boldsymbol{\mathcal U}_t^c\right) \right) \text{bcirc} \left( \boldsymbol{\mathcal U}^\star \right) \text{Unfold} \left(  \boldsymbol{\mathcal V}^\star(i) \right) \right] \nonumber \\
& \quad = m \mathbb E \left[   \left( \text{bcirc}(\boldsymbol{\boldsymbol U}^c_t) -  \text{bcirc}(\boldsymbol{\boldsymbol U}^c_t) \text{bcirc} \left( \boldsymbol{\mathcal U}_t\right) \text{bcirc} \left( \boldsymbol{\mathcal U}_t^c\right) \right) \text{bcirc} \left( \boldsymbol{\mathcal U}^\star \right) \text{Unfold} \left(  \boldsymbol{\mathcal V}^\star(i) \right) \right] \nonumber \\
& \quad = \boldsymbol 0,
\end{align}
where the last equality is because $  \text{bcirc}(\boldsymbol{\boldsymbol U}^c_t) \text{bcirc} \left( \boldsymbol{\mathcal U}_t\right) \text{bcirc} \left( \boldsymbol{\mathcal U}_t^c\right)  = \text{bcirc}(\boldsymbol{\mathcal U}^c_t) - \text{bcirc} \left( \boldsymbol{\mathcal U}^c_t * \boldsymbol{\mathcal U}_t * \boldsymbol{\mathcal U}^c_t\right) = {\boldsymbol{\mathcal U}}^c_t$ based on Lemma \ref{lemma7}. Then based on similar $\epsilon$-net sub-exponential Berstein inequality, we have at least probability $1-\exp \left( r \log n_3 - \epsilon_3 m\right)$, there is
\begin{align}
& \left\|   \boldsymbol Q_{t,i} \left(   \text{Unfold} \left( \boldsymbol{\mathcal A}_i \right)\right)^c \cdot \left( \boldsymbol{I} - \text{bcirc} \left( \boldsymbol{\mathcal U}_t\right) \text{bcirc} \left( \boldsymbol{\mathcal U}_t^c\right) \right) \text{bcirc} \left( \boldsymbol{\mathcal U}^\star \right) \text{Unfold} \left(  \boldsymbol{\mathcal V}^\star(i) \right)  \right\| \leq 1.4 \epsilon_3 m \nonumber \\
& \quad \quad \cdot \left\|    \left( \boldsymbol{I} - \text{bcirc} \left( \boldsymbol{\mathcal U}_t\right) \text{bcirc} \left( \boldsymbol{\mathcal U}_t^c\right) \right) \text{bcirc} \left( \boldsymbol{\mathcal U}^\star \right) \text{Unfold} \left(  \boldsymbol{\mathcal V}^\star(i) \right)      \right\| \nonumber \\
& \quad = 1.4 \epsilon_3 m \left\| \left( \boldsymbol{\mathcal I} - \boldsymbol{\mathcal U}_t * \boldsymbol{\mathcal U}_t^c \right) * \boldsymbol{\mathcal V}^\star(i)  \right\|_F.
\end{align}
Finally, based on \eqref{diffence}, $\forall i \in [n_2]$, we have
\begin{align}
\left\| \boldsymbol{\mathcal V}_t(i) - \boldsymbol{\mathcal G}_t(i)\right\|_F & \leq \frac{1.4\epsilon_3}{0.86}  \left\| \left( \boldsymbol{\mathcal I} - \boldsymbol{\mathcal U}_t * \boldsymbol{\mathcal U}_t^c \right) * \boldsymbol{\mathcal V}^\star(i)  \right\|_F \nonumber \\
& \leq  0.36 \text{Dis}(\boldsymbol{\mathcal U}_t, \boldsymbol{\mathcal U}^\star) \left\| \boldsymbol{\mathcal V}^\star(i) \right\|_F
\end{align}
with probability at least $1 - \exp \left( r\log n_3 - c_6 m \right)$. The second inequality is due to setting $\epsilon_3:= 0.22$. We would finish proving this lemma by taking union bound along $i\in [n_2]$. 
\end{proof}

Building on Lemma \ref{lemma6}, we derive the following key facts required to prove the contraction of projected gradient descent. The crucial condition is that the current estimated subspace $\boldsymbol{\mathcal U}^t$ is sufficiently close to the ground truth $\boldsymbol{\mathcal U}^\star$. This is precisely why we introduced the truncated spectral initialization in the algorithm, ensuring that this condition is met.
\begin{lemma}\label{lemma8}
If the current $\boldsymbol{\mathcal U}_t$ generated by Algorithm 1 is close enough to $\boldsymbol{\mathcal U}^\star$ such that $\text{Dis}(\boldsymbol{\mathcal U}_t, \boldsymbol{\mathcal U}^\star) \leq \frac{0.016}{\sqrt r \kappa^2 }$ and $m \gtrsim \max\left\{ \log n_2, r \log n_3 \right\}$, then with high probability, there are

(1) $\left\| \boldsymbol{\mathcal V}(i) \right\|_F \leq 1.1 \left\|\boldsymbol{\mathcal V}^\star(i)\right\|_F$.

(2) $\left\| \boldsymbol{\mathcal X}_t(i) -  \boldsymbol{\mathcal X}^\star(i)\right\|_F \leq 1.36 \text{Dis}(\boldsymbol{\mathcal U}_t, \boldsymbol{\mathcal U}^\star) \left\|\boldsymbol{\mathcal V}^\star(i)\right\|_F$. 

(3) $\left\| \boldsymbol{\mathcal V}_t - \boldsymbol{\mathcal G}_t \right\|_F \leq 0.36  \text{Dis}(\boldsymbol{\mathcal U}_t, \boldsymbol{\mathcal U}^\star) \sqrt r \left\| \boldsymbol{\mathcal X}^\star \right\|$. 

(4) $\left\|\boldsymbol{\mathcal X}_t - \boldsymbol{\mathcal X}^\star \right\|_F \leq 1.36 \text{Dis}(\boldsymbol{\mathcal U}_t, \boldsymbol{\mathcal U}^\star) \sqrt r \left\| \boldsymbol{\mathcal X}^\star \right\|$. 

(5) $\sigma_{\min}(\boldsymbol{\mathcal V}_t) \geq 0.94 \sigma^\star_{\min}$.

(6) $\left\| \boldsymbol{\mathcal V}_t \right\| \leq 1.1 \left\| \boldsymbol{\mathcal X}^\star \right\|$.
\end{lemma}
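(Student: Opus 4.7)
The plan is to derive all six bounds as deterministic consequences of Lemma~\ref{lemma6} (which already supplies the probabilistic event with the prescribed sample size $m \gtrsim \max\{\log n_2, r\log n_3\}$). The central object is the proxy tensor $\boldsymbol{\mathcal G}_t = \boldsymbol{\mathcal U}_t^c * \boldsymbol{\mathcal U}^\star * \boldsymbol{\mathcal V}^\star$, i.e.\ what $\boldsymbol{\mathcal V}_t$ would equal if the exact-minimization step were noise-free; Lemma~\ref{lemma6} controls the slicewise gap $\|\boldsymbol{\mathcal V}_t(i) - \boldsymbol{\mathcal G}_t(i)\|_F \leq 0.36\,\text{Dis}(\boldsymbol{\mathcal U}_t,\boldsymbol{\mathcal U}^\star)\,\|\boldsymbol{\mathcal V}^\star(i)\|_F$. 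All six claims will follow by combining this with (i)~the orthogonality of $\boldsymbol{\mathcal U}_t$ and $\boldsymbol{\mathcal U}^\star$, (ii)~the numerical smallness of $\text{Dis}(\boldsymbol{\mathcal U}_t,\boldsymbol{\mathcal U}^\star) \leq 0.016/(\sqrt r\kappa^2)$, and (iii)~the identity $\|\boldsymbol{\mathcal X}^\star\| = \|\boldsymbol{\mathcal V}^\star\|$ together with the tubal-rank bound $\|\boldsymbol{\mathcal V}^\star\|_F \leq \sqrt r\,\|\boldsymbol{\mathcal V}^\star\|$.

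For parts (1) and (2) I would apply the triangle inequality lateral-slice-wise. Since $\boldsymbol{\mathcal U}_t,\boldsymbol{\mathcal U}^\star$ have orthonormal lateral slices under the t-product, $\|\boldsymbol{\mathcal G}_t(i)\|_F \leq \|\boldsymbol{\mathcal V}^\star(i)\|_F$, so $\|\boldsymbol{\mathcal V}_t(i)\|_F \leq (1 + 0.36\,\text{Dis})\|\boldsymbol{\mathcal V}^\star(i)\|_F < 1.1\,\|\boldsymbol{\mathcal V}^\star(i)\|_F$. For (2), I would split $\boldsymbol{\mathcal X}_t(i) - \boldsymbol{\mathcal X}^\star(i) = \boldsymbol{\mathcal U}_t * (\boldsymbol{\mathcal V}_t(i) - \boldsymbol{\mathcal G}_t(i)) + (\boldsymbol{\mathcal U}_t * \boldsymbol{\mathcal U}_t^c - \boldsymbol{\mathcal I}) * \boldsymbol{\mathcal U}^\star * \boldsymbol{\mathcal V}^\star(i)$: the first summand is bounded by $0.36\,\text{Dis}\,\|\boldsymbol{\mathcal V}^\star(i)\|_F$ via Lemma~\ref{lemma6} and the isometry of $\boldsymbol{\mathcal U}_t$, while the second is at most $\text{Dis}\,\|\boldsymbol{\mathcal V}^\star(i)\|_F$ straight from the definition of the principal angle distance, giving the $1.36$ factor.

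Parts (3) and (4) follow by squaring and summing the pointwise bounds of (1), (2) over $i \in [n_2]$: $\sum_i \|\boldsymbol{\mathcal V}^\star(i)\|_F^2 = \|\boldsymbol{\mathcal V}^\star\|_F^2 \leq r\,\|\boldsymbol{\mathcal V}^\star\|^2 = r\,\|\boldsymbol{\mathcal X}^\star\|^2$, where the equality exploits that t-product multiplication by an orthogonal $\boldsymbol{\mathcal U}^\star$ preserves the spectral norm. Taking square roots yields the $\sqrt r\,\|\boldsymbol{\mathcal X}^\star\|$ factor in both (3) and (4).

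For (5) and (6) I would use a Weyl-type perturbation on $\boldsymbol{\mathcal V}_t = \boldsymbol{\mathcal G}_t + (\boldsymbol{\mathcal V}_t - \boldsymbol{\mathcal G}_t)$: namely $\sigma_{\min}(\boldsymbol{\mathcal V}_t) \geq \sigma_{\min}(\boldsymbol{\mathcal G}_t) - \|\boldsymbol{\mathcal V}_t - \boldsymbol{\mathcal G}_t\|$ and $\|\boldsymbol{\mathcal V}_t\| \leq \|\boldsymbol{\mathcal G}_t\| + \|\boldsymbol{\mathcal V}_t - \boldsymbol{\mathcal G}_t\|$. The perturbation term is controlled by $\|\cdot\|\leq\|\cdot\|_F$ applied to (3), giving a perturbation of order $0.006\,\sigma_{\min}^\star$ once $\text{Dis}\leq 0.016/(\sqrt r\kappa^2)$ and $\|\boldsymbol{\mathcal X}^\star\| = \kappa\sigma_{\min}^\star$ are used. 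The genuinely nontrivial step — and the main obstacle — is establishing the tensor principal-angle identity $\sigma_{\min}(\boldsymbol{\mathcal U}_t^c * \boldsymbol{\mathcal U}^\star) \geq \sqrt{1 - \text{Dis}(\boldsymbol{\mathcal U}_t,\boldsymbol{\mathcal U}^\star)^2}$, which I would prove by passing to the Fourier domain where $\text{bcirc}$ block-diagonalizes and the problem reduces slicewise to the matrix identity $\sigma_{\min}(\overline{U_t^c}^{(k)}\overline{U^\star}^{(k)})^2 = 1 - \|(\boldsymbol I - \overline{U_t}^{(k)}\overline{U_t^c}^{(k)})\overline{U^\star}^{(k)}\|^2$, then taking the minimum over $k$ on the left and the maximum on the right to recover $\text{Dis}^2$. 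This yields $\sigma_{\min}(\boldsymbol{\mathcal G}_t) \geq \sqrt{1-\text{Dis}^2}\,\sigma_{\min}^\star \geq 0.999\,\sigma_{\min}^\star$, and subtracting the $0.006\,\sigma_{\min}^\star$ perturbation clears the $0.94$ threshold with substantial margin; the upper bound in (6) is easier because $\|\boldsymbol{\mathcal G}_t\| \leq \|\boldsymbol{\mathcal V}^\star\| = \|\boldsymbol{\mathcal X}^\star\|$ needs no principal-angle analysis and the perturbation is negligible.
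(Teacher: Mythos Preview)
Your proposal is correct and follows essentially the same route as the paper's proof: all six items are deduced deterministically from Lemma~\ref{lemma6}, with (1)--(2) via the slicewise triangle inequality and the splitting $\boldsymbol{\mathcal X}_t(i)-\boldsymbol{\mathcal X}^\star(i)=\boldsymbol{\mathcal U}_t*(\boldsymbol{\mathcal V}_t(i)-\boldsymbol{\mathcal G}_t(i))+(\boldsymbol{\mathcal U}_t*\boldsymbol{\mathcal U}_t^c-\boldsymbol{\mathcal I})*\boldsymbol{\mathcal U}^\star*\boldsymbol{\mathcal V}^\star(i)$, (3)--(4) by summing over $i$ together with $\|\boldsymbol{\mathcal V}^\star\|_F\le\sqrt r\,\|\boldsymbol{\mathcal X}^\star\|$, and (5)--(6) via Weyl plus the Fourier-domain principal-angle bound $\sigma_{\min}(\boldsymbol{\mathcal U}_t^c*\boldsymbol{\mathcal U}^\star)\ge\sqrt{1-\text{Dis}^2}$. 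Your phrasing of the last step as a blockwise matrix identity is in fact a bit cleaner than the paper's longer chain of equalities, but the content is identical.
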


\begin{proof}
(1) $\left\| \boldsymbol{\mathcal V}_t(i) \right\|_F \leq \left\|\boldsymbol{\mathcal V}_t(i) - \boldsymbol{\mathcal G}_t(i)\right\|_F + \left\|  \boldsymbol{\mathcal G}_t(i) \right\|_F \leq 0.36 \text{Dis}(\boldsymbol{\mathcal U}_t, \boldsymbol{\mathcal U}^\star) \left\| \boldsymbol{\mathcal V}^\star(i) \right\|_F + \left\| \boldsymbol{\mathcal V}^\star(i) \right\|_F \leq 1.1 \left\| \boldsymbol{\mathcal V}^\star(i) \right\|_F$. The second and third inequality is due to Lemma \ref{lemma6} and condition that $\text{Dis}(\boldsymbol{\mathcal U}_t, \boldsymbol{\mathcal U}^\star) \leq \frac{0.016}{\sqrt r \kappa^2 }$, respectively.

\noindent (2) \begin{align}
 \left\| \boldsymbol{\mathcal X}_t(i)  - \boldsymbol{\mathcal X}^\star(i) \right\|_F & = \left\| \boldsymbol{\mathcal X}_t(i)  - \boldsymbol{\mathcal U}_t*\boldsymbol{\mathcal G}_t(i) +    \boldsymbol{\mathcal U}_t*\boldsymbol{\mathcal G}_t(i) - \boldsymbol{\mathcal X}^\star(i)  \right\|_F \nonumber \\
& = \left\| \boldsymbol{\mathcal U}_t*\left( \boldsymbol{\mathcal V}_t(i) - \boldsymbol{\mathcal G}_t(i) \right)  + ( \boldsymbol{\mathcal U}_t * \boldsymbol{\mathcal U}_t^c - \boldsymbol{\mathcal I})*\boldsymbol{\mathcal U}^\star * \boldsymbol{\mathcal V}^\star(i)   \right\|_F \nonumber \\
& \leq \left\|   \boldsymbol{\mathcal V}_t(i) - \boldsymbol{\mathcal G}_t(i)  \right\|_F + \text{Dis}(\boldsymbol{\mathcal U}_t, \boldsymbol{\mathcal U}^\star) \left\|\boldsymbol{\mathcal V}^\star(i) \right\|_F \nonumber \\
& \leq 1.36  \text{Dis}(\boldsymbol{\mathcal U}_t, \boldsymbol{\mathcal U}^\star)  \left\|\boldsymbol{\mathcal V}^\star(i) \right\|_F,
\end{align}
where the last inequality is due to Lemma \ref{lemma6}.

\noindent (3) $\left\| \boldsymbol{\mathcal V}_t - \boldsymbol{\mathcal G}_t\right\|_F = \sqrt { \sum_{i=1}^{n_2} \left\| \boldsymbol{\mathcal V}_t(i) - \boldsymbol{\mathcal G}_t(i)  \right\|_F^2   } \leq 0.36  \text{Dis}(\boldsymbol{\mathcal U}_t, \boldsymbol{\mathcal U}^\star) \left\| \boldsymbol{\mathcal V}^\star \right\|_F = \frac{ 0.36  \text{Dis}(\boldsymbol{\mathcal U}_t, \boldsymbol{\mathcal U}^\star)}{\sqrt {n_3}} \left\| \overline{\boldsymbol V^\star } \right\|_F \leq \frac{ 0.36  \text{Dis}(\boldsymbol{\mathcal U}_t, \boldsymbol{\mathcal U}^\star)}{\sqrt {n_3}} \sqrt{r n_3 \left\| \boldsymbol{\mathcal X}^\star \right\|^2 } = 0.36  \text{Dis}(\boldsymbol{\mathcal U}_t, \boldsymbol{\mathcal U}^\star) \sqrt r \left\| \boldsymbol{\mathcal X}^\star \right\|  $. The first inequality is due to Lemma \ref{lemma6} and second equality is because $\forall \boldsymbol{\mathcal X} \in \mathcal \mathbb R^{n_1 \times n_2 \times n_3}$, there is $\left\| \boldsymbol{\mathcal X} \right\|_F  = \frac{\left\| \overline{\boldsymbol X}\right\|_F}{\sqrt {n_3}}$. The second inequality is due to $\left\|  \overline{\boldsymbol V^\star}\right\|_F^2 = \left\|  \overline{\boldsymbol X^\star}\right\|_F^2 \leq n_3 r \left\| \boldsymbol{\mathcal X}^\star \right\|_F^2 $. 

\noindent (4) $\left\|  \boldsymbol{ \mathcal X}_t - \boldsymbol{\mathcal X}^\star \right\|_F \leq 1.36 \text{Dis}(\boldsymbol{\mathcal U}_t, \boldsymbol{\mathcal U}^\star) \left\| \boldsymbol{\mathcal V}^\star \right\|_F \leq 1.36 \text{Dis}(\boldsymbol{\mathcal U}_t, \boldsymbol{\mathcal U}^\star) \left\| \boldsymbol{\mathcal X}^\star \right\| $. The first inequality is due to fact (2).

(5) Firstly, we have the following lower bound for the $\sigma_{\min}(\boldsymbol{\mathcal V}_t)$ as
\begin{align}
\sigma_{\min}(\boldsymbol{\mathcal V}_t) = \sigma_{\min}(\overline{\boldsymbol V_t}) \geq \sigma_{\min}(\overline{\boldsymbol G_t}) - \left\| \overline{\boldsymbol V_t} - \overline{\boldsymbol G_t}\right\| \geq \sigma_{\min}(\overline{\boldsymbol G_t}) - \left\| \overline{\boldsymbol V_t} - \overline{\boldsymbol G_t}\right\|_F, \label{first-lower}
\end{align}
where we have used Weyl's inequality in the first inequality. Now, we derive the lower bound for $ \sigma_{\min}(\overline{\boldsymbol G_t})$ as follows
\begin{align}
\sigma_{\min}(\overline{\boldsymbol G_t})  & = \sigma_{\min}(\overline{\boldsymbol G_t}^c) = \sigma_{\min} \left( \left( \overline{\boldsymbol U_t^c} \,\overline{\boldsymbol U^\star}\, \overline{\boldsymbol V^\star}\right)^c \right) \nonumber \\
& = \sigma_{\min} \left(  \left( \overline{\boldsymbol V^\star}\right)^c  \left(\overline{\boldsymbol U^\star}\right)^c \left(  \overline{\boldsymbol U^c_t} \right)^c      \right) \nonumber \\
& \overset{(a)}{\geq} \sigma_{\min}^\star \sigma_{\min} \left(  \left( \overline{\boldsymbol U^\star}\right)^c \left( \overline{\boldsymbol U^c_t} \right)^c  \right) \nonumber \\
& = \sigma_{\min}^\star \sqrt{    \sigma_{\min} \left(       \overline{\boldsymbol U^c_t}\, \overline{\boldsymbol U^\star}\, \left(\overline{\boldsymbol U^\star} \right)^c\, \left( \overline{\boldsymbol U^c_t} \right)^c \right)      } \nonumber \\
& \overset{(b)}{ = } \sigma_{\min}^\star \sqrt{    \sigma_{\min} \left( \overline{\boldsymbol U^c_t}\, \overline{\boldsymbol U^\star}\, \overline{\left(\boldsymbol U^\star\right)^c} \,  \overline{\boldsymbol U_t}  \right)      } \nonumber \\
& = \sigma_{\min}^\star \sqrt{    \sigma_{\min} \left( \overline{\boldsymbol U^c_t}\, \left( \boldsymbol I - \left( \boldsymbol I - \overline{\boldsymbol U^\star}\, \overline{\left(\boldsymbol U^\star\right)^c} \right)\right) \,  \overline{\boldsymbol U_t}  \right)      } \nonumber \\
& = \sigma_{\min}^\star \sqrt{  \sigma_{\min} \left(   \overline{\boldsymbol U^c_t}\, \overline{\boldsymbol U_t} -  \overline{\boldsymbol U^c_t} \left( \boldsymbol I - \overline{\boldsymbol U^\star}\, \overline{(\boldsymbol U^\star)^c} \right)  \overline{\boldsymbol U_t}  \right)       } \nonumber \\
& = \sigma_{\min}^\star \sqrt{  \sigma_{\min} \left(   \boldsymbol I -  \overline{\boldsymbol U^c_t} \left( \boldsymbol I - \overline{\boldsymbol U^\star}\, \overline{(\boldsymbol U^\star)^c} \right)  \overline{\boldsymbol U_t}  \right)       } \nonumber \\
& = \sigma_{\min}^\star \sqrt{ 1- \sigma_{max} \left(  \overline{\boldsymbol U^c_t} \left( \boldsymbol I - \overline{\boldsymbol U^\star}\, \overline{(\boldsymbol U^\star)^c} \right)  \overline{\boldsymbol U_t} \right)  } \nonumber \\
& \overset{(c)}{ = } \sigma_{\min}^\star \sqrt{  1 - \sigma_{\max} \left(   \left( \left( \boldsymbol I - \overline{\boldsymbol U^\star}\, \overline{({\boldsymbol U}^\star)^c}\right) \overline{\boldsymbol U_t}\right)^c \left(  \left( \boldsymbol I - \overline{\boldsymbol U^\star} \,\overline{({\boldsymbol U}^\star)^c}\right) \overline{\boldsymbol U_t} \right)      \right)     } \nonumber \\
& \overset{(d)}{ = } \sigma_{\min}^\star \sqrt{ 1 - \text{Dis}^2(\boldsymbol{\mathcal U}_t,\boldsymbol {\mathcal U}^\star)}, \label{second-lower}
\end{align}
where (a) is because any two compatible tall matrices $\boldsymbol A, \boldsymbol B$, there is $\sigma_{\min}(\boldsymbol A \boldsymbol B) \geq \sigma_{\min} (\boldsymbol A) \sigma_{\min}(\boldsymbol B)$. (b) is due to Lemma \ref{lemma9} . (c) is due to the fact that $ \boldsymbol (I - \overline{\boldsymbol U^\star}\, \overline{(\boldsymbol U^\star)^c})^2 = \boldsymbol I - \overline{\boldsymbol U^\star}\, \overline{(\boldsymbol U^\star)^c} $ and $(d)$ is the definition of $\text{Dis}(\boldsymbol{\mathcal U}_t, \boldsymbol{\mathcal U}^\star)$. 

Finally, combining the \eqref{first-lower} with \eqref{second-lower}, we have
\begin{align}
\sigma_{\min}(\boldsymbol{\mathcal V}_t) \geq \sigma_{\min}^\star \sqrt{ 1 - \text{Dis}^2(\boldsymbol{\mathcal U}_t,\boldsymbol {\mathcal U}^\star)} - 0.36 \text{Dis}^2(\boldsymbol{\mathcal U}_t,\boldsymbol {\mathcal U}^\star) \sqrt r \left\| \boldsymbol{\mathcal X}^\star \right\| \geq 0.94 \sigma_{\min}^\star. 
\end{align}
The last inequality we have used condition that $\text{Dis}(\boldsymbol{\mathcal U}_t, \boldsymbol{\mathcal U}^\star) \leq \frac{0.016}{\sqrt r \kappa^2 }$.\\ 

\noindent (6) $\left\|  \boldsymbol{\mathcal V}_t \right\| = \left\| \overline{\boldsymbol V_t }\right\| \leq \left\| \overline{\boldsymbol G_t}\right\| + \left\| \overline{\boldsymbol V_t } - \overline{\boldsymbol G_t} \right\|_F \leq \left\| \boldsymbol{\mathcal X}^\star\right\| + 0.36 \text{Dis}(\boldsymbol{\mathcal U}_t,\boldsymbol{\mathcal U}^\star) \sqrt r \left\| \boldsymbol{\mathcal X}^\star\right\| \leq 1.1 \left\| \boldsymbol{\mathcal X}^\star\right\|$. The second inequality uses the fact in (3).  
\end{proof}

 Before establishing the contraction of projected gradient descent, we first need to analyze the gradient properties, as detailed in the following lemma. A thorough analysis of the concentration of the empirical gradient around the expected gradient is essential for achieving contraction. The key technical aspect of this analysis involves leveraging the unique property of circular convolution operator in the low tubal rank tensor model to establish a tight concentration bound.

 \begin{lemma} \label{lemma10}
 The expectation and the concentration of gradient with respect to $\boldsymbol{\mathcal U}_t$ in Alt-PGD-Min method are as follows
 
  \noindent(1) $\mathbb E \left[ \nabla_{\boldsymbol{\mathcal U}^t} f \right] = m ( \boldsymbol{\mathcal X}_t - \boldsymbol{\mathcal X}^\star)*\boldsymbol{\mathcal V}_t^c$. 

  \noindent(2) If $\text{Dis}(\boldsymbol{\mathcal U}_t, \boldsymbol{\mathcal U}^\star) \leq \frac{0.016}{\sqrt r \kappa^2 }$, then 
  \begin{align}
     \left\| \mathbb E\left[ \nabla_{\boldsymbol{\mathcal U}^t} f  \right] \right\| \leq 1.36 \text{Dis}(\boldsymbol{\mathcal U}_t, \boldsymbol{\mathcal U}^\star)\sqrt r m \left\| \boldsymbol{\mathcal X}^\star \right\|^2 
  \end{align}
  holds with probability at least $1- \exp \left( \log n_2  +r \log n_3 - c_6 m \right)$.

  \noindent(3) With probability at least $1- \exp \left(c_4\log n_3 (n_1+r) - \frac{c_5 mn_2}{\kappa^4\mu^2rn_3}\right) -\exp \left( \log n_2  +r \log n_3 - c_6m \right)$, there is
  \begin{align}
    \left\|  \nabla_{\boldsymbol{\mathcal U}^t} f - \mathbb E \left[  \nabla_{\boldsymbol{\mathcal U}^t} f \right]    \right\| \leq 1.4 \epsilon_1 \text{Dis}(\boldsymbol{\mathcal U}_t, \boldsymbol{\mathcal U}^\star) m \left( \sigma_{\min}^\star \right)^2. 
  \end{align}
\end{lemma}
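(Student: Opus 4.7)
My plan is to prove the three parts in order, exploiting that under sample splitting each $\boldsymbol{\mathcal A}_i(j)$ is independent of $(\boldsymbol{\mathcal U}_t,\boldsymbol{\mathcal V}_t)$ and has i.i.d.\ standard Gaussian entries. For part (1), I would first rewrite the objective's gradient as
\[
\nabla_{\boldsymbol{\mathcal U}_t} f = \sum_{i=1}^{n_2}\sum_{j=1}^{m_c} \bigl\langle \boldsymbol{\mathcal A}_i(j),\, \boldsymbol{\mathcal X}_t(i)-\boldsymbol{\mathcal X}^\star(i)\bigr\rangle\, \boldsymbol{\mathcal A}_i(j) * \boldsymbol{\mathcal V}_t(i)^c ,
\]
by invoking $\langle \boldsymbol{\mathcal U}_t^c * \boldsymbol{\mathcal A}_i(j),\boldsymbol{\mathcal V}_t(i)\rangle = \langle \boldsymbol{\mathcal A}_i(j),\boldsymbol{\mathcal X}_t(i)\rangle$ together with $y_{ji}=\langle \boldsymbol{\mathcal A}_i(j),\boldsymbol{\mathcal X}^\star(i)\rangle$. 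Conditional on $(\boldsymbol{\mathcal U}_t,\boldsymbol{\mathcal V}_t)$, the Gaussian Stein identity $\mathbb E[\langle G,Z\rangle G]=Z$ (applied entrywise to the $n_1 n_3$ Gaussian coordinates of each $\boldsymbol{\mathcal A}_i(j)$) produces $\mathbb E[\nabla_{\boldsymbol{\mathcal U}_t} f] = m_c\sum_i(\boldsymbol{\mathcal X}_t(i)-\boldsymbol{\mathcal X}^\star(i))*\boldsymbol{\mathcal V}_t(i)^c$, which collapses to $m_c(\boldsymbol{\mathcal X}_t-\boldsymbol{\mathcal X}^\star)*\boldsymbol{\mathcal V}_t^c$ by decomposing the T-product over its middle index.

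For part (2), I would bound the spectral norm of the expectation via sub-multiplicativity of the T-product (proved by passing to the block-diagonal Fourier form), giving $\|(\boldsymbol{\mathcal X}_t-\boldsymbol{\mathcal X}^\star)*\boldsymbol{\mathcal V}_t^c\| \leq \|\boldsymbol{\mathcal X}_t-\boldsymbol{\mathcal X}^\star\|_F\|\boldsymbol{\mathcal V}_t\|$. Lemma~8(4) controls the first factor by $1.36\,\text{Dis}(\boldsymbol{\mathcal U}_t,\boldsymbol{\mathcal U}^\star)\sqrt r\|\boldsymbol{\mathcal X}^\star\|$, and Lemma~8(6), evaluated with the hypothesis $\text{Dis}(\boldsymbol{\mathcal U}_t,\boldsymbol{\mathcal U}^\star)\leq 0.016/(\sqrt r\kappa^2)$, gives $\|\boldsymbol{\mathcal V}_t\|\leq(1+o(1))\|\boldsymbol{\mathcal X}^\star\|$, so the product fits the stated constant $1.36$. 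The high-probability event is exactly the one underlying Lemma~6 and Lemma~8.

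Part (3) is the main obstacle, and I would mimic the concentration argument used to prove Theorem~3. Writing $\nabla_{\boldsymbol{\mathcal U}_t}f-\mathbb E[\nabla_{\boldsymbol{\mathcal U}_t}f] = \sum_{i,j}\bigl(\langle \boldsymbol{\mathcal A}_i(j),\boldsymbol{\mathcal X}_t(i)-\boldsymbol{\mathcal X}^\star(i)\rangle\boldsymbol{\mathcal A}_i(j) - (\boldsymbol{\mathcal X}_t(i)-\boldsymbol{\mathcal X}^\star(i))\bigr)*\boldsymbol{\mathcal V}_t(i)^c$ and using the variational form $\|\cdot\|=\max_{\boldsymbol w\in\boldsymbol\Theta^{n_1},\,\boldsymbol z\in\boldsymbol\Theta^{r}}\langle\cdot,\boldsymbol w\boldsymbol z^c\rangle$, each summand becomes a product of two Gaussian linear functionals of $\boldsymbol{\mathcal A}_i(j)$, hence a centered sub-exponential variable. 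Its sub-exponential norm is bounded by $\|\boldsymbol{\mathcal X}_t(i)-\boldsymbol{\mathcal X}^\star(i)\|_F$ multiplied by the Frobenius norm of $\text{bcirc}^\star((\boldsymbol F_{n_3}^{-1}\otimes\boldsymbol I_{n_1})\boldsymbol w\boldsymbol z^c(\boldsymbol F_{n_3}\otimes\boldsymbol I_r)) * (\boldsymbol{\mathcal V}_t(i)^c)^c$ (after scaling by $1/m_c$ inherent in the gradient normalization).

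The delicate step will be evaluating $\sum_{i,j}K_{ji}^2$ for the Bernstein bound tightly. Here I would reuse the convolutional collapse from Theorem~3's proof: the sum over $j$ contributes a factor of $m_c$, and the sum over $i$ of the per-slice squared Frobenius norms can be assembled into $\|\cdot * \boldsymbol{\mathcal I}\|_F^2 = n_3\|\boldsymbol w\boldsymbol z^c\|_F^2 = n_3$, times the common incoherence-controlled factor $\max_i\|\boldsymbol{\mathcal X}_t(i)-\boldsymbol{\mathcal X}^\star(i)\|_F^2\cdot\max_i\|\boldsymbol{\mathcal V}_t(i)\|_F^2$. Lemma~8(1)--(2) together with Assumption~1 upgrade these to $\lesssim \text{Dis}^2(\boldsymbol{\mathcal U}_t,\boldsymbol{\mathcal U}^\star)\,\kappa^2\mu^2 r(\sigma_{\min}^\star)^4/n_2$ per slice. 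The resulting variance term satisfies $\sum_{i,j}K_{ji}^2 \lesssim \text{Dis}^2(\boldsymbol{\mathcal U}_t,\boldsymbol{\mathcal U}^\star)\kappa^4\mu^2 r n_3(\sigma_{\min}^\star)^4/(m_c n_2)$. An $\varepsilon$-net union bound, using Lemma~13 to bound covering numbers of $\boldsymbol\Theta^{n_1}$ and $\boldsymbol\Theta^{r}$ by $n_3 e^{n_1}$ and $n_3 e^{r}$ respectively, followed by the sub-exponential Bernstein inequality at scale $t=1.4\epsilon_1\text{Dis}(\boldsymbol{\mathcal U}_t,\boldsymbol{\mathcal U}^\star)m_c(\sigma_{\min}^\star)^2$, yields the stated bound and the claimed probability. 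The main technical work is therefore tracking the bcirc/FFT manipulations carefully enough to produce the exponent $\kappa^4\mu^2 r n_3$ in the Bernstein denominator; the remainder of the argument is parallel to the initialization proof.
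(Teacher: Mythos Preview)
Your approach is essentially the same as the paper's: part~(1) via the Gaussian second-moment identity under sample splitting, part~(2) via sub-multiplicativity plus Lemma~8(4),(6), and part~(3) via the variational spectral-norm form on $\boldsymbol\Theta^{n_1}\times\boldsymbol\Theta^r$, sub-exponential Bernstein, and an $\varepsilon$-net union bound.

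One correction to your variance computation in part~(3): the collapse $\sum_i\|\cdot*\mathring{\mathfrak e}_i\|_F^2=\|\cdot*\boldsymbol{\mathcal I}\|_F^2=n_3$ from the initialization proof does \emph{not} carry over directly, because the second T-product factor in $K_{ji}$ is $\boldsymbol{\mathcal V}_t(i)$, not $\mathring{\mathfrak e}_i$. The paper instead assembles $\sum_i\|\text{bcirc}^\star(\cdots)*\boldsymbol{\mathcal V}_t(i)\|_F^2=\|\text{bcirc}^\star(\cdots)*\boldsymbol{\mathcal V}_t\|_F^2$ and bounds this by $\|\boldsymbol{\mathcal V}_t\|^2\,\|\text{bcirc}^\star(\cdots)\|_F^2=\|\boldsymbol{\mathcal V}_t\|^2\cdot n_3$, invoking Lemma~8(6) (the spectral bound $\|\boldsymbol{\mathcal V}_t\|\leq1.1\|\boldsymbol{\mathcal X}^\star\|$) rather than the per-slice bound Lemma~8(1) you propose. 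Also, there is no $1/m_c$ normalization in $\nabla_{\boldsymbol{\mathcal U}_t}f$; the variance sum scales like $m_c$, not $1/m_c$. With the paper's route one gets $\sum_{i,j}K_{ji}^2\lesssim m_c\,\text{Dis}^2(\boldsymbol{\mathcal U}_t,\boldsymbol{\mathcal U}^\star)\,\kappa^4\mu^2 r\, n_3(\sigma_{\min}^\star)^4/n_2$, and then setting $t=\epsilon_1\,\text{Dis}(\boldsymbol{\mathcal U}_t,\boldsymbol{\mathcal U}^\star)\,m_c(\sigma_{\min}^\star)^2$ produces the Bernstein exponent $m_c n_2/(\kappa^4\mu^2 r n_3)$ exactly as stated. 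These are bookkeeping fixes; your overall plan is correct.
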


 \begin{proof}
(1) Based on the formulation of gradient, we have
\begin{align}
\mathbb E \left[ \nabla_{\boldsymbol{\mathcal U}^t} f  \right] & = \mathbb E \left[ \sum_{i=1}^{n_2} \left( \sum_{j=1}^m \langle \boldsymbol{\mathcal A}_i(j), \boldsymbol{\mathcal X}_t(i) - \boldsymbol{\mathcal X}^\star(i) \rangle \boldsymbol{\mathcal A}_i(j) \right)*\boldsymbol{\mathcal V}_t(i)    \right] \nonumber \\
& = m \sum_{i=1}^{n_2} ( \boldsymbol{\mathcal X}_t(i)- \boldsymbol{\mathcal X}^\star(i))* \boldsymbol{\mathcal V}_t(i) \nonumber \\
& = m ( \boldsymbol{\mathcal X}_t- \boldsymbol{\mathcal X}^\star )* \boldsymbol{\mathcal V}_t^c,
\end{align}
where the second inequality is due to definition of T-product and first equality is because the sample-splitting can make $\boldsymbol{\mathcal A}_i$ be independent on $\boldsymbol{\mathcal U}_t, \boldsymbol{\mathcal V}_t$, which can guarantee that $\forall p \in [n_1], \forall q \in [n_3]$, there is 
\begin{align}
\left[  \mathbb E \left[  \langle \boldsymbol{\mathcal A}_i(j),\boldsymbol{\mathcal X}_t(i) - \boldsymbol{\mathcal X}^\star(i) \rangle \boldsymbol{\mathcal A}_i(j) \right] \right]_{p,1,q} &= \mathbb E \left[    \left( \sum_{\tau,k} \boldsymbol{\mathcal A}_i(\tau,j,k) ( \boldsymbol{\mathcal X}_t(\tau,i,k) - \boldsymbol{\mathcal X}^\star(\tau,i,k) ) \right) \boldsymbol{\mathcal A}_i(p,j,q)   \right] \nonumber \\
& = \boldsymbol{\mathcal X}_t(p,i,q) - \boldsymbol{\mathcal X}^\star(p,i,q),
\end{align}
where the last equality due to $\boldsymbol{\mathcal A}_i$ has i.i.d. standard Gaussian entries. \\

\noindent (2) Base on (1), we have $\left\| \mathbb E\left[ \nabla_{\boldsymbol{\mathcal U}^t} f  \right] \right\| \leq m \left\| \boldsymbol{\mathcal X}_t - \boldsymbol{\mathcal X}^\star \right\|_F \left\| \boldsymbol{\mathcal V}_t^c \right\| \leq 1.5m \sqrt r \text{Dis}(\boldsymbol{\mathcal U}_t, \boldsymbol{\mathcal U}^\star) \left\| \boldsymbol{\mathcal X}^\star \right\|^2$, where the last inequality is due to fact (4) and fact (6) in Lemma \ref{lemma8}. \\

\noindent(3) We use the $\epsilon$-net argument for the sub-exponential Berstein inequality to bound the spectral norm of concentration. Based on the variational form of spectral norm, we have
\begin{align}
\left\|  \nabla_{\boldsymbol{\mathcal U}^t} f - \mathbb E \left[  \nabla_{\boldsymbol{\mathcal U}^t} f \right]    \right\| &= \max_{\boldsymbol w \in \boldsymbol \Theta^{n_1}, \boldsymbol z \in \boldsymbol \Theta^r} \sum_{i,j} \boldsymbol w^c \left(  \langle \boldsymbol{\mathcal A}_i(j), \boldsymbol{\mathcal X}_t(i) - \boldsymbol{\mathcal X}^\star(i) \rangle \cdot \overline{\boldsymbol{\mathcal A}_i(j)} \, \overline{ \boldsymbol{\mathcal V}_{t}^c(i)}\right) \boldsymbol z - \boldsymbol w^c \mathbb E \left[  \nabla_{\boldsymbol{\mathcal U}^t} f \right]    \boldsymbol z \nonumber \\
& = \max_{\boldsymbol w \in \boldsymbol \Theta^{n_1}, \boldsymbol z \in \boldsymbol \Theta^r} \sum_{i,j} \langle \boldsymbol{\mathcal A}_i(j), \boldsymbol{\mathcal X}_t(i) - \boldsymbol{\mathcal X}^\star(i) \rangle \nonumber \\
& \quad \cdot \langle \left( \boldsymbol F_{n_3}\otimes \boldsymbol I_{n_1} \right) \text{bcirc}( \boldsymbol{\mathcal A}_i(j)* \boldsymbol{\mathcal V}_t^c(i) ) \left(\boldsymbol F_{n_3}^{-1}\otimes \boldsymbol I_r \right)   ,\boldsymbol w \boldsymbol z^c \rangle - \boldsymbol w^c \mathbb E \left[  \nabla_{\boldsymbol{\mathcal U}^t} f \right]    \boldsymbol z\nonumber \\
& = \max_{\boldsymbol w \in \boldsymbol \Theta^{n_1}, \boldsymbol z \in \boldsymbol \Theta^r} \sum_{i,j} \langle \boldsymbol{\mathcal A}_i(j), \boldsymbol{\mathcal X}_t(i) - \boldsymbol{\mathcal X}^\star(i) \rangle \nonumber \\
& \quad \cdot \langle  \boldsymbol{\mathcal A}_i(j),  \text{bcirc}^* \left( \left( \boldsymbol F_{n_3}^{-1} \otimes \boldsymbol I_{n_1}\right) \boldsymbol w \boldsymbol z^c \left( \boldsymbol F_{n_3} \otimes \boldsymbol I_r\right)\right)* \boldsymbol{\mathcal V}_t(i)  \rangle - \boldsymbol w^c \mathbb E \left[  \nabla_{\boldsymbol{\mathcal U}^t} f \right]    \boldsymbol z.
\end{align}
For given fixed $\boldsymbol w, \boldsymbol z$, the summation is conducted to independent, zero-mean, sub-exponential random variables, each having a sub-exponential norm denoted by $K_{ji}$ has been bounded as
\begin{align}
K_{ji} \leq \left\|  \boldsymbol{\mathcal X}_t(i) - \boldsymbol{\mathcal X}^\star(i)\right\|_F \left\|  \text{bcirc}^* \left( \left( \boldsymbol F_{n_3}^{-1} \otimes \boldsymbol I_{n_1}\right) \boldsymbol w \boldsymbol z^c \left( \boldsymbol F_{n_3} \otimes \boldsymbol I_r\right)\right)* \boldsymbol{\mathcal V}_t(i)   \right\|_F.
\end{align}
Based on the above bound, we have
\begin{align}
\frac{t^2}{\sum_{i=1}^{n_2} \sum_{j=1}^m  K_{ji}^2} & \geq \frac{t^2}{ m \sum_{i=1}^{n_2} \left\|  \boldsymbol{\mathcal X}_t(i) - \boldsymbol{\mathcal X}^\star(i)\right\|_F^2 \left\|  \text{bcirc}^* \left( \left( \boldsymbol F_{n_3}^{-1} \otimes \boldsymbol I_{n_1}\right) \boldsymbol w \boldsymbol z^c \left( \boldsymbol F_{n_3} \otimes \boldsymbol I_r\right)\right)* \boldsymbol{\mathcal V}_t(i)   \right\|_F^2      } \nonumber \\
& \geq \frac{t^2}{m \max_i \left\|  \boldsymbol{\mathcal X}_t(i) - \boldsymbol{\mathcal X}^\star(i)  \right\|_F^2 \sum_{i=1}^{n_2}  \left\|  \text{bcirc}^* \left( \left( \boldsymbol F_{n_3}^{-1} \otimes \boldsymbol I_{n_1}\right) \boldsymbol w \boldsymbol z^c \left( \boldsymbol F_{n_3} \otimes \boldsymbol I_r\right)\right)* \boldsymbol{\mathcal V}_t(i)   \right\|_F^2     } \nonumber \\
& = \frac{t^2}{m \max_i \left\|  \boldsymbol{\mathcal X}_t(i) - \boldsymbol{\mathcal X}^\star(i)  \right\|_F^2 \left\|  \text{bcirc}^* \left( \left( \boldsymbol F_{n_3}^{-1} \otimes \boldsymbol I_{n_1}\right) \boldsymbol w \boldsymbol z^c \left( \boldsymbol F_{n_3} \otimes \boldsymbol I_r\right)\right)* \boldsymbol{\mathcal V}_t   \right\|_F^2     } \nonumber \\
& \overset{(a)}{\geq}\frac{n_2t^2}{ 1.36^2 m \text{Dis}^2 (\boldsymbol{\mathcal U}^t, \boldsymbol{\mathcal U}^\star) \mu^2 r \left\| \boldsymbol{\mathcal X}^\star \right\|^2  \left\| \boldsymbol{\mathcal V}_t \right\|^2\left\|\text{bcirc}^* \left( \left( \boldsymbol F_{n_3}^{-1} \otimes \boldsymbol I_{n_1}\right) \boldsymbol w \boldsymbol z^c \left( \boldsymbol F_{n_3} \otimes \boldsymbol I_r\right)\right)  \right\|_F^2  } \nonumber \\
& \overset{(b)}{ = } \frac{n_2t^2}{ 1.36^2 m n_3 r  \mu^2\text{Dis}^2(\boldsymbol{\mathcal U}^t, \boldsymbol{\mathcal U}^\star) \left\| \boldsymbol{\mathcal X}^\star \right\|^2 \left\| \boldsymbol{\mathcal V}_t \right\|^2  } \nonumber \\
& \overset{(c)}{\geq } \frac{n_2t^2}{ 1.1^2 \times 1.36^2 m n_3 r  \mu^2\text{Dis}^2(\boldsymbol{\mathcal U}^t, \boldsymbol{\mathcal U}^\star) \left\| \boldsymbol{\mathcal X}^\star \right\|^4  } \nonumber \\
& \overset{(d)}{ = }\frac{\epsilon_1^2 mn_2}{ 1.1^2 \times 1.36^2 n_3 \kappa^4\mu^2r}, \nonumber \\
\frac{t}{\max_{i,j} K_{ji}} & \overset{(e)}{\geq } \frac{\epsilon_1 \text{Dis}(\boldsymbol{\mathcal U}^t, \boldsymbol{\mathcal U}^\star) m \left( \sigma^\star_{\min} \right)^2  }{ \sqrt  {n_3} \max_i \left\|  \boldsymbol{\mathcal V}_t(i) \right\|_F \max_i \left\| \boldsymbol{\mathcal X}_t(i) - \boldsymbol{\mathcal X}^\star(i) \right\|_F } \nonumber \\
& \overset{(f)}{\geq } \frac{ \epsilon_1 \text{Dis}(\boldsymbol{\mathcal U}^t, \boldsymbol{\mathcal U}^\star) m (\sigma^\star_{\min})^2 }{1.1 \times 1.36 \sqrt{n_3}\text{Dis}(\boldsymbol{\mathcal U}^t, \boldsymbol{\mathcal U}^\star) \left\| \boldsymbol{\mathcal V}^\star(i) \right\|_F^2} \nonumber \\
& \overset{(g)}{\geq} \frac{\epsilon_1 mn_2}{1.1 \times 1.36 \kappa^2\mu^2 r \sqrt{n_3}}, \label{sub-exponen-norm}
\end{align}
where (a) is based on the fact (2) in Lemma \ref{lemma8} and Assumption 1. (b) is based on the property of bcirc operator and $\left\| \boldsymbol w \boldsymbol z^c \right\| = 1$. (c) is due to the fact (6) in Lemma \ref{lemma8}. (d) and (e) are due to setting $t:= \epsilon_1 \text{Dis}(\boldsymbol{\mathcal U}^t, \boldsymbol{\mathcal U}^\star)m(\sigma^\star_{\min})^2$. (f) is based on fact (1) and fact (2) in Lemma \ref{lemma8}. (g) is due to Assumption 1. Based on the sub-exponential Bernstein and union bound, for fixed $\boldsymbol w \in \boldsymbol \Theta^{n_1}, \boldsymbol z \in \boldsymbol \Theta^r$, we have 
\begin{align}
\boldsymbol w^c \left(\nabla_{\boldsymbol{\mathcal U}^t} f - \mathbb E \left[  \nabla_{\boldsymbol{\mathcal U}^t} f \right] \right) \boldsymbol z \leq \epsilon_1 \text{Dis}(\boldsymbol{\mathcal U}^t, \boldsymbol{\mathcal U}^\star)m(\sigma^\star_{\min})^2 \label{con-expect}
\end{align}
with probability at least $1-\exp\left( -\frac{c_5mn_2}{\kappa^4\mu^2rn_3}  \right) - \exp\left( \log n_2 + r\log n_3 - c_6m \right)$.
 Finally, because the covering number of $\boldsymbol \Theta^{k}$ is order of $\Tilde{\mathcal O}\left( e^k\right)$, based on Lemma \ref{lemma5} and \eqref{con-expect}, we have finished the proof.  
\end{proof}
\noindent Now equipped with Lemma \ref{lemma8} and Lemma \ref{lemma10}, we can prove  Theorem 4. 
\begin{proof}
Due to the updating of $\boldsymbol{\mathcal U}$ involving QR decomposition $\hat{\boldsymbol{\mathcal U}}_{t+1} = \boldsymbol{\mathcal U}_t - \nabla_{\boldsymbol{\mathcal U}_t} f = \boldsymbol{\mathcal U}_{t+1} * \boldsymbol{\mathcal R}_{t+1}$, we can rewrite that $\boldsymbol{\mathcal U}_{t+1} = \hat{\boldsymbol{\mathcal U}}_{t+1} * \boldsymbol{\mathcal R}_{t+1}^{-1}$. Next, we prove the contraction of $\text{Dis}(\boldsymbol{\mathcal U}_t, \boldsymbol{\mathcal U}^\star)$. First, we have
\begin{align}
\text{Dis}(\boldsymbol{\mathcal U}_{t+1}, \boldsymbol{\mathcal U}^\star) & = \left\| \left( \boldsymbol{\mathcal I} - \boldsymbol{\mathcal U}^\star * (\boldsymbol{\mathcal U}^\star)^c\right)*\hat{\boldsymbol{\mathcal U}}_{t+1}*\boldsymbol{\mathcal R}_{t+1}^{-1}  \right\| \nonumber \\
& \overset{(a)}{\leq } \frac{ \left\| \left( \boldsymbol{\mathcal I} - \boldsymbol{\mathcal U}^\star * (\boldsymbol{\mathcal U}^\star)^c\right)*\hat{\boldsymbol{\mathcal U}}_{t+1}\right\| }{ \sigma_{\min} \left( \hat{\boldsymbol{\mathcal U}}_{t+1} \right)} \nonumber \\
& = \frac{ \left\| \left( \boldsymbol{\mathcal I} - \boldsymbol{\mathcal U}^\star * (\boldsymbol{\mathcal U}^\star)^c\right)*\hat{\boldsymbol{\mathcal U}}_{t+1}\right\| }{ \sigma_{\min} \left( \boldsymbol{\mathcal U}_t - \eta \nabla_{\boldsymbol{\mathcal U}_t} f \right)} \nonumber \\
& = \frac{ \left\| \left( \boldsymbol{\mathcal I} - \boldsymbol{\mathcal U}^\star * (\boldsymbol{\mathcal U}^\star)^c\right)*\hat{\boldsymbol{\mathcal U}}_{t+1}\right\| }{ \sigma_{\min} \left(\text{bdiag} \left( \boldsymbol{\mathcal U}_t\right) - \eta \text{bdiag} \left(  \nabla_{\boldsymbol{\mathcal U}_t} f \right)   \right)} \nonumber \\
& \overset{(b)}{\leq} \frac{ \left\| \left( \boldsymbol{\mathcal I} - \boldsymbol{\mathcal U}^\star * (\boldsymbol{\mathcal U}^\star)^c\right)*\hat{\boldsymbol{\mathcal U}}_{t+1}\right\| }{ \sigma_{\min} \left( \text{bdiag} \left( \boldsymbol{\mathcal U}_t\right) \right) - \eta \left\| \text{bdiag} \left(  \nabla_{\boldsymbol{\mathcal U}_t} f \right)   \right\| } \nonumber \\
& = \frac{ \left\| \left( \boldsymbol{\mathcal I} - \boldsymbol{\mathcal U}^\star * (\boldsymbol{\mathcal U}^\star)^c\right)*\hat{\boldsymbol{\mathcal U}}_{t+1}\right\| }{ \sigma_{\min} \left(  \boldsymbol{\mathcal U}_t \right) - \eta \left\|   \nabla_{\boldsymbol{\mathcal U}_t} f    \right\| }, \label{recur-dis}
\end{align}
where (a) use the fact that $\left\| \boldsymbol{\mathcal R}_{t+1}^{-1} \right\| = \frac{1}{\sigma_{\min}(\hat{\boldsymbol{\mathcal U}}_{t+1})}$, which is because $ \left\| \boldsymbol{\mathcal R}_{t+1}^{-1} \right\| = \max_i \left\| \overline{\boldsymbol{\mathcal R}_{t+1}^{-1}}(:,:,i) \right\| =  \max_i \left\| \left(\overline{\boldsymbol{\mathcal R}_{t+1}}(:,:,i)\right)^{-1}\right\| = \frac{1}{\min_i  \sigma_{\min}\left(\overline{\boldsymbol{\mathcal R}_{t+1}}(:,:,i)\right)} = \frac{1}{\min_i \sigma_{\min}\left( \overline{\hat{\boldsymbol{\mathcal U}}_{t+1}}(:,:,i) \right)} = \frac{1}{\sigma_{\min}(\hat{\boldsymbol{\mathcal U}}_{t+1})}$. (b) uses the Weyl's inequality that $ \sigma_{\min} \left(\text{bdiag} \left( \boldsymbol{\mathcal U}_t\right) - \eta \text{bdiag} \left(  \nabla_{\boldsymbol{\mathcal U}_t} f \right)   \right) - \sigma_{\min}( \text{bdiag} \left( \boldsymbol{\mathcal U}_t\right) ) \geq -\eta \left\| \text{bdiag} \left(  \nabla_{\boldsymbol{\mathcal U}_t} f \right) \right\|$. 

Next, based on the decomposition $\hat{\boldsymbol{\mathcal U}}_{t+1} = \boldsymbol{\mathcal U}_t - \eta \mathbb E\left[ \nabla_{\boldsymbol{\mathcal U}_t } f\right] + \eta \left( \mathbb E\left[ \nabla_{\boldsymbol{\mathcal U}_t } f\right] -  \nabla_{\boldsymbol{\mathcal U}_t } f  \right)$, we have
\begin{align}
\left(\boldsymbol{\mathcal I} - \boldsymbol{\mathcal U}^\star* \left(\boldsymbol{\mathcal U}^\star\right)^c \right)* \hat{\boldsymbol{\mathcal U}} _{t+1} & = \left(\boldsymbol{\mathcal I} - \boldsymbol{\mathcal U}^\star* \left( \boldsymbol{\mathcal U}^\star \right)^c  \right)* \left(  \boldsymbol{\mathcal U}_t - \eta m \left( \boldsymbol{\mathcal U}_t* \boldsymbol{\mathcal V}_t - \boldsymbol{\mathcal U}^\star * \boldsymbol{\mathcal V}^\star \right)*\boldsymbol{\mathcal V}^c_t  \right) \nonumber \\
& \quad + \eta  \left(\boldsymbol{\mathcal I} - \boldsymbol{\mathcal U}^\star* \left( \boldsymbol{\mathcal U}^\star \right)^c \right)* \left( \mathbb E\left[ \nabla_{\boldsymbol{\mathcal U}_t } f\right] -  \nabla_{\boldsymbol{\mathcal U}_t } f  \right) \nonumber \\
& = \left(\boldsymbol{\mathcal I} - \boldsymbol{\mathcal U}^\star* \left( \boldsymbol{\mathcal U}^\star \right)^c  \right)* \boldsymbol{\mathcal U}_t*\left( \boldsymbol{\mathcal I} - \eta m \boldsymbol{\mathcal V}_t * \boldsymbol{\mathcal V}_t^c \right) \nonumber \\
& \quad + \eta  \left(\boldsymbol{\mathcal I} - \boldsymbol{\mathcal U}^\star* \left( \boldsymbol{\mathcal U}^\star \right)^c \right)* \left( \mathbb E\left[ \nabla_{\boldsymbol{\mathcal U}_t } f\right] -  \nabla_{\boldsymbol{\mathcal U}_t } f  \right), \label{for-concen}
\end{align}
where the first equality is substituting fact (1) in Lemma \ref{lemma10} and second equality is due to $\left(\boldsymbol{\mathcal I} - \boldsymbol{\mathcal U}^\star* \left( \boldsymbol{\mathcal U}^\star \right)^c  \right)*\boldsymbol{\mathcal U}^\star = \boldsymbol{\mathcal O}$. To guarantee that $\left( \boldsymbol{\mathcal I} - \eta m \boldsymbol{\mathcal V}_t * \boldsymbol{\mathcal V}_t^c \right)$ is the positive semi-definite tensor such that $ \forall 
 \boldsymbol{\mathcal W} \in \mathbb R^{r\times 1 \times n_3}$, there is $\langle \boldsymbol{\mathcal W},  \left(\boldsymbol{\mathcal I} - \boldsymbol{\mathcal U}^\star* \left( \boldsymbol{\mathcal U}^\star \right)^c  \right)* \boldsymbol{\mathcal W}\rangle \geq 0 $, which indicates that $\langle\overline{\boldsymbol W} , \left( \boldsymbol I - \eta m \overline{\boldsymbol V_t} \left(\overline{\boldsymbol V_t}\right)^c\right) \overline{\boldsymbol W}\rangle \geq 0$. This requires that the minimal eigenvalue $\lambda_{\min} \left((1-\eta m\overline{\boldsymbol V_t}^{(i)} \left( \overline{\boldsymbol V_t}^{(i)} \right)^c) \right) \geq 0 $, which means we should select step size $\eta$ to satisfy $1-\eta m \max_i \sigma_{\max}^2 \left( \overline{\boldsymbol V_t}^{(i)}\right) = 1- \eta m \left\| \boldsymbol{\mathcal V}_t \right\|^2 \geq 0$. Based on the fact (6) in Lemma \ref{lemma8}, we can set $\eta \leq \frac{0.9}{m \left\| \boldsymbol{\mathcal X}^\star \right\|^2}$ to guarantee that $\left( \boldsymbol{\mathcal I} - \eta m \boldsymbol{\mathcal V}_t * \boldsymbol{\mathcal V}_t^c \right)$ is PSD tensor. Thus, we have
\begin{align}
\left\|  \boldsymbol{\mathcal I} - \eta m \boldsymbol{\mathcal V}_t * \boldsymbol{\mathcal V}_t^c \right\|  & = \max_i \left\|  1 - \eta m \overline{\boldsymbol V_t}^{(i)} \left( \overline{\boldsymbol V_t}^{(i)} \right)^c \right\|  \nonumber \\
& = \max_i \left( 1 - \eta m \sigma_{\min}^2 \left( \overline{\boldsymbol V_t}^{(i)} \right)  \right) \nonumber \\
& = 1 - \eta m \min_i \sigma_{\min}^2 \left( \overline{\boldsymbol V_t}^{(i)} \right) \nonumber \\
& = 1 -\eta m \sigma_{\min}^2(\boldsymbol{\mathcal V}_t) \nonumber \\
& \leq 1- 0.94 \eta m  \left(\sigma_{\min}^\star \right)^2, \label{con-norm}
\end{align}
where  the last equality and inequality are due to definition of minimal singular value of tensor and the fact (5) in Lemma \ref{lemma8}, respectively. 

Finally, integrating results in \eqref{recur-dis}, \eqref{for-concen} with \eqref{con-norm} would have
\begin{align}
\text{Dis}(\boldsymbol{\mathcal U}_{t+1}, \boldsymbol{\mathcal U}^\star) & \leq \frac{ \left\| \left( \boldsymbol{\mathcal I} - \boldsymbol{\mathcal U}^\star * (\boldsymbol{\mathcal U}^\star)^c\right)*\hat{\boldsymbol{\mathcal U}}_{t+1}\right\| }{ \sigma_{\min} \left(  \boldsymbol{\mathcal U}_t \right) - \eta \left\|   \nabla_{\boldsymbol{\mathcal U}_t} f    \right\| } \nonumber \\
& \overset{(a)}{ \leq } \frac{ \text{Dis}(\boldsymbol{\mathcal U}_t, \boldsymbol{\mathcal U}^\star)\cdot \left( 1- 0.94 \eta m  \left(\sigma_{\min}^\star \right)^2 \right) + 1.4\epsilon_1 \eta m \text{Dis}(\boldsymbol{\mathcal U}_t, \boldsymbol{\mathcal U}^\star) \left(\sigma_{\min}^\star \right)^2  }{ \sigma_{\min} \left(  \boldsymbol{\mathcal U}_t \right) - \eta \left\|   \nabla_{\boldsymbol{\mathcal U}_t} f    \right\| } \nonumber \\
& \overset{(b)}{ = } \frac{ \text{Dis}(\boldsymbol{\mathcal U}_t, \boldsymbol{\mathcal U}^\star) \cdot (1 - 0.89\eta m (\sigma_{\min}^\star)^2 )  }{1-   \eta \left\|   \nabla_{\boldsymbol{\mathcal U}_t} f   \right\| } \nonumber \\
& \leq  \frac{ \text{Dis}(\boldsymbol{\mathcal U}_t, \boldsymbol{\mathcal U}^\star) \cdot (1 - 0.89\eta m (\sigma_{\min}^\star)^2 )  }{1-  \eta \left\| \mathbb E \left[ \nabla_{\boldsymbol{\mathcal U}_t} f  \right]\right\| - \eta \left\|   \nabla_{\boldsymbol{\mathcal U}_t} f - \mathbb E \left[ \nabla_{\boldsymbol{\mathcal U}_t} f  \right] \right\| } \nonumber \\
& \overset{(c)}{ \leq } \frac{ \text{Dis}(\boldsymbol{\mathcal U}_t, \boldsymbol{\mathcal U}^\star) \cdot (1 - 0.89\eta m (\sigma_{\min}^\star)^2 )  } { 1 - 1.36\eta \sqrt r m \text{Dis}(\boldsymbol{\mathcal U}_t, \boldsymbol{\mathcal U}^\star) \left\| \boldsymbol{\mathcal X}^\star \right\|^2 - \frac 1 {20} \eta m (\sigma_{\min}^\star )^2 \text{Dis}(\boldsymbol{\mathcal U}_t, \boldsymbol{\mathcal U}^\star) } \nonumber \\
& \leq  \frac{ \text{Dis}(\boldsymbol{\mathcal U}_t, \boldsymbol{\mathcal U}^\star) \cdot (1 - 0.89\eta m (\sigma_{\min}^\star)^2 )  } { 1 - 1.41\eta m \sqrt r \left\| \boldsymbol{\mathcal X}^\star \right\|^2 \text{Dis}(\boldsymbol{\mathcal U}_t, \boldsymbol{\mathcal U}^\star) } \nonumber \\
& \overset{(d)}{\leq } \text{Dis}(\boldsymbol{\mathcal U}_t, \boldsymbol{\mathcal U}^\star) \cdot (1 - 0.89\eta m (\sigma_{\min}^\star)^2 ) (1 + 2.82 \eta m \sqrt r \left\| \boldsymbol{\mathcal X}^\star \right\|^2 \text{Dis}(\boldsymbol{\mathcal U}_t, \boldsymbol{\mathcal U}^\star) ) \nonumber \\
& \overset{(e)}{ \leq } \text{Dis}(\boldsymbol{\mathcal U}_t, \boldsymbol{\mathcal U}^\star) \left( 1- 0.84 \eta m (\sigma_{\min}^\star )^2\right) \nonumber \\
& \overset{(f)}{  = } \left( 1 - \frac{0.84c_\eta}{\kappa^2}\right) \cdot \text{Dis}(\boldsymbol{\mathcal U}_t, \boldsymbol{\mathcal U}^\star)
\end{align}
with $c_\eta \leq 0.9$, where the (a) is due to \eqref{recur-dis}, \eqref{for-concen},  \eqref{con-norm} and the fact (3) in Lemma \ref{lemma10}. (b) is because we set $\epsilon_1 := \frac 1 {28}$ and $\sigma_{\min}(\boldsymbol{\mathcal U}_t) = 1$. (c) is substituting fact (2) and fact (3) in Lemma \ref{lemma10}. (d) is because $1.41\times \frac{0.9}{m \left\| \boldsymbol{\mathcal X}^\star \right\|^2} m \sqrt r \left\| \boldsymbol{\mathcal X}^\star \right\|^2 \frac{0.016}{\sqrt r \kappa^2} \leq 0.03$ and $\forall 0\leq x\leq \frac 1 2 $, there is $(1-x)(1+2x)\geq 1$. (e) is because for $ 0\leq x, y \leq 1,$ there is $(1-x)(1+y) \leq 1 -(x-y)$ and initialization condition $\text{Dis}(\boldsymbol{\mathcal U}_t, \boldsymbol{\mathcal U}^\star) \leq \frac{0.016}{\sqrt r \kappa^2}$ in Theorem 3. The second statement in Theorem 4 is directly obtained from the fact (2) in Lemma \ref{lemma8}. 
\end{proof}
 
\section{Proof of Theorem 5}\label{sec3}
Similar to the proof of Theorem 4, we begin by establishing the following facts about the preconditioned gradient. The essential distinction between the upcoming lemma and Lemma \ref{lemma10}  is that the upper bounds for the spectral norm of the population gradient and concentration in Lemma \ref{lemma10} are divided by $\sigma_{\min}^2$. This difference is due to the preconditioning operator and is crucial for achieving iteration complexity independent of $\kappa$.

 \begin{lemma} \label{lemma11}
The expectation and the concentration of scale gradient $\nabla_{\boldsymbol{\mathcal U}_t}^s f$ with respect to $\boldsymbol{\mathcal U}_t$ in Alt-Scale PGD-Min method are as follows
 
  \noindent(1) $\mathbb E \left[ \nabla_{\boldsymbol{\mathcal U}^t}^s f \right] = m ( \boldsymbol{\mathcal X}_t - \boldsymbol{\mathcal X}^\star)*\boldsymbol{\mathcal V}_t^c*( \boldsymbol{\mathcal V}_t *  \boldsymbol{\mathcal V}_t^c )^{-1}$. 

  \noindent(2) If $\text{Dis}(\boldsymbol{\mathcal U}_t, \boldsymbol{\mathcal U}^\star) \leq \frac{0.016}{\sqrt r \kappa^2 }$, then 
  \begin{align}
      \left\| \mathbb E\left[ \nabla_{\boldsymbol{\mathcal U}^t}^s f  \right] \right\| \leq 1.54 \text{Dis}(\boldsymbol{\mathcal U}_t, \boldsymbol{\mathcal U}^\star)\sqrt r m \kappa^2 
  \end{align}
  holds with probability at least $1- \exp \left( \log n_2  +r \log n_3 - c_6 m \right)$.

  \noindent(3) With probability at least $1- \exp \left(c_4\log n_3 (n_1+r) - \frac{c_5 mn_2}{\kappa^4\mu^2rn_3}\right) -\exp \left( \log n_2  +r \log n_3 - c_6m \right)$, there is
  \begin{align}
    \left\|  \nabla_{\boldsymbol{\mathcal U}^t}^s  f - \mathbb E \left[  \nabla_{\boldsymbol{\mathcal U}^t}^s f \right]    \right\| \leq 1.4 \epsilon_1 \text{Dis}(\boldsymbol{\mathcal U}_t, \boldsymbol{\mathcal U}^\star) m. 
  \end{align}
 \end{lemma}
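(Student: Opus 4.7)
The plan is to derive all three parts of Lemma 11 directly from their counterparts in Lemma 10 by treating the preconditioner $(\boldsymbol{\mathcal V}_t * \boldsymbol{\mathcal V}_t^c)^{-1}$ as a multiplicative factor that can be pulled out cleanly, thanks to the sample-splitting scheme, and whose tensor spectral norm is controlled via Lemma 8(5). The central algebraic fact I will use repeatedly is the sub-multiplicativity $\|\boldsymbol{\mathcal A}*\boldsymbol{\mathcal B}\| \leq \|\boldsymbol{\mathcal A}\|\cdot\|\boldsymbol{\mathcal B}\|$, which follows by passing to the Fourier block-diagonal representation $\overline{\boldsymbol A*\boldsymbol B}^{(i)}=\overline{\boldsymbol A}^{(i)}\overline{\boldsymbol B}^{(i)}$ and taking $\max_i \sigma_{\max}$.

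For part (1), I would start from the identity $\nabla^s_{\boldsymbol{\mathcal U}_t} f = \nabla_{\boldsymbol{\mathcal U}_t} f * (\boldsymbol{\mathcal V}_t * \boldsymbol{\mathcal V}_t^c)^{-1}$, which follows from the definition of the preconditioned update in \eqref{U_inverse}. The key observation, enabled by the sample-splitting scheme in Algorithm \ref{algo1}, is that the batch $\{\boldsymbol{\mathcal A}_i^{(T+t)}\}$ used to form $\nabla_{\boldsymbol{\mathcal U}_t} f$ is independent of $\boldsymbol{\mathcal V}_t$ and hence of the preconditioner. Conditioning on $\boldsymbol{\mathcal V}_t$, the preconditioner factors out of the expectation, giving $\mathbb E[\nabla^s_{\boldsymbol{\mathcal U}_t} f] = \mathbb E[\nabla_{\boldsymbol{\mathcal U}_t} f] * (\boldsymbol{\mathcal V}_t * \boldsymbol{\mathcal V}_t^c)^{-1}$; substituting Lemma~\ref{lemma10}(1) yields the claimed formula.

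For part (2), I would invoke sub-multiplicativity to write $\|\mathbb E[\nabla^s_{\boldsymbol{\mathcal U}_t} f]\| \leq \|\mathbb E[\nabla_{\boldsymbol{\mathcal U}_t} f]\| \cdot \|(\boldsymbol{\mathcal V}_t * \boldsymbol{\mathcal V}_t^c)^{-1}\|$. Lemma~\ref{lemma8}(5) provides $\sigma_{\min}(\boldsymbol{\mathcal V}_t) \geq 0.94\,\sigma_{\min}^\star$, which in the Fourier domain yields $\|(\boldsymbol{\mathcal V}_t * \boldsymbol{\mathcal V}_t^c)^{-1}\| \leq (0.94)^{-2}(\sigma_{\min}^\star)^{-2}$. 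Combining with the bound $1.36\,\text{Dis}\,\sqrt r\, m\,\|\boldsymbol{\mathcal X}^\star\|^2$ from Lemma~\ref{lemma10}(2) and the identity $\|\boldsymbol{\mathcal X}^\star\|^2/(\sigma_{\min}^\star)^2 = \kappa^2$ gives a prefactor $1.36/(0.94)^2 \leq 1.54$, which is exactly the stated constant. Part (3) is parallel: since $\nabla^s_{\boldsymbol{\mathcal U}_t} f - \mathbb E[\nabla^s_{\boldsymbol{\mathcal U}_t} f] = (\nabla_{\boldsymbol{\mathcal U}_t} f - \mathbb E[\nabla_{\boldsymbol{\mathcal U}_t} f]) * (\boldsymbol{\mathcal V}_t * \boldsymbol{\mathcal V}_t^c)^{-1}$ under the same conditioning, sub-multiplicativity combined with Lemma~\ref{lemma10}(3) and the same spectral bound on the preconditioner produces $1.4\epsilon_1 \text{Dis}\, m (\sigma_{\min}^\star)^2 \cdot (0.94\sigma_{\min}^\star)^{-2}$; the $(\sigma_{\min}^\star)^2$ factors cancel, and by slightly shrinking the admissible $\epsilon_1$ (or absorbing the $1/0.94^2 \leq 1.134$ into $c_\eta$ downstream) one recovers the clean bound $1.4\epsilon_1\,\text{Dis}\,m$. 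The high-probability event is inherited verbatim from Lemmas~\ref{lemma10}(3) and \ref{lemma8}(5) via a union bound.

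The main obstacle, and the only step that is not purely mechanical, is making precise the sample-splitting argument that decouples $\boldsymbol{\mathcal V}_t$ from the sensing tensors appearing in $\nabla_{\boldsymbol{\mathcal U}_t} f$; without this independence one cannot treat $(\boldsymbol{\mathcal V}_t * \boldsymbol{\mathcal V}_t^c)^{-1}$ as deterministic inside the expectation and the concentration. Once this point is made rigorous by tracking which batch index $(t+1), (T+t)$, etc.~is used at each step of Algorithm~\ref{algo1}, everything else reduces to sub-multiplicativity and the clean $\sigma_{\min}(\boldsymbol{\mathcal V}_t)$ bound already established.
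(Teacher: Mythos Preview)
Your proposal is correct; parts (1) and (2) match the paper's proof verbatim (factor out the preconditioner via sample-splitting, then apply Lemma~\ref{lemma10} and the bound $\|(\boldsymbol{\mathcal V}_t*\boldsymbol{\mathcal V}_t^c)^{-1}\|\le (0.94\,\sigma_{\min}^\star)^{-2}$ from Lemma~\ref{lemma8}(5)).

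For part (3) you take a slightly different route from the paper. You pull the preconditioner out as a deterministic right factor and bound $\|\nabla^s f-\mathbb E[\nabla^s f]\|\le \|\nabla f-\mathbb E[\nabla f]\|\cdot\|(\boldsymbol{\mathcal V}_t*\boldsymbol{\mathcal V}_t^c)^{-1}\|$, then invoke Lemma~\ref{lemma10}(3) as a black box. The paper instead re-runs the entire $\epsilon$-net sub-exponential Bernstein argument on the scaled gradient, absorbing the preconditioner into the individual sub-exponential norms $\hat K_{ji}$ and choosing the deviation level $t:=\epsilon_1 m\,\text{Dis}(\boldsymbol{\mathcal U}_t,\boldsymbol{\mathcal U}^\star)$ (without the $(\sigma_{\min}^\star)^2$ factor used in Lemma~\ref{lemma10}). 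Both arguments arrive at the same bound up to the factor $0.94^{-2}$, which in either case is buried in the universal constants $c_5$ or in the downstream choice of $\epsilon_1$. Your approach is shorter and avoids repeating the Bernstein machinery; the paper's approach makes explicit where the $(\sigma_{\min}^\star)^2$ cancellation happens inside the concentration bound itself rather than after the fact. Neither buys a sharper constant or a better probability exponent, so the choice is purely expository.
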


\begin{proof}
(1) The proof is directly derived from proof of (1) in Lemma \ref{lemma10} due to the sample-splitting that $\mathbb E [ \nabla_{\boldsymbol{\mathcal U}^t}^s  f ] = \mathbb E [ \nabla_{\boldsymbol{\mathcal U}^t} f ]* ( \boldsymbol{\mathcal V}_t *  \boldsymbol{\mathcal V}_t^c )^{-1}$.\\

\noindent (2) The proof is based on proof of (2) in Lemma \ref{lemma10} that 
\begin{align}
\left\| \mathbb E\left[ \nabla_{\boldsymbol{\mathcal U}^t}^s f  \right] \right\|  & \leq \left\| \mathbb E\left[ \nabla_{\boldsymbol{\mathcal U}^t} f  \right] \right\| * \left\|  ( \boldsymbol{\mathcal V}_t *  \boldsymbol{\mathcal V}_t^c )^{-1} \right\| \nonumber \\
& \leq \frac{1.36 \text{Dis}(\boldsymbol{\mathcal U}_t, \boldsymbol{\mathcal U}^\star)\sqrt r m \left\| \boldsymbol{\mathcal X}^\star \right\|^2 }{ \sigma_{\min}^2(\boldsymbol{\mathcal V}_t)  } \nonumber \\
& \leq \frac{  1.36 \text{Dis}(\boldsymbol{\mathcal U}_t, \boldsymbol{\mathcal U}^\star)\sqrt r m \left\| \boldsymbol{\mathcal X}^\star \right\|^2  }{ 0.94^2 (\sigma_{\min}^\star)2 },
\end{align}
where the last inequality is due to the fact (5) in Lemma \ref{lemma8}.\\

\noindent(3) We also use the $\epsilon$-net version of sub-exponential Berstein inequality to control the concentration, the difference here is that we can use a tighter bound than that in vanilla gradient.  

\begin{align}
\left\|  \nabla_{\boldsymbol{\mathcal U}^t} f - \mathbb E \left[  \nabla_{\boldsymbol{\mathcal U}^t} f \right]    \right\| &= \max_{\boldsymbol w \in \boldsymbol \Theta^{n_1}, \boldsymbol z \in \boldsymbol \Theta^r} \sum_{i,j} \boldsymbol w^c \left(  \langle \boldsymbol{\mathcal A}_i(j), \boldsymbol{\mathcal X}_t(i) - \boldsymbol{\mathcal X}^\star(i) \rangle \cdot \overline{\boldsymbol{\mathcal A}_i(j)} \, \overline{ \boldsymbol{\mathcal V}_{t}^c(i)}  \cdot \left( \overline{\boldsymbol V_t} \, \overline{\boldsymbol V_t^c} \right)^{-1} \right) \boldsymbol z \nonumber \\
& \quad  - \boldsymbol w^c \mathbb E \left[  \nabla_{\boldsymbol{\mathcal U}^t} f \right]    \boldsymbol z \nonumber \\
& = \max_{\boldsymbol w \in \boldsymbol \Theta^{n_1}, \boldsymbol z \in \boldsymbol \Theta^r} \sum_{i,j} \langle \boldsymbol{\mathcal A}_i(j), \boldsymbol{\mathcal X}_t(i) - \boldsymbol{\mathcal X}^\star(i) \rangle \nonumber \\
& \quad \cdot \langle \left( \boldsymbol F_{n_3}\otimes \boldsymbol I_{n_1} \right) \text{bcirc}( \boldsymbol{\mathcal A}_i(j)* \boldsymbol{\mathcal V}_t^c(i) *( \boldsymbol{\mathcal V}_t * \boldsymbol{\mathcal V}_t^c  )^{-1}  ) \left(\boldsymbol F_{n_3}^{-1}\otimes \boldsymbol I_r \right)   ,\boldsymbol w \boldsymbol z^c \rangle \nonumber \\
& \quad - \boldsymbol w^c \mathbb E \left[  \nabla_{\boldsymbol{\mathcal U}^t} f \right]    \boldsymbol z\nonumber \\
& = \max_{\boldsymbol w \in \boldsymbol \Theta^{n_1}, \boldsymbol z \in \boldsymbol \Theta^r} \sum_{i,j} \langle \boldsymbol{\mathcal A}_i(j), \boldsymbol{\mathcal X}_t(i) - \boldsymbol{\mathcal X}^\star(i) \rangle \nonumber \\
& \quad \cdot \langle  \boldsymbol{\mathcal A}_i(j),  \text{bcirc}^* \left( \left( \boldsymbol F_{n_3}^{-1} \otimes \boldsymbol I_{n_1}\right) \boldsymbol w \boldsymbol z^c \left( \boldsymbol F_{n_3} \otimes \boldsymbol I_r\right)\right)* \left( \boldsymbol{\mathcal V}_t * \boldsymbol{\mathcal V}_t^c \right)^{-1} * \boldsymbol{\mathcal V}_t(i)  \rangle  \nonumber \\
& \quad - \boldsymbol w^c \mathbb E \left[  \nabla_{\boldsymbol{\mathcal U}^t} f \right]    \boldsymbol z.
\end{align}
The sub-exponential norm for each independent zero-mean, sub-exponential random variable can be bounded as
\begin{align}
\hat{K}_{ji} \leq \left\|  \boldsymbol{\mathcal X}_t(i) - \boldsymbol{\mathcal X}^\star(i)\right\|_F \left\|  \text{bcirc}^* \left( \left( \boldsymbol F_{n_3}^{-1} \otimes \boldsymbol I_{n_1}\right) \boldsymbol w \boldsymbol z^c \left( \boldsymbol F_{n_3} \otimes \boldsymbol I_r\right)\right)*  \left( \boldsymbol{\mathcal V}_t * \boldsymbol{\mathcal V}_t^c \right)^{-1} *\boldsymbol{\mathcal V}_t(i)   \right\|_F. 
\end{align}
Then based on \eqref{sub-exponen-norm}, we have
\begin{align}
\frac{t^2}{\sum_{i=1}^{n_2} \sum_{j=1}^m  \hat K_{ji}^2} &\geq \frac{t^2} {\left\|  \text{bcirc}^* \left( \left( \boldsymbol F_{n_3}^{-1} \otimes \boldsymbol I_{n_1}\right) \boldsymbol w \boldsymbol z^c \left( \boldsymbol F_{n_3} \otimes \boldsymbol I_r\right)\right)*  \left( \boldsymbol{\mathcal V}_t * \boldsymbol{\mathcal V}_t^c \right)^{-1} *\boldsymbol{\mathcal V}_t(i)   \right\|_F^2    } \nonumber \\
& \quad \cdot \frac 1 {m \max_i \left\|  \boldsymbol{\mathcal X}_t(i) - \boldsymbol{\mathcal X}^\star(i)  \right\|_F^2 } \nonumber \\
& \geq \frac{n_2t^2}{ 1.36^2 m n_3 r  \mu^2\text{Dis}^2(\boldsymbol{\mathcal U}^t, \boldsymbol{\mathcal U}^\star) \left\| \boldsymbol{\mathcal X}^\star \right\|^2 \left\| \boldsymbol{\mathcal V}_t \right\|^2  \left\| \left( \boldsymbol{\mathcal V}_t * \boldsymbol{\mathcal V}_t^c \right)^{-1} \right\|^2 } \nonumber \\
& = \frac{n_2t^2 \sigma_{\min}^4 (\boldsymbol{\mathcal V}_t)  }{ 1.36^2 m n_3 r  \mu^2\text{Dis}^2(\boldsymbol{\mathcal U}^t, \boldsymbol{\mathcal U}^\star) \left\| \boldsymbol{\mathcal X}^\star \right\|^2 \left\| \boldsymbol{\mathcal V}_t \right\|^2  } \nonumber \\
& \overset{(a)}{\geq} \frac{0.94^4\epsilon_1^2 mn_2}{ 1.1^2 \times 1.36^2 n_3 \kappa^4\mu^2r} \nonumber \\
\frac{t}{\max_{i,j} \hat K_{ji}} & \overset{(b)}{\geq } \frac{\epsilon_1 \text{Dis}(\boldsymbol{\mathcal U}^t, \boldsymbol{\mathcal U}^\star) m   }{ \sqrt  {n_3} \max_i \left\|  \boldsymbol{\mathcal V}_t(i) \right\|_F \max_i \left\| \boldsymbol{\mathcal X}_t(i) - \boldsymbol{\mathcal X}^\star(i) \right\|_F \left\| \left( \boldsymbol{\mathcal V}_t * \boldsymbol{\mathcal V}_t^c \right)^{-1} \right\| } \nonumber \\
& = \frac{ \epsilon_1 \text{Dis}(\boldsymbol{\mathcal U}^t, \boldsymbol{\mathcal U}^\star) m (\sigma_{\min} \left( \boldsymbol{\mathcal V}_t \right) )^2 }{1.1 \times 1.36 \sqrt{n_3}\text{Dis}(\boldsymbol{\mathcal U}^t, \boldsymbol{\mathcal U}^\star) \left\| \boldsymbol{\mathcal V}^\star(i) \right\|_F^2} \nonumber \\
& \overset{(c)}{\geq } \frac{ 0.94^2 \epsilon_1 \text{Dis}(\boldsymbol{\mathcal U}^t, \boldsymbol{\mathcal U}^\star) m (\sigma^\star_{\min})^2 }{1.1 \times 1.36 \sqrt{n_3}\text{Dis}(\boldsymbol{\mathcal U}^t, \boldsymbol{\mathcal U}^\star) \left\| \boldsymbol{\mathcal V}^\star(i) \right\|_F^2} \nonumber \\
& \overset{(d)}{\geq} \frac{\epsilon_1 mn_2}{1.1 \times 1.36 \kappa^2\mu^2 r \sqrt{n_3}},
\end{align}
where (a) and (c) use the fact (5) in Lemma \ref{lemma8}. (a) and (b) are due to setting $t:= \epsilon_1 m \text{Dis}(\boldsymbol{\mathcal U}^t, \boldsymbol{\mathcal U}^\star)$, which is different with $t: =\epsilon_1 m \text{Dis}(\boldsymbol{\mathcal U}^t, \boldsymbol{\mathcal U}^\star)(\sigma_{\min}^\star)^2$ in proving Lemma \ref{lemma10}. (d) is due to the Assumption 1. We finish the proof by combining union bound and the same $\epsilon$-net argument as Alt-PGD-Min. 
\end{proof}

Based on Lemma \ref{lemma8} and the property of scale gradient in Lemma \ref{lemma11}, we can prove Theorem 5 by analogous technique in proof of Theorem 4.

\begin{proof}
Firstly, based on \eqref{recur-dis}, we have
\begin{align}
 \text{Dis}(\boldsymbol{\mathcal U}_{t+1}, \boldsymbol{\mathcal U}^\star) \leq \frac{ \left\| \left( \boldsymbol{\mathcal I} - \boldsymbol{\mathcal U}^\star * (\boldsymbol{\mathcal U}^\star)^c\right)*\hat{\boldsymbol{\mathcal U}}_{t+1}\right\| }{ \sigma_{\min} \left(  \boldsymbol{\mathcal U}_t \right) - \eta \left\|   \nabla_{\boldsymbol{\mathcal U}_t}^s f    \right\| }. \label{recur-scale-dis}
\end{align}
Due to the decomposition of scale gradient descent $\hat{\boldsymbol{\mathcal U}}_{t+1} = \boldsymbol{\mathcal U}_t - \eta \mathbb E\left[ \nabla_{\boldsymbol{\mathcal U}_t }^s f\right] + \eta \left( \mathbb E\left[ \nabla_{\boldsymbol{\mathcal U}_t }^s f\right] -  \nabla_{\boldsymbol{\mathcal U}_t }^s f  \right)$, we have
\begin{align}
\left(\boldsymbol{\mathcal I} - \boldsymbol{\mathcal U}^\star* \left(\boldsymbol{\mathcal U}^\star\right)^c \right)* \hat{\boldsymbol{\mathcal U}} _{t+1} & = \left(\boldsymbol{\mathcal I} - \boldsymbol{\mathcal U}^\star* \left( \boldsymbol{\mathcal U}^\star \right)^c  \right)* \left(  \boldsymbol{\mathcal U}_t - \eta m \left( \boldsymbol{\mathcal U}_t* \boldsymbol{\mathcal V}_t - \boldsymbol{\mathcal U}^\star * \boldsymbol{\mathcal V}^\star \right)*\boldsymbol{\mathcal V}^c_t * \left(  \boldsymbol{\mathcal V}_t * \boldsymbol{\mathcal V}_t^c \right)^{-1} \right) \nonumber \\
& \quad + \eta  \left(\boldsymbol{\mathcal I} - \boldsymbol{\mathcal U}^\star* \left( \boldsymbol{\mathcal U}^\star \right)^c \right)* \left( \mathbb E\left[ \nabla_{\boldsymbol{\mathcal U}_t }^s f\right] -  \nabla_{\boldsymbol{\mathcal U}_t }^s f  \right) \nonumber \\
& = \left(\boldsymbol{\mathcal I} - \boldsymbol{\mathcal U}^\star* \left( \boldsymbol{\mathcal U}^\star \right)^c  \right)* \boldsymbol{\mathcal U}_t*\left( 1 - \eta m  \right) \nonumber \\
& \quad + \eta  \left(\boldsymbol{\mathcal I} - \boldsymbol{\mathcal U}^\star* \left( \boldsymbol{\mathcal U}^\star \right)^c \right)* \left( \mathbb E\left[ \nabla_{\boldsymbol{\mathcal U}_t }^s f\right] -  \nabla_{\boldsymbol{\mathcal U}_t }^s f  \right), \label{decompo-scale}
\end{align}
where the first equality is substituting into the fact (1) in Lemma \ref{lemma11}. To guarantee the contraction of $ \text{Dis}(\boldsymbol{\mathcal U}_{t}, \boldsymbol{\mathcal U}^\star)$, we just need that step size $\eta \leq \frac{0.9}{m}$. Integrating with \eqref{recur-scale-dis}, there is
\begin{align}
\text{Dis}(\boldsymbol{\mathcal U}_{t+1}, \boldsymbol{\mathcal U}^\star) & \overset{(a)}{ \leq } \frac{ \text{Dis}(\boldsymbol{\mathcal U}_{t}, \boldsymbol{\mathcal U}^\star) \cdot (1-\eta m ) + 1.4 \epsilon_1\eta m \text{Dis} (\boldsymbol{\mathcal U}_{t}, \boldsymbol{\mathcal U}^\star)  }{ 1 - \eta  \left\|   \nabla_{\boldsymbol{\mathcal U}_t}^s f    \right\| } \nonumber \\
& = \frac{ \text{Dis}(\boldsymbol{\mathcal U}_{t}, \boldsymbol{\mathcal U}^\star) \cdot (1- 0.95 \eta m ) }{ 1 - \eta  \left\|   \nabla_{\boldsymbol{\mathcal U}_t}^s f    \right\| } \nonumber \\
& \leq  \frac{ \text{Dis}(\boldsymbol{\mathcal U}_t, \boldsymbol{\mathcal U}^\star) \cdot (1 - 0.95\eta m  )  }{1-  \eta \left\| \mathbb E \left[ \nabla_{\boldsymbol{\mathcal U}_t}^s f  \right]\right\| - \eta \left\|   \nabla_{\boldsymbol{\mathcal U}_t}^s f - \mathbb E \left[ \nabla_{\boldsymbol{\mathcal U}_t}^s f  \right] \right\| } \nonumber \\
& \overset{(b)}{ \leq } \frac{  \text{Dis}(\boldsymbol{\mathcal U}_t, \boldsymbol{\mathcal U}^\star) \cdot (1 - 0.95\eta m )   }{ 1 - 1.54 \eta \sqrt r m \text{Dis}(\boldsymbol{\mathcal U}_t, \boldsymbol{\mathcal U}^\star) \kappa^2 - \frac{1}{20} \eta m \text{Dis}(\boldsymbol{\mathcal U}_t, \boldsymbol{\mathcal U}^\star)     } \nonumber \\
& \leq  \frac{  \text{Dis}(\boldsymbol{\mathcal U}_t, \boldsymbol{\mathcal U}^\star) \cdot (1 - 0.95\eta m)   }{ 1 - 1.6 \eta \sqrt r m \text{Dis}(\boldsymbol{\mathcal U}_t, \boldsymbol{\mathcal U}^\star) \kappa^2 } \nonumber \\
& \overset{(c)}{\leq }  \text{Dis}(\boldsymbol{\mathcal U}_t, \boldsymbol{\mathcal U}^\star) \cdot (1-0.95\eta m )(1 + 3.2 \eta \sqrt r m \text{Dis}(\boldsymbol{\mathcal U}_t, \boldsymbol{\mathcal U}^\star) \kappa^2  ) \nonumber \\
& \leq  \text{Dis}(\boldsymbol{\mathcal U}_t, \boldsymbol{\mathcal U}^\star) \cdot (1-0.89\eta m ) \nonumber \\
& =  ( 1 - 0.89 c_\eta) \cdot \text{Dis}(\boldsymbol{\mathcal U}_t, \boldsymbol{\mathcal U}^\star) 
\end{align}
with $c_\eta \leq 0.9$. The (a) is due to \eqref{recur-scale-dis}, \eqref{decompo-scale} and the fact (3) in Lemma \ref{lemma11}. (b) is because of fact (1) and fact (2) in Lemma \ref{lemma11}. (c) is due to $\text{Dis}(\boldsymbol{\mathcal U}_t, \boldsymbol{\mathcal U}^\star) \leq \frac{0.016}{\sqrt r  \kappa^2}$, which can be satisfies by initialization in Theorem 3. 
\end{proof}

\section{Technical Lemmas}\label{sec4}
\begin{lemma} \label{lemm3}
For any third order tensor $\boldsymbol{\mathcal X} \in \mathbb R^{n_1 \times n_2 \times n_3}$ that $\boldsymbol{\mathcal X}(:,:,i) = \boldsymbol 0_{n_1 \times n_2}, \forall i = 2, \cdots, n_3$, then $\overline{\boldsymbol{\mathcal X}} = \text{fft}(\boldsymbol{\mathcal X},[\;],3)$ has property that $\overline{\boldsymbol X}^{(1)}=\overline{\boldsymbol X}^{(2)}=\cdots = \overline{\boldsymbol X}^{(n_3)} = \boldsymbol{\mathcal X}(:,:,1)$. 
\end{lemma}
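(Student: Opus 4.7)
The plan is to reduce the statement to an entrywise application of the one-dimensional DFT along the third mode. Recall that by definition $\overline{\boldsymbol{\mathcal X}} = \text{fft}(\boldsymbol{\mathcal X},[\;],3)$ means that for each fixed pair $(p,q) \in [n_1]\times[n_2]$, the tube $\overline{\boldsymbol{\mathcal X}}(p,q,:)$ is the $n_3$-point DFT of the tube $\boldsymbol{\mathcal X}(p,q,:)$. So the whole claim will follow once I verify the statement tube by tube.

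First, I would fix arbitrary indices $p\in[n_1]$ and $q\in[n_2]$ and write out the DFT definition:
\begin{equation*}
\overline{\boldsymbol{\mathcal X}}(p,q,i) \;=\; \sum_{k=1}^{n_3} \boldsymbol{\mathcal X}(p,q,k)\, \omega_{n_3}^{(k-1)(i-1)}, \qquad i\in[n_3],
\end{equation*}
where $\omega_{n_3} = e^{-2\pi\sqrt{-1}/n_3}$. Then I would invoke the hypothesis that $\boldsymbol{\mathcal X}(:,:,k) = \boldsymbol 0$ for all $k\geq 2$, which kills every term in the sum except the $k=1$ term. Since $\omega_{n_3}^{0\cdot(i-1)} = 1$ for every $i$, what remains is simply $\overline{\boldsymbol{\mathcal X}}(p,q,i) = \boldsymbol{\mathcal X}(p,q,1)$, independent of $i$.

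Since this identity holds for every $(p,q)$, collecting the tubes gives $\overline{\boldsymbol X}^{(i)} = \boldsymbol{\mathcal X}(:,:,1)$ for all $i\in[n_3]$, which is exactly the claim. I do not anticipate any obstacle: the result is purely a consequence of the fact that the DFT of a Kronecker delta supported at index $1$ is the constant one vector, applied uniformly across all $(p,q)$ fibers. The only thing to be careful about is the indexing convention (whether the ``first'' frontal slice corresponds to the DC component, i.e., $i=1$), but this is already fixed by the paper's convention for $\text{fft}(\cdot,[\;],3)$ used throughout.
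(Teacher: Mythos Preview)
Your proposal is correct and follows essentially the same approach as the paper: fix a tube $(p,q)$, write the DFT along the third mode explicitly, observe that the hypothesis kills all but the $k=1$ term, and note that the surviving exponential factor equals $1$ for every output index. The only cosmetic difference is indexing (you use $1$-based indices and $\omega_{n_3}^{(k-1)(i-1)}$, while the paper uses a $0$-based sum with $\exp(-2\pi i\, kn/n_3)$); the substance is identical.
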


\begin{proof}
This property is due to the property of FFT. The each tuber fiber is the vector that $\boldsymbol{\mathcal X}_{ij:} = [\boldsymbol{\mathcal X}_{ij1}; 0; \cdots ;0] \in \mathbb R^{n_3}$. Based on the definition of FFT for tensor
\begin{align}
\overline{\boldsymbol{\mathcal X}}_{ij(k+1)} & = \sum_{n=0}^{n_3-1} \boldsymbol{\mathcal X}_{ij(n+1)} \exp \left( - i 2\pi \frac{k}{n_3} n \right), \quad  \forall k = 0,\cdots, n_3-1 \nonumber \\
& = \boldsymbol{\mathcal X}_{ij1}.
\end{align}
\end{proof}

\begin{lemma} \label{lemma7}
For given two third order tensors $\boldsymbol{\mathcal A} \in \mathbb R^{n_1 \times n_2 \times n_3}, \boldsymbol{\mathcal B} \in \mathbb R^{n_2 \times n_4 \times n_3}$, then there is
\begin{align}
\text{bcirc} \left( \boldsymbol{\mathcal A} * \boldsymbol{\mathcal B}\right) = \text{bcirc} \left( \boldsymbol{\mathcal A}\right) \text{bcirc} \left( \boldsymbol{\mathcal B}\right).  
\end{align}
\end{lemma}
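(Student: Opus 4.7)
The cleanest route is through the Fourier block-diagonalization already recorded in the preliminaries, namely $(\boldsymbol F_{n_3}\otimes \boldsymbol I_{n_1})\,\text{bcirc}(\boldsymbol{\mathcal X})\,(\boldsymbol F_{n_3}^{-1}\otimes \boldsymbol I_{n_2}) = \overline{\boldsymbol X}$. The plan is to conjugate both sides of the desired identity by $\boldsymbol F_{n_3}\otimes \boldsymbol I_{(\cdot)}$ and verify the equality after block-diagonalization, where the t-product collapses into an ordinary block-diagonal matrix product.

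Concretely, first I would conjugate $\text{bcirc}(\boldsymbol{\mathcal A})\,\text{bcirc}(\boldsymbol{\mathcal B})$ by inserting the identity $(\boldsymbol F_{n_3}^{-1}\otimes \boldsymbol I_{n_2})(\boldsymbol F_{n_3}\otimes \boldsymbol I_{n_2})$ between the two factors, which yields
\begin{equation*}
(\boldsymbol F_{n_3}\otimes \boldsymbol I_{n_1})\,\text{bcirc}(\boldsymbol{\mathcal A})\,\text{bcirc}(\boldsymbol{\mathcal B})\,(\boldsymbol F_{n_3}^{-1}\otimes \boldsymbol I_{n_4}) = \overline{\boldsymbol A}\,\overline{\boldsymbol B},
\end{equation*}
a block-diagonal matrix whose $i$-th diagonal block is $\overline{\boldsymbol A}^{(i)}\overline{\boldsymbol B}^{(i)}$. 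Second, I would show that the analogous conjugation of $\text{bcirc}(\boldsymbol{\mathcal A}*\boldsymbol{\mathcal B})$ produces the same block-diagonal matrix, i.e. $\overline{\boldsymbol A * \boldsymbol B}^{(i)} = \overline{\boldsymbol A}^{(i)}\overline{\boldsymbol B}^{(i)}$. This is essentially the statement that the t-product is diagonalized by the DFT, and I would derive it by starting from the definition $\boldsymbol{\mathcal A}*\boldsymbol{\mathcal B} = \text{Fold}(\text{bcirc}(\boldsymbol{\mathcal A})\,\text{Unfold}(\boldsymbol{\mathcal B}))$, noting that $\text{Unfold}(\boldsymbol{\mathcal B})$ is precisely the first block column of $\text{bcirc}(\boldsymbol{\mathcal B})$, and applying the block-diagonalization identity to both $\text{bcirc}(\boldsymbol{\mathcal A})$ and $\text{bcirc}(\boldsymbol{\mathcal B})$ separately. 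Since the conjugation map $\boldsymbol M \mapsto (\boldsymbol F_{n_3}\otimes \boldsymbol I_{n_1})\boldsymbol M (\boldsymbol F_{n_3}^{-1}\otimes \boldsymbol I_{n_4})$ is invertible, equality of the two block-diagonalized matrices implies equality of the original block-circulant matrices.

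A purely algebraic alternative (which I would mention as a sanity check) is to verify the identity blockwise: write the $(p,q)$-block of $\text{bcirc}(\boldsymbol{\mathcal A})\,\text{bcirc}(\boldsymbol{\mathcal B})$ as $\sum_{k=1}^{n_3} \boldsymbol A^{(((p-k) \bmod n_3)+1)} \boldsymbol B^{(((k-q) \bmod n_3)+1)}$, and check that the resulting matrix depends only on $(p-q)\bmod n_3$, hence is block-circulant. Its first block column coincides with $\text{bcirc}(\boldsymbol{\mathcal A})\,\text{Unfold}(\boldsymbol{\mathcal B}) = \text{Unfold}(\boldsymbol{\mathcal A}*\boldsymbol{\mathcal B})$, which fixes the entire block-circulant matrix.

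I do not expect a real obstacle here; the statement is the standard algebraic fact underlying the entire t-SVD calculus and both routes are routine. The only care required is keeping track of the Kronecker dimensions ($\boldsymbol I_{n_1}$ on the left, $\boldsymbol I_{n_2}$ in the middle for the $\boldsymbol{\mathcal A}$-factor and on the left for the $\boldsymbol{\mathcal B}$-factor, $\boldsymbol I_{n_4}$ on the right) so that the two unitary conjugations telescope correctly, and verifying that the indexing in the direct blockwise approach produces the claimed circulant shift pattern.
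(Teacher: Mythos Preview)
Your proposal is correct and follows essentially the same approach as the paper: conjugate both sides by the DFT-Kronecker matrices, use the block-diagonalization identity to reduce each side to $\overline{\boldsymbol A}\,\overline{\boldsymbol B}$, and invoke invertibility of the conjugation to conclude. The paper takes the identity $\text{bdiag}(\overline{\boldsymbol{\mathcal A}*\boldsymbol{\mathcal B}})=\overline{\boldsymbol A}\,\overline{\boldsymbol B}$ as known rather than re-deriving it from the Fold/Unfold definition, but otherwise the arguments coincide.
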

\begin{proof}
Based on the relationship between $\text{bicrc} \left( \boldsymbol{\mathcal A}\right)$ and $\overline{\boldsymbol{\mathcal A}}$, we have
\begin{align}
 \left( \boldsymbol F_{n_3} \otimes \boldsymbol I_{n_1}\right) \text{bcirc}\left( \boldsymbol{\mathcal A} * \boldsymbol{\mathcal B} \right) \left( \boldsymbol F_{n_3}^{-1} \otimes \boldsymbol I_{n_4}\right)  & = \text{bdiag} \left( \overline{  \boldsymbol{\mathcal A} * \boldsymbol{\mathcal B}  }\right) \nonumber \\
& = \overline{\boldsymbol A} \; \overline{\boldsymbol B} \nonumber \\
& = \left( \boldsymbol F_{n_3} \otimes \boldsymbol I_{n_1}\right) \text{bcirc}\left( \boldsymbol{\mathcal A} \right) \left( \boldsymbol F_{n_3}^{-1} \otimes \boldsymbol I_{n_2}\right) \nonumber \\
& \quad \cdot \left( \boldsymbol F_{n_3} \otimes \boldsymbol I_{n_2}\right) \text{bcirc}\left( \boldsymbol{\mathcal B} \right) \left( \boldsymbol F_{n_3}^{-1} \otimes \boldsymbol I_{n_4}\right) \nonumber \\
& =  \left( \boldsymbol F_{n_3} \otimes \boldsymbol I_{n_1}\right) \text{bcirc}\left( \boldsymbol{\mathcal A} \right) \text{bcirc}\left( \boldsymbol{\mathcal B} \right) \nonumber \\
& \quad \cdot \left( \boldsymbol F_{n_3}^{-1} \otimes \boldsymbol I_{n_4}\right),
\end{align}
where the last equality is due to Kronecker product property that $\left( \boldsymbol F_{n_3}^{-1} \otimes \boldsymbol I_{n_2}\right)\left( \boldsymbol F_{n_3} \otimes \boldsymbol I_{n_2}\right)$ = $\left( \boldsymbol F_{n_3}^{-1} \boldsymbol F_{n_3}\right) \otimes \boldsymbol I_{n_2} = \boldsymbol I_{n_2n_3}$ due to orthogonal property of $\left( \boldsymbol F_{n_3} \otimes \boldsymbol I_{n_2}\right)/\sqrt{n_3}$.  
\end{proof}

\begin{lemma}\label{lemma9}
For any third-order tensor $\boldsymbol{\mathcal X}$, there is $\left( \overline{\boldsymbol X^c} \right)^c = \overline{\boldsymbol X}$. 
\end{lemma}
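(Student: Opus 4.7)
The plan is to reduce the tensor-level identity to an entry-wise check in the Fourier domain. By the definition $\overline{\boldsymbol X}:=\text{bdiag}(\overline{\boldsymbol{\mathcal X}})$, both $\overline{\boldsymbol X}$ and $\overline{\boldsymbol X^c}$ are block-diagonal matrices whose $k$-th diagonal block is $\overline{\boldsymbol{\mathcal X}}(:,:,k)$ and $\overline{\boldsymbol{\mathcal X}^c}(:,:,k)$, respectively. Since the matrix conjugate transpose of a block diagonal matrix acts block-wise, it suffices to prove that for each $k \in [n_3]$,
\begin{equation*}
\bigl(\overline{\boldsymbol{\mathcal X}^c}(:,:,k)\bigr)^c = \overline{\boldsymbol{\mathcal X}}(:,:,k).
\end{equation*}

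Next I would unwind the DFT definition entry-wise. Writing $\omega := e^{-2\pi \mathrm{i}/n_3}$, the DFT along the third mode gives
\begin{equation*}
\overline{\boldsymbol{\mathcal X}^c}(i,j,k) = \sum_{\ell=1}^{n_3} \boldsymbol{\mathcal X}^c(i,j,\ell)\,\omega^{(k-1)(\ell-1)}.
\end{equation*}
Using the definition of the tensor conjugate transpose from the preliminaries ($\boldsymbol{\mathcal X}^c(:,:,1) = (\boldsymbol X^{(1)})^c$ and $\boldsymbol{\mathcal X}^c(:,:,n_3+2-\ell) = (\boldsymbol X^{(\ell)})^c$ for $\ell\geq 2$), the $\ell=1$ term contributes $\overline{\boldsymbol{\mathcal X}(j,i,1)}$, while the terms with $\ell\geq 2$ become $\overline{\boldsymbol{\mathcal X}(j,i,n_3+2-\ell)}\,\omega^{(k-1)(\ell-1)}$.

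The key algebraic step is the substitution $m = n_3+2-\ell$ in the second sum, which maps $\ell\in\{2,\dots,n_3\}$ bijectively onto $m\in\{2,\dots,n_3\}$ and rewrites the exponent as $\omega^{(k-1)(n_3+1-m)}$. Invoking $\omega^{n_3}=1$ yields $\omega^{(k-1)(n_3+1-m)} = \omega^{-(k-1)(m-1)}$. Combining with the $m=1$ term and pulling the conjugation outside the sum,
\begin{equation*}
\overline{\boldsymbol{\mathcal X}^c}(i,j,k) = \sum_{m=1}^{n_3} \overline{\boldsymbol{\mathcal X}(j,i,m)}\,\omega^{-(k-1)(m-1)} = \overline{\sum_{m=1}^{n_3} \boldsymbol{\mathcal X}(j,i,m)\,\omega^{(k-1)(m-1)}} = \overline{\overline{\boldsymbol{\mathcal X}}(j,i,k)}.
\end{equation*}
This is precisely the $(i,j)$-entry of the Hermitian of $\overline{\boldsymbol{\mathcal X}}(:,:,k)$, so $(\overline{\boldsymbol{\mathcal X}^c}(:,:,k))^c = \overline{\boldsymbol{\mathcal X}}(:,:,k)$, and assembling back the block-diagonal structure completes the proof.

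The only step requiring care is the index relabeling combined with the $\omega^{n_3}=1$ identity; it is precisely this periodicity that converts the reversed frontal-slice indexing in the definition of $\boldsymbol{\mathcal X}^c$ into a complex conjugation of the Fourier exponent, which in turn produces the Hermitian conjugate. Everything else is bookkeeping.
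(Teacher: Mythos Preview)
Your proof is correct and follows essentially the same approach as the paper. The paper's proof is a one-liner that invokes the conjugate-symmetry property of the DFT of a real vector (citing \cite{rojo2004some}) together with the definition of the tensor conjugate transpose; your argument is a self-contained entry-wise verification of exactly that symmetry via the index relabeling $m=n_3+2-\ell$ and the periodicity $\omega^{n_3}=1$, so the two proofs coincide in substance.
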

\begin{proof}
The proof of this lemma is a direct combination of the definition of the conjugate operator and the equation (6) in \citep{rojo2004some}, which concludes that for any real vector $\forall v \in \mathbb R^{n_3}$, there is $\overline{v}_1 \in \mathbb R$ and $\overline{v}_i^c = \overline{v}_{n_3+2-i}, \forall i = 2,\cdots, \lfloor \frac{n_3+1} 2 \rfloor$. 
\end{proof}

\begin{lemma}\label{lemma14}
For any third order tensor $\boldsymbol{\mathcal X} \in \mathbb R^{n_1 \times n_2 \times n_3}$, there is
\begin{align}
\left( \text{bcirc} \left( \boldsymbol{\mathcal X} ^c\right) \right)^c = \text{bcirc} \left( \boldsymbol{\mathcal X}\right).   
\end{align}
\end{lemma}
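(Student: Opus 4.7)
The plan is to verify the identity by a direct block-wise computation from the definitions of $\text{bcirc}$ and of the conjugate transpose of a tensor. First, I would re-express the block circulant structure in a convenient indexed form: for any third-order tensor $\boldsymbol{\mathcal Y}\in\mathbb R^{n_1\times n_2\times n_3}$, the $(i,j)$-th block of $\text{bcirc}(\boldsymbol{\mathcal Y})$ equals $\boldsymbol Y^{(((i-j)\bmod n_3)+1)}$. Taking the (matrix) conjugate transpose of the whole block matrix then permutes and Hermitian-transposes the blocks, yielding that the $(i,j)$-th block of $(\text{bcirc}(\boldsymbol{\mathcal Y}))^c$ equals $\bigl(\boldsymbol Y^{(((j-i)\bmod n_3)+1)}\bigr)^c$.

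Next, I would specialize to $\boldsymbol{\mathcal Y}=\boldsymbol{\mathcal X}^c$ and substitute the defining relations for the conjugate transpose of a tensor, namely $\boldsymbol{\mathcal X}^c(:,:,1)=(\boldsymbol X^{(1)})^c$ and $\boldsymbol{\mathcal X}^c(:,:,k)=(\boldsymbol X^{(n_3+2-k)})^c$ for $2\le k\le n_3$. Plugging this into the formula above and using $((\boldsymbol A)^c)^c=\boldsymbol A$, I would show that for $i=j$ the $(i,i)$-th block reduces to $\boldsymbol X^{(1)}$, and for $i\ne j$ it reduces to $\boldsymbol X^{((i-j)\bmod n_3 + 1)}$. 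In each case this is precisely the $(i,j)$-th block of $\text{bcirc}(\boldsymbol{\mathcal X})$, completing the verification.

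The only real obstacle is bookkeeping: one must separate the diagonal case $k=1$ from the off-diagonal case $k\ge 2$ in the definition of $\boldsymbol{\mathcal X}^c$, and then check the index arithmetic $n_3+2-((j-i)\bmod n_3+1)\equiv (i-j)\bmod n_3 + 1 \pmod{n_3}$ (for $i\ne j$) so that the Hermitian transpose of the ``reversed'' slice inside $\text{bcirc}(\boldsymbol{\mathcal X}^c)$ lines up with the corresponding frontal slice of $\boldsymbol{\mathcal X}$. Once the indexing is handled carefully, the identity follows. An alternative, essentially equivalent route would be to apply the block-diagonalization identity to both sides and invoke Lemma \ref{lemma9} (which gives $(\overline{\boldsymbol X^c})^c=\overline{\boldsymbol X}$) to convert the claim into a statement about the block-diagonal form in the Fourier domain; I would keep this as a sanity check rather than the main argument.
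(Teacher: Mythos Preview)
Your proposal is correct, but your main argument takes a different route from the paper. The paper proves the lemma in the Fourier domain: it writes down the block-diagonalization identities $(\boldsymbol F_{n_3}\otimes\boldsymbol I)\,\text{bcirc}(\boldsymbol{\mathcal X})\,(\boldsymbol F_{n_3}^{-1}\otimes\boldsymbol I)=\overline{\boldsymbol X}$ and $(\boldsymbol F_{n_3}\otimes\boldsymbol I)\,\text{bcirc}(\boldsymbol{\mathcal X}^c)\,(\boldsymbol F_{n_3}^{-1}\otimes\boldsymbol I)=\overline{\boldsymbol X^c}$, takes the conjugate transpose of the second, invokes Lemma~\ref{lemma9} (i.e., $(\overline{\boldsymbol X^c})^c=\overline{\boldsymbol X}$), and then cancels the Fourier conjugations to conclude. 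In other words, the paper's proof is precisely the ``alternative, essentially equivalent route'' you set aside as a sanity check.

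Your primary argument---the direct block-by-block verification via the indexing $\text{bcirc}(\boldsymbol{\mathcal Y})_{(i,j)}=\boldsymbol Y^{(((i-j)\bmod n_3)+1)}$ together with the slice-reversal in the definition of $\boldsymbol{\mathcal X}^c$---is more elementary: it avoids any appeal to Lemma~\ref{lemma9} or to properties of the DFT matrix, and the index arithmetic you outline (the check that $n_3+2-((j-i)\bmod n_3+1)=((i-j)\bmod n_3)+1$ for $i\neq j$) goes through exactly as stated. The trade-off is that the paper's Fourier approach is shorter once Lemma~\ref{lemma9} is available and fits the style of the surrounding proofs, whereas your approach is self-contained and makes the combinatorial reason for the identity transparent.
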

\begin{proof}
Based on the relationship between $\text{bcirc}(\boldsymbol{\mathcal X})$ and $\overline{\boldsymbol{\mathcal X}}$, there are 
\begin{align}
&\left( \boldsymbol F_{n_3} \otimes \boldsymbol I_{n_1}\right) \text{bcirc}\left( \boldsymbol{\mathcal X} \right) \left( \boldsymbol F_{n_3}^{-1} \otimes \boldsymbol I_{n_4}\right)  = \overline{\boldsymbol X}, \label{bcirc1}\\
&\left( \boldsymbol F_{n_3} \otimes \boldsymbol I_{n_1}\right) \text{bcirc}\left( \boldsymbol{\mathcal X}^c \right) \left( \boldsymbol F_{n_3}^{-1} \otimes \boldsymbol I_{n_4}\right)  = \overline{\boldsymbol X^c}.   
\end{align}
Taking conjugate for both sides, there is
\begin{align}
 \left( \boldsymbol F_{n_3} \otimes \boldsymbol I_{n_1}\right) \left( \text{bcirc}\left( \boldsymbol{\mathcal X}^c \right) \right)^c \left( \boldsymbol F_{n_3}^{-1} \otimes \boldsymbol I_{n_4}\right)  = \left( \overline{\boldsymbol X^c} \right)^c = \overline{\boldsymbol X}, \label{bcirc2}  
\end{align}
where the second equality is due to Lemma \ref{lemma9}. Finally, \eqref{bcirc1} subtracts \eqref{bcirc2} and finish proof based on orthogonal property of $\boldsymbol F_{n_3}$. 
\end{proof}

\begin{lemma} \label{lemma4}
(Wedin $\sin \theta$ theorem with spectral norm \citep{wedin1972perturbation}) Given two matrices $\boldsymbol M^\star, \boldsymbol{ M} \in \mathbb R^{n_1 \times n_2}$, let $\boldsymbol U^\star (\boldsymbol U)$ and $\boldsymbol (V^\star)^T (\boldsymbol V^T)$ denote top $r$ left and right singular vectors of $\boldsymbol M^\star (\boldsymbol M)$, respectively. $\sigma_r^\star, \sigma^\star_{r+1}$ denote the $r$-th and $(r+1)$-th singular values of $\boldsymbol M^\star$. If $\left\| \boldsymbol M - \boldsymbol M^\star \right\| \leq \sigma^\star_r - \sigma^\star_{r+1}$, then 
\begin{align}
\left\| ( \boldsymbol I - \boldsymbol U \boldsymbol U^T) \boldsymbol U^\star\right\| \leq \sqrt 2 \frac{\max \left\{ \left\|( \boldsymbol M - \boldsymbol M^\star )^T \boldsymbol U^\star\right\| , \left\|( \boldsymbol M - \boldsymbol M^\star )^T \left( \boldsymbol V^\star \right)^T \right\| \right\} }{\sigma^\star_r - \sigma^\star_{r+1} - \left\| \boldsymbol M - \boldsymbol M^\star  \right\|}.
\end{align}
\end{lemma}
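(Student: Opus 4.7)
The plan is to prove this standard Wedin $\sin\theta$ perturbation bound by deriving a pair of Sylvester-type identities that link the principal-angle quantities $\tilde R := U_\perp^T U^\star$ and $\tilde S := V_\perp^T V^\star$ to the perturbation $E := M - M^\star$, stacking them into a single block equation, and then inverting the resulting operator via the spectral-gap hypothesis. Here $U_\perp \in \mathbb{R}^{n_1 \times (n_1-r)}$ and $V_\perp \in \mathbb{R}^{n_2 \times (n_2-r)}$ are orthonormal complements of the top-$r$ singular subspaces of $M$, $\Sigma$ is the $r\times r$ diagonal matrix of top singular values of $M$, and $\Sigma_\perp$ is the trailing block of $M$'s SVD (so $\|\Sigma_\perp\| = \sigma_{r+1}(M)$); $\Sigma^\star$ is analogously the top-$r$ block of $M^\star$.

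First, from $MV^\star = M^\star V^\star + EV^\star = U^\star \Sigma^\star + EV^\star$ and the SVD identity $U_\perp^T M = \Sigma_\perp V_\perp^T$, computing $U_\perp^T M V^\star$ two ways gives
\[
\tilde R \Sigma^\star + U_\perp^T E V^\star \;=\; U_\perp^T M V^\star \;=\; \Sigma_\perp \tilde S,
\]
hence $\Sigma_\perp \tilde S - \tilde R \Sigma^\star = U_\perp^T E V^\star$. The symmetric computation applied to $M^T$ with the roles of $(U^\star, V^\star)$ swapped yields $\Sigma_\perp^T \tilde R - \tilde S \Sigma^\star = V_\perp^T E^T U^\star$. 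Stacking vertically, these combine into the single block identity
\[
W T - T \Sigma^\star = C,\quad W := \begin{pmatrix} 0 & \Sigma_\perp \\ \Sigma_\perp^T & 0 \end{pmatrix},\quad T := \begin{pmatrix} \tilde R \\ \tilde S \end{pmatrix},\quad C := \begin{pmatrix} U_\perp^T E V^\star \\ V_\perp^T E^T U^\star \end{pmatrix}.
\]

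Next I would bound $\|T\|$ in spectral norm. Since $\Sigma^\star$ is $r\times r$ invertible with smallest singular value $\sigma^\star_r$, the identity $T = (T\Sigma^\star)(\Sigma^\star)^{-1}$ implies $\|T \Sigma^\star\| \geq \sigma^\star_r \|T\|$. Because $W$ is a symmetric block anti-diagonal matrix, $\|W\| = \|\Sigma_\perp\| = \sigma_{r+1}(M)$, giving $\|WT\| \leq \sigma_{r+1}(M) \|T\|$. Combining with $\|T\Sigma^\star\| \leq \|WT\| + \|C\|$ yields $(\sigma^\star_r - \sigma_{r+1}(M))\|T\| \leq \|C\|$. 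Weyl's inequality then provides $\sigma_{r+1}(M) \leq \sigma^\star_{r+1} + \|E\|$, so the denominator is at least $\sigma^\star_r - \sigma^\star_{r+1} - \|E\|$, which is positive by hypothesis.

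The remaining step, which also produces the $\sqrt{2}$ factor, is to relate $\|T\|$ and $\|C\|$ to the quantities in the statement. On one side, $\|\tilde R\|^2 \leq \lambda_{\max}(\tilde R^T\tilde R + \tilde S^T \tilde S) = \|T^T T\| = \|T\|^2$, and $\|(I-UU^T)U^\star\| = \|U_\perp^T U^\star\| = \|\tilde R\|$, so the left-hand side of the lemma is at most $\|T\|$. On the other side, writing $C$ as a stack of $A := U_\perp^T E V^\star$ and $B := V_\perp^T E^T U^\star$, one has $\|C\|^2 = \lambda_{\max}(A^T A + B^T B) \leq \|A\|^2 + \|B\|^2 \leq 2\max(\|A\|^2, \|B\|^2)$, and the projections $U_\perp^T, V_\perp^T$ are non-expansive, giving $\|A\| \leq \|EV^\star\|$ and $\|B\| \leq \|E^T U^\star\|$. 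Substituting recovers exactly the bound in the statement. The main subtlety is not any hard estimate but rather the block bookkeeping: identifying the target $\sin\theta$ distance with $\|\tilde R\|$ and recognizing that the $\sqrt{2}$ emerges from the sub-additivity $\|(A;B)\| \leq \sqrt{2}\max(\|A\|,\|B\|)$; once the stacking trick is in place, everything else is standard operator-norm manipulation plus Weyl.
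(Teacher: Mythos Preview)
The paper does not supply its own proof of this lemma: it is listed in the ``Technical Lemmas'' section purely as a citation to \citep{wedin1972perturbation} and is used as a black box inside the proof of Lemma~\ref{lemma2}. There is therefore nothing to compare against at the level of argument.

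That said, your derivation is a clean and correct version of the standard proof of Wedin's $\sin\theta$ bound. The two Sylvester identities obtained by evaluating $U_\perp^T M V^\star$ and $V_\perp^T M^T U^\star$ in two ways are right, the block stacking into $WT - T\Sigma^\star = C$ is the usual trick, and the operator-norm manipulations (using $\|W\| = \sigma_{r+1}(M)$, Weyl for $\sigma_{r+1}(M) \le \sigma_{r+1}^\star + \|E\|$, and the sub-additivity $\|(A;B)\| \le \sqrt{2}\max(\|A\|,\|B\|)$) are all valid. One small cosmetic point: the second term inside the $\max$ in the lemma statement as printed, $\|(M-M^\star)^T (V^\star)^T\|$, is almost certainly a typo for $\|(M-M^\star)V^\star\|$; your proof produces exactly the pair $\|E^T U^\star\|$ and $\|E V^\star\|$, which is the intended bound.
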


\begin{lemma} \label{lemma5}
($\epsilon$-netting for computing spectral norm \citep{nayer2022fast,vershynin2018high} ). For matrix $\boldsymbol M \in \mathbb R^{n_1+n_2}$ and fixed vectors $\boldsymbol w \in  \boldsymbol{\mathcal S}_{n_1}, \boldsymbol z \in \boldsymbol{\mathcal S}_{n_2}$, suppose $\text{Pr} \left( | \boldsymbol w^T \boldsymbol M \boldsymbol z| \leq b_0 \right) \geq 1 - p_0$. Considering $\epsilon$-net covering for $\boldsymbol{\mathcal S}_{n_1}$ and $\boldsymbol{\mathcal S}_{n_2}$ are $\hat{\boldsymbol{\mathcal S}}_{n_1}^\epsilon$ and $\hat{\boldsymbol{\mathcal S}}_{n_2}^\epsilon$, respectively. Then the (i) $\max_{\boldsymbol w \in \hat{\boldsymbol{\mathcal S}}_{n_1}^\epsilon, \boldsymbol z \in \hat{\boldsymbol{\mathcal S}}_{n_2}^\epsilon } |\boldsymbol w^T \boldsymbol M \boldsymbol z| \leq b_0$  and (ii) $\max_{\boldsymbol w \in {\boldsymbol{\mathcal S}}_{n_1}, \boldsymbol z \in {\boldsymbol{\mathcal S}}_{n_2} } |\boldsymbol w^T \boldsymbol M \boldsymbol z| \leq \frac{b_0}{1-2\epsilon -\epsilon^2 }$ hold with probability as least $1- | \hat{\boldsymbol{\mathcal S}}^\epsilon_{n_1} | | \hat{\boldsymbol{\mathcal S}}^\epsilon_{n_2}| p_0$. 
\end{lemma}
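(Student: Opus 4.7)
The plan is to prove part (i) by a direct union bound, then derive part (ii) from (i) via the standard approximation-of-supremum argument that underpins every $\epsilon$-net spectral-norm calculation.

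For part (i), the hypothesis gives a failure probability of at most $p_0$ for any single fixed pair $(\boldsymbol w, \boldsymbol z)$. Applying the union bound over the $|\hat{\boldsymbol{\mathcal S}}^\epsilon_{n_1}| \cdot |\hat{\boldsymbol{\mathcal S}}^\epsilon_{n_2}|$ pairs in the product net, the event that $|\boldsymbol w^T \boldsymbol M \boldsymbol z| \leq b_0$ holds simultaneously for every $(\boldsymbol w, \boldsymbol z) \in \hat{\boldsymbol{\mathcal S}}^\epsilon_{n_1} \times \hat{\boldsymbol{\mathcal S}}^\epsilon_{n_2}$ has probability at least $1 - |\hat{\boldsymbol{\mathcal S}}^\epsilon_{n_1}| \cdot |\hat{\boldsymbol{\mathcal S}}^\epsilon_{n_2}| \, p_0$.

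For part (ii), I would condition on the event in (i). For any unit vectors $\boldsymbol w \in \boldsymbol{\mathcal S}_{n_1}$ and $\boldsymbol z \in \boldsymbol{\mathcal S}_{n_2}$, the covering property supplies $\hat{\boldsymbol w} \in \hat{\boldsymbol{\mathcal S}}^\epsilon_{n_1}$ and $\hat{\boldsymbol z} \in \hat{\boldsymbol{\mathcal S}}^\epsilon_{n_2}$ with $\|\boldsymbol w - \hat{\boldsymbol w}\| \leq \epsilon$ and $\|\boldsymbol z - \hat{\boldsymbol z}\| \leq \epsilon$. Writing $\boldsymbol e_w := \boldsymbol w - \hat{\boldsymbol w}$ and $\boldsymbol e_z := \boldsymbol z - \hat{\boldsymbol z}$, I expand the bilinear form
\[
\boldsymbol w^T \boldsymbol M \boldsymbol z = \hat{\boldsymbol w}^T \boldsymbol M \hat{\boldsymbol z} + \boldsymbol e_w^T \boldsymbol M \hat{\boldsymbol z} + \hat{\boldsymbol w}^T \boldsymbol M \boldsymbol e_z + \boldsymbol e_w^T \boldsymbol M \boldsymbol e_z,
\]
bound the first term by $b_0$ using (i), and bound the three remaining terms by $\epsilon \|\boldsymbol M\|$, $\epsilon \|\boldsymbol M\|$, and $\epsilon^2 \|\boldsymbol M\|$ respectively via the definition of the operator norm. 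Taking the supremum over $\boldsymbol w, \boldsymbol z$ on the left-hand side yields $\|\boldsymbol M\| \leq b_0 + (2\epsilon + \epsilon^2)\|\boldsymbol M\|$, which rearranges to the stated inequality $\|\boldsymbol M\| \leq b_0 / (1 - 2\epsilon - \epsilon^2)$.

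There is essentially no technical obstacle here; the mildly subtle point is that the three cross terms are bounded by $\|\boldsymbol M\|$ itself (not by $b_0$), so the resulting inequality is implicit in $\|\boldsymbol M\|$ and must be solved for it. For the denominator to be positive one needs $\epsilon < \sqrt 2 - 1 \approx 0.414$; a common variant in the literature fixes a strictly smaller $\epsilon$ (e.g., $\epsilon = 1/4$) to obtain a cleaner absolute constant, but the argument is identical.
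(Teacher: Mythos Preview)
The paper does not prove this lemma; it is stated in the Technical Lemmas section with a citation to \cite{nayer2022fast,vershynin2018high} and no accompanying argument. Your proof is the standard and correct derivation one finds in those references: union bound for (i), then the bilinear expansion with cross terms controlled by $\epsilon\|\boldsymbol M\|$, $\epsilon\|\boldsymbol M\|$, $\epsilon^2\|\boldsymbol M\|$ and the resulting implicit inequality solved for $\|\boldsymbol M\|$.
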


\begin{lemma}\label{lemma12}(\citep{kumar2022finetuning})
Suppose there are two matrices $\boldsymbol A \in \mathbb R^{r\times s}, \boldsymbol B \in \mathbb R^{s \times t}$ and they are tall matrices that $r\geq s\geq t$. Then we have
\begin{align}
\sigma_{\min}(\boldsymbol A \boldsymbol B) \geq \sigma_{\min}(\boldsymbol A) \sigma_{\min}(\boldsymbol B).
\end{align}
\end{lemma}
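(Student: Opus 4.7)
The plan is to prove the inequality via the variational characterization of the smallest singular value for tall matrices. Specifically, since $r \geq t$, the product $\boldsymbol{A}\boldsymbol{B} \in \mathbb{R}^{r \times t}$ is itself tall, so
\begin{equation*}
\sigma_{\min}(\boldsymbol{A}\boldsymbol{B}) \;=\; \min_{\boldsymbol{x} \in \mathbb{R}^{t},\;\|\boldsymbol{x}\|=1} \|\boldsymbol{A}\boldsymbol{B}\boldsymbol{x}\|.
\end{equation*}
I would bound $\|\boldsymbol{A}\boldsymbol{B}\boldsymbol{x}\|$ from below for an arbitrary unit vector $\boldsymbol{x}$, then take the minimum.

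First, note that because $s \geq t$, $\boldsymbol{B}$ is tall, hence $\|\boldsymbol{B}\boldsymbol{y}\| \geq \sigma_{\min}(\boldsymbol{B})\|\boldsymbol{y}\|$ for every $\boldsymbol{y} \in \mathbb{R}^{t}$; in particular $\|\boldsymbol{B}\boldsymbol{x}\| \geq \sigma_{\min}(\boldsymbol{B})$. Second, because $r \geq s$, $\boldsymbol{A}$ is tall, so $\|\boldsymbol{A}\boldsymbol{z}\| \geq \sigma_{\min}(\boldsymbol{A})\|\boldsymbol{z}\|$ for every $\boldsymbol{z} \in \mathbb{R}^{s}$. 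Applying this with $\boldsymbol{z} = \boldsymbol{B}\boldsymbol{x}$ and chaining the two bounds yields
\begin{equation*}
\|\boldsymbol{A}\boldsymbol{B}\boldsymbol{x}\| \;\geq\; \sigma_{\min}(\boldsymbol{A})\,\|\boldsymbol{B}\boldsymbol{x}\| \;\geq\; \sigma_{\min}(\boldsymbol{A})\,\sigma_{\min}(\boldsymbol{B}).
\end{equation*}
Taking the minimum over unit $\boldsymbol{x}$ on the left-hand side gives the desired conclusion $\sigma_{\min}(\boldsymbol{A}\boldsymbol{B}) \geq \sigma_{\min}(\boldsymbol{A})\sigma_{\min}(\boldsymbol{B})$.

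There is no real obstacle here; the only subtle point is to justify the lower bound $\|\boldsymbol{M}\boldsymbol{v}\| \geq \sigma_{\min}(\boldsymbol{M})\|\boldsymbol{v}\|$ used for both $\boldsymbol{A}$ and $\boldsymbol{B}$, which is precisely the tall-matrix hypothesis $r \geq s \geq t$ built into the statement. Equivalently, one can write $\boldsymbol{M} = \boldsymbol{U}\boldsymbol{\Sigma}\boldsymbol{V}^{T}$ in a thin SVD and observe $\|\boldsymbol{M}\boldsymbol{v}\|^{2} = \boldsymbol{v}^{T}\boldsymbol{V}\boldsymbol{\Sigma}^{2}\boldsymbol{V}^{T}\boldsymbol{v} \geq \sigma_{\min}(\boldsymbol{M})^{2}\|\boldsymbol{v}\|^{2}$ since $\boldsymbol{V}$ has orthonormal columns spanning all of $\mathbb{R}^{s}$ (respectively $\mathbb{R}^{t}$) when $\boldsymbol{M}$ has full column rank. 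Thus the proof reduces to two applications of this one-line singular-value lower bound, composed.
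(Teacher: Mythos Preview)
Your proof is correct. The paper does not actually prove this lemma; it is stated as a technical fact cited from \citep{kumar2022finetuning} without proof, so there is no approach to compare against. Your argument via the variational characterization $\sigma_{\min}(\boldsymbol{M}) = \min_{\|\boldsymbol{v}\|=1}\|\boldsymbol{M}\boldsymbol{v}\|$ for tall $\boldsymbol{M}$ is the standard one and is exactly what the hypothesis $r \geq s \geq t$ is designed to enable.
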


\begin{lemma} \label{lemma13} (\citep{vershynin2018high})
For any fixed notation of norm $\left\|\cdot \right\|$, define a unit norm ball $\boldsymbol{\mathcal B}_1:=\{ \boldsymbol x \in \mathbb R^n: \left\| \boldsymbol x\right\| \leq 1 \}$ with distance measure $\left\|\cdot \right\|$. Then the covering number of $\boldsymbol{\mathcal B}_1$ (with respect to the norm $\left\|\cdot \right\|$) satisfies the bound
\begin{align}
\Phi( \boldsymbol{\mathcal B}_1, \left\|\cdot \right\|,\epsilon) \leq \left( \frac{3}{\epsilon}\right)^n, \forall \epsilon \in (0,1).
\end{align}
\end{lemma}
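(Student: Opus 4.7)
The plan is to use the standard volumetric packing argument, which is the textbook route for this kind of Euclidean-style bound on any finite-dimensional normed space. The key observation is that every norm on $\mathbb R^n$ induces the same Borel sets and the Lebesgue measure scales as $\mathrm{Vol}(t \boldsymbol{\mathcal B}_1)=t^n \mathrm{Vol}(\boldsymbol{\mathcal B}_1)$ for $t>0$, so a relative volume comparison transfers directly into a covering bound.

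First I would replace covering by packing. Let $\{\boldsymbol x_1,\dots,\boldsymbol x_N\} \subset \boldsymbol{\mathcal B}_1$ be a maximal $\epsilon$-separated set, i.e.\ $\|\boldsymbol x_i-\boldsymbol x_j\|\geq \epsilon$ for $i\neq j$. By maximality, no point of $\boldsymbol{\mathcal B}_1$ can be added without violating separation, which means every $\boldsymbol x\in \boldsymbol{\mathcal B}_1$ lies within distance $\epsilon$ of some $\boldsymbol x_i$; hence $\{\boldsymbol x_i\}_{i=1}^N$ is already an $\epsilon$-cover and $\Phi(\boldsymbol{\mathcal B}_1,\|\cdot\|,\epsilon)\leq N$.

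Second I would apply a volume argument. The open balls $\boldsymbol x_i + (\epsilon/2)\boldsymbol{\mathcal B}_1$, $i=1,\dots,N$, are pairwise disjoint by the separation condition, and each of them sits inside the enlarged ball $(1+\epsilon/2)\boldsymbol{\mathcal B}_1$ since $\|\boldsymbol x_i\|\leq 1$. Therefore
\begin{equation*}
N\cdot \mathrm{Vol}\!\left((\epsilon/2)\boldsymbol{\mathcal B}_1\right) \;\leq\; \mathrm{Vol}\!\left((1+\epsilon/2)\boldsymbol{\mathcal B}_1\right).
\end{equation*}
Using the scaling identity $\mathrm{Vol}(t\boldsymbol{\mathcal B}_1)=t^n\mathrm{Vol}(\boldsymbol{\mathcal B}_1)$ and canceling $\mathrm{Vol}(\boldsymbol{\mathcal B}_1)>0$ gives $N\leq \bigl((1+\epsilon/2)/(\epsilon/2)\bigr)^n = (1+2/\epsilon)^n$. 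Finally, since $\epsilon\in(0,1)$ we have $1+2/\epsilon \leq 3/\epsilon$ (equivalent to $\epsilon\leq 1$), producing the desired bound $\Phi(\boldsymbol{\mathcal B}_1,\|\cdot\|,\epsilon)\leq (3/\epsilon)^n$.

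There is really no hard step here; the only subtlety to be careful about is making sure the volumetric argument is valid for an arbitrary norm. This is handled by noting that $\boldsymbol{\mathcal B}_1$ is a symmetric bounded convex body in $\mathbb R^n$ with nonempty interior, so it has finite positive Lebesgue measure, the scaling law $\mathrm{Vol}(t\boldsymbol{\mathcal B}_1)=t^n\mathrm{Vol}(\boldsymbol{\mathcal B}_1)$ holds, and the Minkowski sum inclusion $\boldsymbol x_i+(\epsilon/2)\boldsymbol{\mathcal B}_1\subset(1+\epsilon/2)\boldsymbol{\mathcal B}_1$ follows from the triangle inequality for $\|\cdot\|$; no inner-product or $\ell_2$-specific structure is needed.
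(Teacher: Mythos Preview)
Your argument is correct and is exactly the standard volumetric packing proof. The paper itself does not prove this lemma at all --- it is stated as a technical lemma cited from Vershynin's textbook --- so there is nothing further to compare.
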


\section{Supplementary Experiments}\label{sec5}
All the experiments are conducted on Matlab2024a in an Intel Core i9-14900K processor with 128 RAM.

\subsection{Details of synthetic data}
For set values of $n_1,n_2,n_2,r,m$, we synthetic the growth truth $\boldsymbol{\mathcal X}^\star$ as follows. We generate a same-size random tensor $\boldsymbol{\mathcal X}$, which has i.i.d. standard Gaussian entries and obtain $\overline{\boldsymbol{\mathcal X}} = \text{fft}(\boldsymbol{\mathcal X},[ \; ],3)$. Then we conduct SVD for each frontal slice $\overline{\boldsymbol X}^{(i)} = \overline{\boldsymbol U}^{(i)} \overline{\boldsymbol S}^{(i)}{\overline{\boldsymbol V}^{(i)}}^\top$ and set first $r$ singular values to be linearly distributed from 1 to $\frac{1}{\kappa}$ and other singular values to be zero in each $\overline{\boldsymbol S}^{(i)}$. We construct $\overline{\boldsymbol{X}^\star}^{(i)} = \overline{\boldsymbol U}^{(i)} \overline{\boldsymbol S}^{(i)}{\overline{\boldsymbol V}^{(i)}}^\top$ and obtain $\boldsymbol{\mathcal X}^\star = \text{ifft}(\overline{\boldsymbol X}^\star,[\;],3)$, which has tubal rank $r$ and tensor condition number $\kappa$. For each of the 20 Monte Carlo trials, each sensing tensor $\boldsymbol{\mathcal A}_i$ has i.i.d. standard Gaussian entries. The local measurements $\boldsymbol Y$ are generated from the local TCS model (2) in the main paper. 

\subsection {Additional video compressed sensing experiments}
Here, we provide additional experiments to clarify why the proposed Alt-PGD-Min performs worse than LRcCS in Figures 3 and 4 of the main paper. The essential reason is that the tensor condition number of the tested video is significantly larger than the condition number of its reshaped matrix. Consequently, to guarantee convergence, we have to set small enough constant step sizes for Alt-PGD-Min and LRcCS as $1\times 10^{-6}$ and $3\times 10^{-5}$, respectively. As shown in Figure \ref{fig5}, with sufficiently large iterations, Alt-PGD-Min ultimately achieves better performance than LRcCS, despite the slower convergence. This demonstrates the necessity of the precondition operator in Alt-ScalePGD-Min to achieve efficient recovery.
\begin{figure}[htbp]
 \centering
\includegraphics[width=6cm,height=4cm]{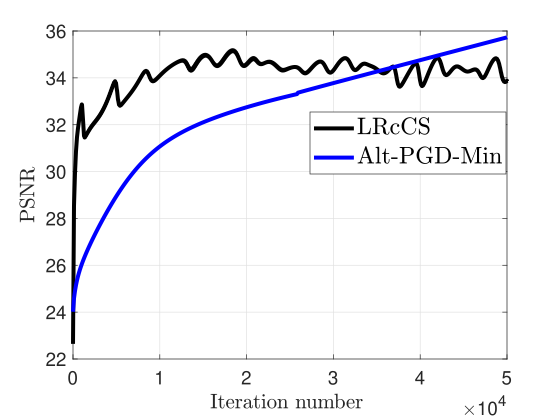}
\caption{ Comparison between LRcCS and Alt-PGD-Min under sufficiently large iterations 50000.}\label{fig5}
\end{figure}

We select five color video sequences Akiyo, Carphone, Foreman, Hall, and Highway from the open-source YUV video dataset \footnote{  http://trace.eas.asu.edu/yuv/ }. For comparing efficiently (due to the slow convergence of LRcCS and Alt-PGD-Min), we choose the first 50 frames of each video to test all algorithms and resize each frame into $72\times 88$. For each video, we test the performance of three algorithms under three different per-frame sample sizes $m=2000, 2250, 2500$. 

% Since the largest condition number of videos would result in a very slow convergence speed, we set the maximal iterations for LRcCS and Alt-PGD-Min as 2000. For the proposed Alt-ScalePGD-Min, we set the fixed step size $\eta=0.8$ for all settings and the maximal number of iterations is 100. We set the $r=10$ for all algorithms and use the spectral initialization method to obtain the initial point.

\begin{table}[htbp]
\centering
\begin{tabular}{|c|c|c|c|c|c|c|}
\hline
\textbf{Sample size} & \textbf{Videos} & \textbf{Akiyo} & \textbf{Carphone} & \textbf{Container} & \textbf{Hall} & \textbf{Highway} \\
\hline
\multirow{3}{*}{2000} & LRcCS & 33.50 & 31.47 & 33.29 & 30.76 & 33.24 \\
 & Alt-PGD-Min & 26.72 & 25.37  & 27.43 &  25.66& 26.81 \\
 & Alt-ScalePGD-Min & 35.92 & 32.68 & 36.00 & 33.01 & 35.08 \\
\hline
\multirow{3}{*}{2250} & LRcCS & 33.68 & 31.65 & 33.32 & 31.03 & 33.31 \\
 & Alt-PGD-Min & 27.47 & 25.86 & 28.12 &  26.17& 27.35 \\
 & Alt-ScalePGD-Min & 37.87 & 34.09 & 41.06 & 34.79 & 36.70 \\
\hline
\multirow{3}{*}{2500} & LRcCS & 33.68 & 31.77 & 33.63 & 31.50 & 33.35 \\
 & Alt-PGD-Min & 28.16 &  26.19 & 28.54  &  26.73& 27.81 \\
 & Alt-ScalePGD-Min & 38.86 & 35.14 & 45.34 & 36.18 & 37.51 \\
\hline
\end{tabular}
\caption{PSNR values given by the three methods under different per frame sample sizes \(m = 2000, 2250, 2500\).} \label{tab1}
\end{table}

Because the videos are just approximately low rank and have larger matrix and tensor condition numbers, the step size must be small to guarantee LRcCS and Alt-PGD-Min's convergence. Thus, we tune the step size of each algorithm to have the largest step size with a convergence guarantee. For the five videos, the step sizes for Alt-PGD-Min are set as $\eta = 1 \times 10^{-6}, 5 \times 10^{-7}, 2 \times 10^{-7}, 3 \times 10^{-7}, 1 \times 10^{-7}$, step sizes for LRcCS are set as $\eta = 2\times 10^{-5}, 1\times 10^{-5}, 1\times 10^{-5},  1\times 10^{-5}, 8 \times 10^{-6}$. Since the large condition number of videos would result in a very slow convergence speed, we set the maximum iterations for LRcCS and Alt-PGD-Min the same as 2000. For the proposed Alt-ScalePGD-Min, we set the fixed step size $\eta=0.8$ for all settings and the maximal number of iterations is 100. We set the $r=10$ for all algorithms and use the spectral initialization method to obtain the initial point.

We utilize three quantitative image quality indices used in \cite{wu2024smooth} to evaluate the performance of all algorithms numerically, including PSNR,
 structure similarity (SSIM), and feature similarity (FSIM). The larger the three indices values, the better the recovery performance denotes. Table \ref{tab1}, Table \ref{tab2} and Table \ref{tab3} show PSNR, SSIM and FSIM comparison values, respectively.

\begin{table}[htbp]
\centering
\begin{tabular}{|c|c|c|c|c|c|c|}
\hline
\textbf{Sample size} & \textbf{Videos} & \textbf{Akiyo} & \textbf{Carphone} & \textbf{Container} & \textbf{Hall} & \textbf{Highway} \\
\hline
\multirow{3}{*}{2000} & LRcCS & 0.9052 & 0.8810 & 0.9024 & 0.8366 & 0.9233 \\
 & Alt-PGD-Min & 0.6649 & 0.6322  & 0.6943 &  0.6766& 0.6849 \\
 & Alt-ScalePGD-Min & 0.9350 & 0.8911 & 0.9131 & 0.8798 & 0.9300 \\
\hline
\multirow{3}{*}{2250} & LRcCS & 0.9060 & 0.8858 & 0.9118 & 0.8417 & 0.9250 \\
 & Alt-PGD-Min & 0.6990 & 0.6570 & 0.7158 & 0.6892& 0.7038 \\
 & Alt-ScalePGD-Min & 0.9562 & 0.9101 & 0.9750 & 0.8921 & 0.9350 \\
\hline
\multirow{3}{*}{2500} & LRcCS & 0.9086 & 0.8880 & 0.9223 & 0.8540 & 0.9269 \\
 & Alt-PGD-Min & 0.7310 & 0.6730 &0.7467  &  0.6924& 0.7155 \\
 & Alt-ScalePGD-Min & 0.9644 & 0.9250 & 0.9908 & 0.9189 & 0.9404 \\
\hline
\end{tabular}
\caption{SSIM values given by the three methods under different per frame sample sizes \(m = 2000, 2250, 2500\).} \label{tab2}
\end{table}

\begin{table}[htbp]
\centering
\begin{tabular}{|c|c|c|c|c|c|c|}
\hline
\textbf{Sample size} & \textbf{Videos} & \textbf{Akiyo} & \textbf{Carphone} & \textbf{Container} & \textbf{Hall} & \textbf{Highway} \\
\hline
\multirow{3}{*}{2000} & LRcCS & 0.9304 & 0.9356 & 0.9452 & 0.9175 & 0.9417 \\
 & Alt-PGD-Min & 0.8146 & 0.8127 & 0.8321 &  0.8215& 0.8427 \\
 & Alt-ScalePGD-Min & 0.9569 & 0.9430 & 0.9579 & 0.9415 & 0.9506 \\
\hline
\multirow{3}{*}{2250} & LRcCS & 0.9311 & 0.9379 & 0.9458& 0.9189 & 0.9424 \\
 & Alt-PGD-Min & 0.8306 & 0.8258 & 0.8533 & 0.8327& 0.8636 \\
 & Alt-ScalePGD-Min & 0.9701 & 0.9526 & 0.9857 & 0.9573 & 0.9583 \\
\hline
\multirow{3}{*}{2500} & LRcCS & 0.9316 & 0.9405 & 0.9513 & 0.9211 & 0.9430 \\
 & Alt-PGD-Min & 0.8468 & 0.8297 &0.8719  &  0.8548& 0.8597 \\
 & Alt-ScalePGD-Min & 0.9751 & 0.9614 & 0.9942 & 0.9673 & 0.9648 \\
\hline
\end{tabular}
\caption{FSIM values given by the three methods under different per frame sample sizes \(m = 2000, 2250, 2500\).} \label{tab3}
\end{table}

\subsection{MRI data compressed sensing }
In this section, we utilize MRI data from the Simulated Brain Database \footnote{http://brainweb.bic.mni.mcgill.ca/brainweb/}. This database includes a collection of realistic MRI data volumes generated by an MRI simulator, which is widely employed by the neuroimaging community. Due to the slow convergence of LRcCS and Alt-PGD-Min, we choose the middle 50 frames and resize each frame into $60 \times 75$ resolution. We compare three methods under three different per-frame sample sizes as $m=2000, 2250, 2500$, whose visual comparison results are shown in Figure \ref{fig1}, Figure \ref{fig2}, and Figure \ref{fig3}. We set $r=10$ for three methods in all settings. For the LRcCS and Alt-PGD-Min, we select the maximum step size as  $1 \times 10^{-5}$ and $2\times 10^{-6}$, respectively. The maximum step sizes are set to guarantee convergence. The maximum iteration number is 2000. The step size is set 0.5 for Alt-ScalePGD-Min with the maximum number of iterations is 50, which can converge and obtain satisfied performance already. We also plot the PSNR values along each recovered frame, which are shown in Figure. \ref{fig4}. These results indicate that Alt-ScalePGD-Min can achieve the best recovery performance with minimum iteration numbers. In addition, with the sample size increasing, only the performance of Alt-ScalePGD-Min can be improved significantly. 

\begin{figure}[htbp]
%%\begin{tikzpicture}
\subfigure[ Original]{
\begin{minipage}[t]{0.25\linewidth}
\centering
\includegraphics[scale=0.5]{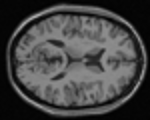}
%\caption{fig1}
\end{minipage}%
}%
\subfigure[ LRcCS]{
\begin{minipage}[t]{0.25\linewidth}
\centering
\includegraphics[scale=0.5]{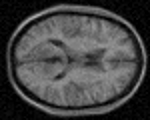}
%\caption{fig1}
\end{minipage}%
}%
\subfigure[Alt-PGD-Min ]{
\begin{minipage}[t]{0.25\linewidth}
\centering
\includegraphics[scale=0.5]{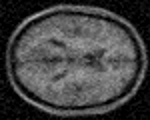}
%\caption{fig1}
\end{minipage}%
}%
\subfigure[ Alt-ScalePGD-Min]{
\begin{minipage}[t]{0.25\linewidth}
\centering
\includegraphics[scale=0.5]{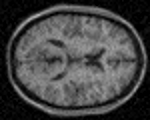}
%\caption{fig1}
\end{minipage}%
}%
\caption{Comparison of $35$-th frame in recovered MRI with sample size $m=2000$ for per frame. LRcCS: PSNR = 27.64, SSIM = 0.7888, FSIM = 0.8707. Alt-PGD-Min: PSNR = 24.58, SSIM = 0.6522, FSIM = 0.8018. Alt-ScalePGD-Min: PSNR = 28.46, SSIM = 0.8096, FSIM = 0.8800.  }
\label{fig1}
\end{figure}

\begin{figure}[htbp]
%%\begin{tikzpicture}
\subfigure[ Original]{
\begin{minipage}[t]{0.25\linewidth}
\centering
\includegraphics[scale=0.5]{ori-35.png}
%\caption{fig1}
\end{minipage}%
}%
\subfigure[ LRcCS ]{
\begin{minipage}[t]{0.25\linewidth}
\centering
\includegraphics[scale=0.5]{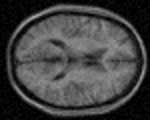}
%\caption{fig1}
\end{minipage}%
}%
\subfigure[Alt-PGD-Min ]{
\begin{minipage}[t]{0.25\linewidth}
\centering
\includegraphics[scale=0.5]{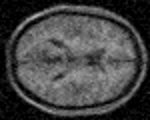}
%\caption{fig1}
\end{minipage}%
}%
\subfigure[ Alt-ScalePGD-Min]{
\begin{minipage}[t]{0.25\linewidth}
\centering
\includegraphics[scale=0.5]{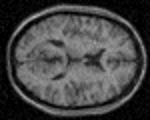}
%\caption{fig1}
\end{minipage}%
}%
\caption{Comparison of $35$-th frame in recovered MRI with sample size $m=2250$ for per frame. LRcCS: PSNR= 27.66, SSIM = 0.7868, FSIM = 0.8708. Alt-PGD-Min: PSNR = 25.18, SSIM = 0.6783, FSIM = 0.8151. Alt-ScalePGD-Min: PSNR = 29.92, SSIM = 0.8583, FSIM = 0.9026. }
\label{fig2}
\end{figure}

\begin{figure}[htbp]
%%\begin{tikzpicture}
\subfigure[ Original]{
\begin{minipage}[t]{0.25\linewidth}
\centering
\includegraphics[scale=0.5]{ori-35.png}
%\caption{fig1}
\end{minipage}%
}%
\subfigure[ LRcCS ]{
\begin{minipage}[t]{0.25\linewidth}
\centering
\includegraphics[scale=0.5]{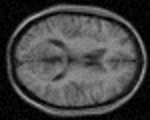}
%\caption{fig1}
\end{minipage}%
}%
\subfigure[Alt-PGD-Min ]{
\begin{minipage}[t]{0.25\linewidth}
\centering
\includegraphics[scale=0.5]{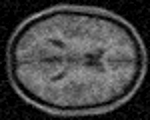}
%\caption{fig1}
\end{minipage}%
}%
\subfigure[ Alt-ScalePGD-Min]{
\begin{minipage}[t]{0.25\linewidth}
\centering
\includegraphics[scale=0.5]{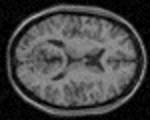}
%\caption{fig1}
\end{minipage}%
}%
\caption{Comparison of $35$-th frame in recovered MRI with sample size $m=2500$ for per frame. LRcCS: PSNR = 28.08, SSIM = 0.8027, FSIM = 0.8766. Alt-PGD-Min: PSNR = 25.75, SSIM = 0.7029, FSIM = 0.8275. Alt-ScalePGD-Min: PSNR = 31.79, SSIM = 0.8977, FSIM = 0.9276.}
\label{fig3}
\end{figure}

\begin{figure}[htbp]
%%\begin{tikzpicture}
\subfigure[ $m = 2000$ ]{
\begin{minipage}[t]{0.3\linewidth}
\centering
\includegraphics[scale=0.35]{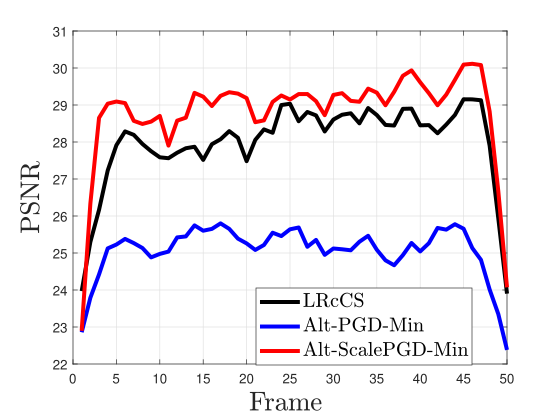}
%\caption{fig1}
\end{minipage}%
}%
\subfigure[ $m = 2250$ ]{
\begin{minipage}[t]{0.3\linewidth}
\centering
\includegraphics[scale=0.35]{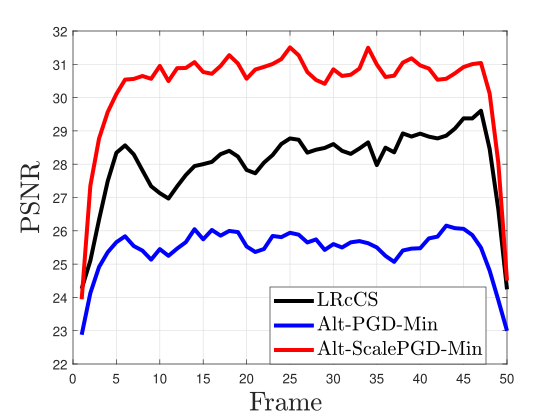}
%\caption{fig1}
\end{minipage}%
}%
\subfigure[ $m = 2500$ ]{
\begin{minipage}[t]{0.3\linewidth}
\centering
\includegraphics[scale=0.35]{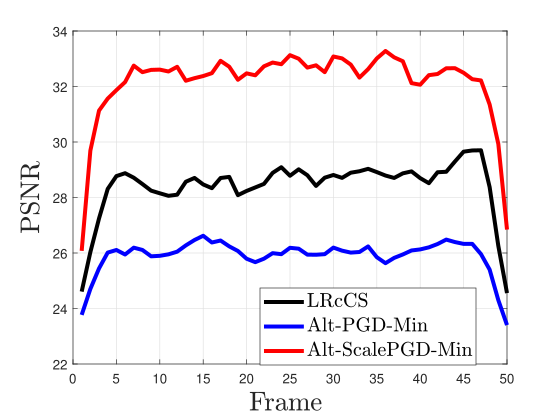}
%\caption{fig1}
\end{minipage}%
}%
\caption{ PSNR evaluation for each frame under different sample size. }
\label{fig4}
\end{figure}

% \bibliography{my-aaai}

\end{document}